\theoremstyle{definition}
\newtheorem{prop}{Proposition}
\begin{document}

\title{\bf The importance of being constrained:\\ dealing with infeasible solutions in\\ Differential Evolution and beyond}

\author{\name{\bf Anna V. Kononova} \hfill \addr{a.kononova@liacs.leidenuniv.nl}\\ 
        \addr{LIACS, Leiden University, The Netherlands}
\AND
       \name{\bf Diederick Vermetten} \hfill \addr{d.l.vermetten@liacs.leidenuniv.nl}\\
        \addr{LIACS, Leiden University, The Netherlands}
\AND
       \name{\bf Fabio Caraffini} \hfill \addr{fabio.caraffini@dmu.ac.uk}\\
        \addr{Institute of Artificial Intelligence, De Montfort University, Leicester, UK} 
\AND
       \name{\bf Madalina-A. Mitran} \hfill \addr{madalina.mitran96@e-uvt.ro}\\
        \addr{Department of Computer Science, West University of Timi\c soara, Romania }
\AND
       \name{\bf Daniela Zaharie} \hfill \addr{daniela.zaharie@e-uvt.ro}\\
        \addr{Department of Computer Science, West University of Timi\c soara, Romania}
}
\maketitle

\begin{abstract}
We argue that results produced by a heuristic optimisation algorithm \textit{cannot} be considered reproducible unless the algorithm fully specifies what should be done with solutions generated outside the domain, \textit{even} in the case of simple box constraints. Currently, in the field of heuristic optimisation, such specification is rarely mentioned or investigated due to the assumed triviality or insignificance of this question. Here, we demonstrate that, at least in algorithms based on Differential Evolution, this choice induces notably different behaviours -- in terms of performance, disruptiveness and population diversity. This is shown theoretically (where possible) for standard Differential Evolution in the absence of selection pressure and experimentally for the standard and state-of-the-art Differential Evolution variants on special test function $f_0$ and BBOB benchmarking suite, respectively. Moreover, we demonstrate that the importance of this choice quickly grows with problem's dimensionality. Different Evolution is \textit{not} at all special in this regard -- there is no reason to presume that other heuristic optimisers are not equally affected by the aforementioned algorithmic choice. Thus, we urge the field of heuristic optimisation to formalise and adopt the idea of a \textit{new algorithmic component} in heuristic optimisers, which we call here a strategy of dealing with infeasible solutions. This component needs to be consistently (a) specified in algorithmic descriptions to guarantee reproducibility of results, (b) stu\-died to better understand its impact on algorithm's performance in a wider sense and (c) included in the (automatic) algorithmic design. All of these should be done \textit{even} for problems with box constraints. 
\end{abstract} 

\begin{keywords}
algorithmic behaviour, 
reproducibility,
box constraints,
benchmarking,
real-valued optimisation,
cosine similarity,
differential evolution,
diversity.
\end{keywords}

\section{Introduction}\label{sect:Introduction}
The overwhelming majority of practical optimisation problems are constrained at least in some sense: from simply limiting the ranges of input variables to complex nonlinear or black-box functional constraints. There is a clear benefit of tackling such problems according to their constrained nature by considering the search space as constrained rather than at first approximating the problem as an unconstrained variant.

While current level of numerical heuristic optimisation allows tackling these problems directly as constrained, it is not done consistently, especially for the simplest types of constraints, on the input variables. For example, very few papers in the field mention what should be done with infeasible solutions (that violate some constraint) that might, or even are very likely to, be generated during an optimisation run. Many options are possible for handling such solutions, e.g. penalty functions, repair methods and feasibility preserving generating operators. However, in practice, the choice made for a particular algorithm is often omitted from the description due to either its assumed insignificance or triviality in the eyes of algorithm's designer. And yet this choice has been recently shown to strongly influences algorithm's performance \citep{Boks2021,deNobel2021,Kononova2020_outside}. 

It is true that both engineering and pure mathematical approaches dictate that infeasible solutions should not be evaluated during optimisation: the former - due to physical limitations of the underlying processes/devices and the latter - since the objective function is not required to be formally defined for such infeasible solutions. However, heuristic optimisation approaches, not being exact, sometimes use some kind of `information' on the values of objective function of infeasible solutions -- see exterior penalty approaches in \citep{CoelloCoello2002_survey}. On one hand, this narrows down the applicability of such methods; on the other hand, they get an advantage through  additional domain information. Is it fair to compare such algorithms with those not using this information? In our view, such distinction should at least be highlighted. Additionally, when benchmarking heuristic optimisation algorithms, it is often unclear whether the boundaries should be dealt with explicitly by the algorithm, or if the problem itself should handle this using a penalty function~\citep{hansen2021coco}. 

Moreover, the aforementioned ambiguity in algorithms' specifications regarding dealing with infeasible solutions naturally leads to reproducibility issues. It is this aspect specifically that is being discussed in this paper.

If algorithm's source code is not made available, such an ambiguity has to be resolved via wasteful trying (`and-erroring') of infinite possibilities -- further burdened by the random nature of the algorithms which does not allow guesses regarding missing algorithm specification exactly~\footnote{Unless, in addition to the available source code, the \textit{exact} specifications of the operating system where the reported experiments have been executed, versions of programming languages and random seed values of pseudorandom number generator are also known \citep{LEcuyer2007TestU01,vdHonert2021}.}.

If the source code of an algorithm is available, but the accompanying specification is still ambiguous regarding the strategy for dealing with infeasible solutions, we can assume that either multiple options have been tried and the selected one performs best, or this component was considered as non-essential to the algorithms behaviour and thus not investigated. In either case, information about the algorithm in relation to the strategy for dealing with infeasible solutions is never reported~\footnote{At the same time, it is not guaranteed that missing specification can be easily extracted from the available code.}, and when only one method was considered this could mean that potential performance gains have not been realised. This same argument can hold in the case of automated algorithm configurations, where modules dealing with the boun\-da\-ry constraints are rarely part of the search-space. Even when they are considered in the configuration, they are often grouped together with another operator such as mutation, which makes it more challenging to accurately see its impact~\citep{Stuetzle2019,Duarte2020}.

While the wider community of evolutionary algorithms has recently become more aware of the challenges and benefits of reproducibility \citep{Ibanez2021}, the standards for availability of code, data and other artefacts still differ widely between the venues. 
For the field of heuristic optimisation algorithms to move forward, a stronger reliance on reproducible experimental results is needed, especially in the absence of overarching theoretical frameworks. Thus, in order to ensure that any reported findings can be reproduced, we should aim to be aware of even the seemingly small design decisions within our algorithms that are often overlooked, since even minor changes in algorithm behaviour can lead to irreproducible results if not properly documented or otherwise made available.

Unfortunately, many of the papers which propose new (meta)heuristics or algorithmic improvements on existing ones do not contain explicit information on how the out of bounds components are treated. In the particular case of Differential Evolution (DE) the amount of such papers is significant -- see Section~\ref{sect:SDIS_literature} for a review.
We believe such problem manifests itself for \textit{all heuristic optimisation methods} discussed in both specialised theoretical and applied literature. 
Thus, with this paper, we conclude that there is a \textit{major reproducibility issue} with the state-of-the-art heuristic optimisation methods and call for proper formalisation of a new operator/algorithmic component that deals with infeasible solutions inside heuristic optimisers. Such component needs to be \textit{consistently}:
\begin{itemize}
    \item[(a)] specified in algorithmic descriptions to guarantee full reproducibility of results, 
    \item[(b)] studied to understand its impact on algorithms' performance in a wider sense,
    \item[(c)] included in the (automatic) design of algorithms. 
\end{itemize}
All of the above should be done \textit{even} for problems with box constraints only.

To emphasise the importance of such an algorithmic component, we propose an integrated approach, at both experimental and theoretical levels, to analyse the impact of the strategies used to deal with solutions violating box-constraints on the behaviour of the optimisation algorithm. One of the main contributions of this paper is the usa\-ge of the cosine similarity measure to quantify the influence of the aforementioned strategies on the search direction induced by the optimisation algorithm. 

The remainder of this paper is organised as follows: In Section~\ref{sect:boxcons}, we discuss the general problem of using heuristic optimization methods to solve box-constrained problems, with a focus on Differential Evolution and the Strategy for Dealing with Infeasible Solutions (SDIS) used in this context. In Section~\ref{sect:SearchDirection}, we consider the notion of disruption of the search behaviour and propose the cosine similarity to measure this phenomenon. Then, in Section~\ref{sect:theoretic}, we present a theoretical analysis on the amount of infeasible solutions within DE, and analyse the impact of several popular SDIS on search directions and diversity of the population. To further analyse the impact of SDIS on these aspects, we make use of the function $f_0$, which assigns uniformly distributed random values to the elements and hence `removes' the selective pressure \textit{without} any modification to the algorithm under investigation, and uses this to study the relation between parameters of DE, SDIS, cosine similarity between search directions and population diversity. We also consider the overall amount of infeasible solutions generated, and relate the analysis on $f_0$ to the concept of structural bias. Finally, we perform a benchmark study on several versions of DE and investigate the empirical impact of SDIS on their performance, while comparing the algorithmic behaviour observed to the results obtained theoretically and on $f_0$. In Section~\ref{sec:conclusion}, we conclude that SDIS does indeed have an impact on performance, and should be taken into consideration more closely to improve the state of reproducibility in our field. We look ahead at potential solutions and future research directions in Section~\ref{sec:future}.

\section{Heuristic optimisation with box constraints}\label{sect:boxcons}

Optimisation problems faced by practitioners from different application fields are necessarily defined within a domain \textbf{D}, commonly referred to as the search space in the heuristic optimisation community. Indeed, in the real-world context, the presence of feasibility constraints is almost inevitable, and even when the nature of the problem seems to be unconstrained one may argue that when using heuristic approaches the need for sampling solutions and generating random numbers imposes boundaries for drawing such values. In this light, even when equality and/or inequality constraints are not present, it is generally assumed that each design variable of the problem at hand must be bounded between some lower and upper bound, thus defining a search space \textbf{D} shaped as a hyperparallelepiped (or as a hypercube when each design  variable is constrained within the same range). This is commonly referred to as `box-constrained' problem in computer science jargon. In this study, we focus on \textit{real-valued single objective box-constrained optimisation problems}, as defined and discussed in the next Section.

\subsection{Problem formulation, related constrained optimisation problems}\label{sect:boxformulation}
A \textit{real-valued box-constrained problem} is defined as finding a minimum of function  
\begin{equation}\label{eq:def_f}
    f: \mathbf{D} = \bigtimes_{i=1}^n [a_i,b_i] \rightarrow \mathbb{R}
\end{equation}
where $-\infty < a_i < b_i < \infty$ and $\mathbf{D}\subset \mathbb{R}^n$~\footnote{Mathematically, if $f$ can be extended to a larger domain, this problem can be rewritten as an \textit{unconstrained optimisation problem with in\-equa\-li\-ty
constraint}: \begin{equation} \arg\min_{x\in\mathbb{R}^n}{f(x)} \textnormal{ subject to } g(x)\leq0, \nonumber \end{equation}
where $g(x)=\mathbf{A}\bf{x}-b$ with $\mathbf{A}=[-\bf{I}_n,\bf{I}_n]^{\bf{T}}$ and $b=[-a_1,\dots,-a_n,b_1,\dots,b_n]^{\bf{T}}$, $g:\mathbb{R}^n\to\mathbb{R}^{2n}$, where $\bf{I}_n$ stands a unity matrix of size $n$. In other words, trivially, box constraints represent a special case of a set of linear constraints. However, \textit{in practice}, application of unconstrained optimisation methods might lead to poor results, e.g. depending on the way the function at hand is extended. \label{ftnt}}. This represents the lowest complexity of inequality constraint condition on the variables that a problem can have. For this reason, some confusion arises in the literature with several authors often referring to this class of problems as `unconstrained' to stress the fact that design variables are not subject to more complex linear or nonlinear constraints. However, we argue that this is incorrect as ignoring box-constrains is an oversimplification leading to confusion and reproducibility issues. It is indeed common to find articles in the literature where information on the employed Strategy for Dealing with Infeasible Solutions (SDIS, see definition in Section~\ref{sect:SDIS}) is omitted (see Section~\ref{sect:SDIS_literature}), even though recent studies indicate that different SDIS operators differently influence (at least) the structural bias \citep{Kononova2015} of a heuristic approach \citep{vStein2021_emergence,techrxiv_bias,bib:DEAnalysis2022}, and thus playing a role on the algorithmic behaviour of an optimisation algorithm. Hence, box-constraints \textit{should not} be ignored and solutions violating them are to be dealt with an appropriate SDIS.

In this light, a well-designed algorithm for real-valued unconstrained optimisation, i.e. where each design variable can be anywhere in the real axis ($x_i\in\mathbb{R}$), might not be as suitable for box-constrained optimisation. It should indeed be observed how some algorithms, see e.g. \citep{Kononova2020_outside}, are prone to produce high numbers of solutions outside the search domain under certain parameters configurations. This is quite likely to occur when an algorithm for unconstrained optimisation is used over a box-constrained domain. In this scenario, the algorithm has to be equipped with a SDIS which would have to be activated for the vast majority of objective function evaluations, thus leading the search and taking over the actual working logic of the algorithmic itself. To prevent this phenomenon, being \textit{constrained} would be a key feature of an optimisation algorithm for box-constrained problems.

\subsection{Classic and state-of-the-art versions of Differential Evolution}\label{sect:DEs}
Despite the numerous advances in the field of DE, its solid general algorithmic framework has remained quite unchanged since the first studies \citep{storn1996usage,storn1997differential}, with many of the most important variants being proposed by mainly acting on the mutation operator, where individuals are linearly combined, see  \citep{bib:Lampinen2000,price2006differential}, and on adding self-adaptation rules for its $3$ parameters \citep{DAS20161}. These are the population size $N$ and the two control parameters $F\in (0,2]$, acting as a \textit{scale factor} for the mutation operator, and the \textit{crossover rate} $C_r\in[0,1]$. The working mechanism of DE is quite known and established, and for general information one can see \citep{Caraffini2019,Kononova2020_outside,bib:DEAnalysis2022}, where description, pseudocode and analyses of its algorithmic behaviour are provided. However, for the sake of clarity, we briefly report relevant DE terminology which is used in the remainder of this paper.

In DE, the $N$ individuals in the population are processed one at a time. When the commonly called `current' individual to be perturbed is selected to undergo recombination, it gets referred to as the \texttt{target}. Through the crossover operator, which requires the availability of an `intermediate' solution referred to as the \texttt{mutant}, the \texttt{target} individual produces an offspring solution referred to as the \texttt{trial}. This new solution can have infeasible components to be dealt with an appropriate SDIS before its fitness value can be computed, as further commented in Section~\ref{sect:SDIS}. To implement this logic, a mutation strategy is required to produce the \texttt{mutant} solution. As previously mentioned, this operator works by linearly combining individuals selected from the population where a number of difference vectors are formed (from which the name of this optimisation paradigm) and added to a specific individual. The latter, as well as the number of difference vectors, depends on the adopted mutation strategy. With this in mind, classic DE variants are identified with the well-known notation \texttt{DE/a/b/c} where \texttt{a} indicates the mutation strategy, \texttt{b} the number of difference vectors employed in the mutation strategy, and \texttt{c} specifies the crossover strategy - two options are mainly used (i.e. the binomial \texttt{bin} and exponential \texttt{exp} crossover strategies) for \texttt{c} but a few more strategies also exist in the literature \citep{DAS20161,bib:DEAnalysis2022}. 

Some state-of-the-art DE algorithms, which we also study in this piece of research, slightly deviate from this structure. 
The Success-History based Adaptive DE (\texttt{SHADE}) \citep{bib:tanabe2013} can be seen as a variant of the popular \texttt{JADE} algorithm \citep{bib:zhang2009} where a memory system is introduced to store the weighted Lehmer average of successful $F$ values, and the weighted arithmetic average of successful $C_r$ values, from previous generations. Such values are randomly picked to adapt the control parameters, thus not relying only on the values from the previous generation (as in JADE) but also on the previous ones. Furthermore, the $p$ parameter for the `current-to-\textit{p}best' mutation strategy is randomly generated for each individual (this introduces an extra parameter $p_{min}$ to tune). These small changes, led to reported significant performance improvements with respect to previous established self-adaptive DE algorithms. When compared to the algorithmic structure of a classic DE, one can immediately observe clear difference for \texttt{SHADE}:
    \begin{itemize}
        \item control parameters are self-adapted;
        \item by design, a mechanism is in place for using an optional archive of less fit individuals to be entered in the population for preserving diversity;
        \item classic DE mutations are not employed, in favour of the `current-to-\textit{p}best/1' \citep{bib:zhang2009}, where the \textit{p}best vector is selected at random amongst the $p\%$ best individuals in the population, which is used in combination with the binomial crossover in \citep{bib:tanabe2013}.
    \end{itemize}
    The burden of tuning the population size is still present in \texttt{SHADE}, but is mitigated in its successor \texttt{L-SHADE} \citep{tanabe2014improving}, where an initial (usually large, i.e. $18\cdot n$) population size gets decreased linearly as a function of the number of fitness evaluations. Reducing the population size has shown to be beneficial in DE, see e.g. \citep{bib:zamuda2012}, and appears to make \texttt{L-SHADE} performs better in several benchmark problems.

\subsection{Infeasibility}\label{sect:infeasibility}
Referring to Eq.~\ref{eq:def_f}, a solution $\bf{x}\in\textbf{D}$ is said to be  \emph{feasible}, while it is \textit{infeasible} if $\bf{x}\not\in\textbf{D}$. In a box-constrained scenario, the last case occurs if at least one of its $\textnormal{i}^{th}$ design variables is either lower than $a_i$ or greater than $b_i$. Such \textit{infeasible} solutions cannot be evaluated in the vast majority of real-world applications, i.e. they represent physically impossible scenarios or require mathematically undefined calculations, and are purposely excluded by design. Also from the mathematical point of view, these solutions should not be considered as the function modelling the problem is undefined outside its domain - i.e. the problem does not exist outside \textbf{D}. Despite some confusion can arise while using common benchmark suite for optimisation such as e.g. \citep{hansen2021coco,wu2017problem}, which always return a value for $\bf{x}\not\in\textbf{D}$, these solutions should not be used to guide the search for solving test-bed problems.

The amount of infeasible solutions generated during the search depends both on the particularities of the problem (e.g. fitness landscape, problem size) and on the characteristics of the search heuristic. More specifically, the number of infeasible solutions increases with the problem size and with the probability $p_v$ of violating the bound constraints by a design variable, as the probability of generating an $n$-dimensional infeasible solution is $1-(1-p_v)^n$. The violation probability, $p_v$, depends on the distribution of the population elements in the bounding box and on the exact mutation or perturbation operator.

The question whether a well-performing algorithm should generate many infeasible points to solve the problem remains open: \citep{Boks2021} has demonstrated that highly competitive adaptive variants of the Differential Evolution algorithm (See Section~\ref{sect:DEs}) can indeed generate up to 93\% infeasible points throughout runs on more complicated BBOB functions. Similar results have been obtained in this paper (see Figure~\ref{fig:final_pois_30d_lshade}). With these results in mind, can we still claim that such optimisation methods \textit{efficiently utilise information contained within the population} if that many generated solutions need to be somehow brought back into feasibility? What \textit{actually} steers the search: optimisation algorithm or its feasibility-enforcing component? 

\subsection{Strategy of dealing with infeasible solutions}\label{sect:SDIS}
Following the discussions of Section~\ref{sect:boxformulation}, SDIS, also referred to as boundary constraint handling methods, are key operators for most algorithms and should be chosen accurately. The same way variation and recombination operators are carefully selected and combined during the algorithmic design phase, SDIS should too be considered in such process. The most logical activation of SDIS inside the algorithmic structure is before performing the objective function call, to make sure that the returned value is from a feasible solution. 

This is the approach followed in this study within Differential Evolution algorithm, where we follow the scheme depicted in the pseudocode from \citep{Kononova2020_outside}. Note that in DE, as in the vast majority of heuristics, this is recommended. As suggested in Section~\ref{sect:DEs}, in the DE framework there is only one operator that can produce an infeasible solution. After being generated, some of its components are transferred by a crossover operator to an existing individual, whose fitness value must be evaluated. Hence, placing a SDIS before crossover would too make sure that novel candidate solutions are feasible, but would also be unnecessary as the crossover might ignore completely most infeasible components from the \texttt{mutant}, which is never evaluated as being only an internal intermediate product. However, there might be some cases, as e.g. in some hybrid heuristic structures, where intermediary solutions are involved in driving the search process before a new individual is evaluated - which should not be allowed. So, in the most general case, one should always pay attention in activating SDIS every time a potentially infeasible solution is used to guide the search or has to be evaluated.

\begin{figure}[ht!]
    \centering
    \subfigure{\resizebox{0.45\textwidth}{!}{\tikzstyle{st2}=[rectangle, inner sep=0pt, minimum height=0pt, minimum width=4pt, draw]

\definecolor{cotn}{rgb}{0.4, 0.7607843137254902, 0.6470588235294118}
\definecolor{uni}{rgb}{0.6509803921568628, 0.8470588235294118, 0.32941176470588235}
\pgfmathdeclarefunction{gauss}{2}{\pgfmathparse{1/(#2*sqrt(2*pi))*exp(-((x-#1)^2)/(2*#2^2))}}
\pgfmathdeclarefunction{gauss2}{2}{\pgfmathparse{2/(#2*sqrt(2*pi))*exp(-((x-#1)^2)/(2*#2^2))}}
\pgfmathdeclarefunction{uniform}{3}{\pgfmathparse{(#1>=#2)*(#1<#3)*1/(#3-#2)}}
\newcommand*\GnuplotDefs{
    set samples 30;
    cdfn(x,mu,sd) = 0.5 * ( 1 + erf( (x-mu)/sd/sqrt(2)) );
    pdfn(x,mu,sd) = 1/(sd*sqrt(2*pi)) * exp( -(x-mu)^2 / (2*sd^2) );
    tpdfn(x,mu,sd,a,b) = pdfn(x,mu,sd) / ( cdfn(b,mu,sd) - cdfn(a,mu,sd));
}

\begin{tikzpicture}[scale=1,
uninode/.style={shape=rectangle, inner sep=0pt, minimum height=0pt, minimum width=4pt, thick, draw = uni},
cotnode/.style={shape=rectangle, inner sep=0pt, minimum height=0pt, minimum width=4pt, thick, draw = cotn}]
\begin{axis}[
    axis lines = center,
    axis line style = help lines,
    xmin = {-0.05}, xmax = {1.2}, ymin={-0.05}, ymax={2.99},
    xtick={0,0.5,1},
    xlabel={$x$}, ylabel={PDF},
    x label style={at={(axis description cs:0.98,-0.01)},anchor=north},
    y label style={at={(axis description cs:-0.03,0.95)},anchor=south},
    legend style={at={(0.1,0.89)},anchor=west,draw=none},
    legend entries={COTN,uni}
]

\addplot[domain=0:1,very thick,samples=100,cotn]{gauss2(1,0.33)};
\addplot[domain=0:1,very thick,samples=100,uni]{uniform(x,0,1)};

\end{axis}
\end{tikzpicture}}}\label{fig:a}
    \hspace{0.75cm}
    \subfigure{\resizebox{0.45\textwidth}{!}{\begin{tikzpicture}[x=0.8cm,y=0.8cm, point/.style = {circle, draw=#1, inner sep=0.1cm, fill=#1, color=#1, node contents={}}, every label/.append style = {font=\large}]]
           
\definecolor{sat}{rgb}{0.5529411764705883, 0.6274509803921569, 0.796078431372549}
\definecolor{mir}{rgb}{0.9882352941176471, 0.5529411764705883, 0.3843137254901961}
\definecolor{tor}{rgb}{0.9058823529411765, 0.5411764705882353, 0.7647058823529411}
\definecolor{hvb}{rgb}{0.8980392156862745, 0.7686274509803922, 0.5803921568627451}

\def\xmin{-1}
\def\xmax{11}
\def\ymin{-6}
\def\ymax{11}

\draw[style=help lines, ystep=1, xstep=1] (\xmin,\ymin) grid
  (\xmax,\ymax);

\draw[->] (0,0) -- (10.5,0) node[below] {$x_1$};
\draw[->] (0,0) -- (0,10) node[left] {$x_2$};

\node at (0, 0) [below] {$0$}; \node at (8, 0) [below] {$1$}; 
\node at (0, 0) [left] {$0$}; \node at (0, 8) [left] {$1$}; 
\draw[fill=green!70, opacity = 0.1, draw=black] (0,0) rectangle (8,8);
    
\node (x) at (6,1)      [point=black, label=right:$\textbf{x}$];

\node (z2) at (10,9)    [point=red, label=right:$\textbf{z}$];
\node (S2) at (8,8)     [point=sat, label=left:$\textbf{s}$];
\node (M2) at (6,7)     [point=mir, label=left:$\textbf{m}$];
\node (H2) at (7,4.5)   [point=hvb, label=left:$\textbf{h}$];
\node (T2) at (2,1)     [point=tor, label=left:$\textbf{t}$];

\draw[-{Latex[length=2.5mm]}, thick, red] (x) -- (z2);
\draw[-{Latex[length=2.5mm]}, thick, sat] (x) -- (S2);
\draw[-{Latex[length=2.5mm]}, thick, mir] (x) -- (M2);
\draw[-{Latex[length=2.5mm]}, thick, hvb] (x) -- (H2);
\draw[-{Latex[length=2.5mm]}, thick, tor] (x) -- (T2);
\draw[->,densely dotted,very thick,sat] (z2) to[bend right] (S2);
\draw[->,densely dotted,very thick,mir] (z2) to[out=160,in=70] (M2);
\draw[->,densely dotted,very thick,hvb] (z2) to[out=160,in=150] (H2);
\draw[->,densely dotted,very thick,tor] (z2) to[out=100,in=140] (T2);

\node (z1) at (3,-5)    [point=red, label=left:$\textbf{z'}$];
\node (S1) at (3,0)     [point=sat, label=below:$\textbf{s'}$];
\node (M1) at (3,5)     [point=mir, label=left:$\textbf{m'}$];
\node (H1) at (3,0.5)   [point=hvb, label=left:$\textbf{h'}$];
\node (T1) at (3,3)     [point=tor, label=left:$\textbf{t'}$];

\draw[-{Latex[length=2.5mm]}, thick, dashed, red] (x) -- (z1);
\draw[-{Latex[length=2.5mm]}, thick, dashed, sat] (x) -- (S1);
\draw[-{Latex[length=2.5mm]}, thick, dashed, mir] (x) -- (M1);
\draw[-{Latex[length=2.5mm]}, thick, dashed, hvb] (x) -- (H1);
\draw[-{Latex[length=2.5mm]}, thick, dashed, tor] (x) -- (T1);
\draw[->,densely dotted,very thick,sat] (z1) to[out=80,in=290] (S1);
\draw[->,densely dotted,very thick,mir] (z1) to[out=150,in=240] (M1);
\draw[->,densely dotted,very thick,hvb] (z1) to[out=150,in=220] (H1);
\draw[->,densely dotted,very thick,tor] (z1) to[out=92,in=255] (T1);

\fill[fill=white, opacity = 1] (6,-1) rectangle (10,-5);
\node at (6.4,-1.75) [point=hvb, label=right:HVB];
\node at (6.4,-2.25) [point=mir, label=right:mir]; 
\node at (6.4,-2.75) [point=sat, label=right:sat]; 
\node at (6.4,-3.25) [point=tor, label=right:tor]; 
\node at (6.4,-3.75) [point=red, label=right:infeasible trial]; 
\node at (6.4,-4.25) [point=black, label=right:target];

\end{tikzpicture}}}\label{fig:b}
    \caption{(a) Sampling distributions for the application of stochastic per-component SDIS \texttt{COTN} and \texttt{uni} in case of boundary violation on the right for the 1-dimensional unit domain; for boundary violation on the left, \texttt{COTN} curve will be symmetric with respect to $x=0.5$, while \texttt{uni} curve will be the same. (b) Effect of the application of de\-ter\-mi\-nis\-tic per-component SDIS \texttt{HVB}, \texttt{mir}, \texttt{sat}, \texttt{tor} (dotted lines) in case of one (dashed lines) and two (solid lines) infeasible components of a trial vector generated from a target vector for 2-dimensional unit hypercube domain (green area). }\label{fig:strategies}
\end{figure}

For this study, we select a varied range of existing strategies of dealing with infeasible solutions:
 \begin{itemize}
     \item `complete one-sided truncated normal', first introduced in \citep{Caraffini2019}, which is denoted as \texttt{COTN} in the remainder of this investigation;
     \item `halfway-to-violated-bounds', denoted as \texttt{HVB} here, which we define as the operator replacing infeasible components with the midpoint between the previous feasible components (before perturbation) and the violated problem's bound;
     \item  `mirror', as described in \citep{Kononova2020CEC,Kononova2020PPSN},
     which we denote as \texttt{mir} in this study;
     \item `saturation', see \citep{Caraffini2019} for pseudocode, 
     which is denoted as \texttt{sat} here;
     \item `toroidal', see \citep{Caraffini2019} for pseudocode, 
     which is denoted as \texttt{tor} here; 
     \item `uniform', as defined in \citep{bib:DEAnalysis2022}, which we referred to as \texttt{uni} in this article.
 \end{itemize}

Graphically, these employed SDIS are explained in Figure~\ref{fig:strategies}, which bring to attention the stochastic nature of \texttt{COTN} and \texttt{uni} while all the remaining strategies deterministically return the same feasible value when the same infeasible value is presented as input. Similarly, one can also observe that the way \texttt{COTN} operates resembles a stochastic counterpart of the \texttt{mir} strategy. These selected SDIS operators cover multiple and commonly used working mechanisms for dealing with infeasible solutions, which might appear in the literature under different names (see e.g. those reported in Table~\ref{tab:PreviousStudied} and Section~\ref{sect:SDIS_literature}).

It is also worth clarifying that, in the context of DE, \texttt{HVB} acts on the \texttt{trial} individual by using uses the \texttt{target} individual as its feasible counterpart. Hence, before calling the objective function, any infeasible design variable of the \texttt{trial} solution would get replaced with a feasible value located halfway from the position of the corresponding component in the \texttt{target} solution and the violated bound (upper or lower). In this light, this SDIS can be seen as a `component-wise' counterpart of the `projection to midpoint' repair strategy for DE used in \citep{bib:BIEDRZYCKI2019}, where the \texttt{mutant} vectors are manipulated into feasible \texttt{trial} solutions as further discussed in Section~\ref{sect:SDIS_literature}.

\subsection{State-of-the-art on strategies of dealing with infeasible solutions}\label{sect:SDIS_literature}
{\small 
\begin{table}[!tb]
\caption{Summary of ways in which infeasible solutions aspect is addressed in recent works proposing new DE variants\label{tab:source_code}}
\begin{tabular}{p{21mm}p{42mm}p{60mm}}
\toprule
\textbf{Source code not available} & {\textbf{1.} SDIS is mentioned in the paper} & \citep{2Deng2020}, \citep{3CHENG2021}, \citep{3wagdy2021}: \texttt{re\-ini\-tia\-li\-za\-tion};
  \citep{3liu2019}, \citep{3Stanovov2020}: \texttt{HVB};
 , \citep{3zhan2020}, \citep{3zhao2020}: \texttt{sa\-tu\-ra\-tion};
  \citep{2DENG2022}: \texttt{midpoint-base}.\\ 
\cmidrule{2-3}
  & \textbf{2.} The proposed algorithm is derived from SHADE, JADE and it might be assumed that SDIS is inherited & \citep{1Awad2018}, \citep{1CHENG2020}, \citep{2Meng2020}, \citep{1Wenchao2021}, \citep{2Zhong2021}, \citep{2KUMAR2022} \citep{1Zuo2022}: \texttt{HVB}.\\ 
\cmidrule{2-3}
  & {\textbf{3.} The proposed algorithm is a new or an enhanced DE variant and SDIS is not mentioned} & \citep{2TIAN2019}, \citep{2Choi2020}, \citep{2Mousavirad2020}, \citep{2Sun2020}, \citep{2WANG2020}, \citep{2Zhou2020}, \citep {2Mousavirad2021}, \citep{2SONG2021}.\\
\midrule
  {\textbf{Source code available}} & \textbf{1.} SDIS is mentioned in the paper & \citep{3wagdy2018}, \citep{3wagdy2019}:
  \texttt{re\-ini\-tia\-li\-za\-tion}, \citep{22brest2020}: \texttt{mirror}.\\
\cmidrule{2-3}
  & {\textbf{2.} SDIS is not mentioned in the  paper, but used in implementation} & \citep{Tomczak2020}: \texttt{sa\-tu\-ra\-tion}.\\
\bottomrule
\end{tabular}
\end{table}
}
Carried out as a part of the current paper, a review on recent publications which propose new or improved DE variants revealed that only a small proportion of papers consider the strategy of dealing with infeasible solutions as a mechanism influencing the search process and describe explicitly the used SDIS. As is illustrated in Table~\ref{tab:source_code}, several categories of papers have been identified in case of DE.

On one hand, most of the papers do not provide access to the source code containing the implementation of the proposed algorithm~\footnote{This, on it's own, implies reproducibility issues.}. In this case, the only source of information is the algorithm description provided in the paper. In few cases when the SDIS is explicitly specified in the paper, no strong motivation on its choice is typically provided (simplicity or popularity, being typically mentioned) and its influence on the algorithm behaviour is not discussed.
Another category of papers are those presenting variants of a state-of-the-art method (e.g. JADE or SHADE) and the reader might infer, in the absence of an explicit statement on SDIS, that the strategy used in the original algorithm (e.g. the so-called \texttt{midpoint-to-target}, or \texttt{HVB} in the terminology of this paper) is used in the proposed variant. However, this is just the guess of the reader and the reproducibility of the results is at least questionable. The third category of papers, which seems to be the most numerous one, includes descriptions of algorithms without any specification on how the solutions violating the box constraints have been treated. 

On the other hand, there are papers for which the source code is made available, thus even if the SDIS is not described in the paper it can be identified in the code. However the reasoning behind choosing one strategy over the other ones is still missing.

It should be however mentioned that there are several works devoted to the com-
\begin{landscape}
\centering
\begin{adjustbox}{width=1.6\textwidth}
\begin{threeparttable}
{\small
\caption{Previous studies on the influence of SDIS on DE, with dictionary for SDIS alternative terminology\label{tab:PreviousStudied}}
\begin{tabular}{p{0.085\linewidth}p{0.13\linewidth}p{0.13\linewidth}p{0.145\linewidth}p{0.15\linewidth}p{0.145\linewidth}p{0.11\linewidth}p{0.13\linewidth}}
\toprule
\textbf{Paper} & \cite{Arabas2010} & \cite{Padhye2015} & \cite{Kreischer2017} & \cite{bib:BIEDRZYCKI2019}& \citep{Martinez2020} & \cite{Boks2021}& \multicolumn{1}{c}{This paper} \\ 
\midrule
\textbf{Benchmark} & CEC 2005 & 4 functions\footnotemark & CEC 2017 & CEC 2017 & real-world & BBOB & BBOB \\ 
\midrule
\textbf{Standard DE variants, $(F,C_r)$} & DE/rand/1/bin,  $(0.8,0.9)$ & DE/best/1/bin, $(0.7,0.5), (0.8,0.9)$\footnotemark & DE/rand/1/bin, $(0.9,1)$, DE/local-to-best/1/bin & DE/target-to-best/1/bin, $(0.8,0.9)$ & DE/rand/1/bin, (${\cal U}[0.3,0.9]$, ${\cal U}[0.8,1]$) &  15 mutation $\times$ 2 crossover ope\-ra\-tors, SHADE adaptation & DE/rand/1/ with 2 crossovers, 25-50 uniformly spaced $(F,C_r)$ values\\ 
\midrule
\textbf{Advanced DE variants} & -- & -- & -- & SADE, JADE, jSO, DES, BBDE & -- & -- & SHADE, L-SHADE\\ 
\midrule
\textbf{Coordinate-} & reflection & -- & reflection & reflection  & reflection & reflection & mirror \\ 
\cmidrule{2-8}
\textbf{wise SDIS} & projection & set on boundary & projection & projection & projection & projection & saturation \\ 
\cmidrule{2-8}
& wrapping & periodic & -- & wrapping & -- & wrapping & toroidal \\ 
\cmidrule{2-8}
& reini\-tia\-lisation & random & reini\-tia\-lisation & reini\-tia\-lisation & random & reini\-tia\-lisation & uniform \\ 
\cmidrule{2-8}
& -- & -- & -- & midpoint to target & midpoint target & midpoint target & halfway \\ 
\cmidrule{2-8}
& -- & -- & midpoint to base & midpoint to base & -- & midpoint base & -- \\ 
\cmidrule{2-8}
& -- & expo\-nen\-tia\-lly confined, spread & -- & -- & -- & -- & COTN \\ 
\cmidrule{2-8}
& -- & -- & -- & -- & other SDIS\footnotemark & other SDIS\footnotemark & -- \\ 
\midrule
\textbf{Vector-wise SDIS} & -- & shrink\footnotemark & scaled mutant\footnotemark & projection to midpoint or base\footnotemark & centroid $K+1$ \footnotemark & -- & -- \\ 
\cmidrule{2-8}
& -- & inverse parabolic (confined, spread) & -- & -- & -- & -- & -- \\ 
\bottomrule
\end{tabular}
\begin{tablenotes}
\item[5] Unimodal test functions (Ellipsoidal, Schwefel, Ackley, Rosenbrock)
\item[6] For $n=20$ and $n\in\{50, 100, 200, 300, 500\}$, respectively
\item[7] Conservatism, resampling, evolutionary
\item[8] Death-penalty, resampling, boundary-transformation, rand-base, projection-to-midpoint, projection-to-base
\item[9] Linear combination between the trial and base elements
\item[10] Linear combination between the trial and a reference element in the feasible region
\item[11] Linear combination between the trial and the domain midpoint or base element
\item[12] Linear combination between feasible and component-wise corrected elements
\end{tablenotes}
}
\end{threeparttable}
\end{adjustbox}
\end{landscape}

\noindent parative analysis of different strategies to deal with infeasible solutions in the context of various metaheuristics: CMA-ES \citep{Wessing2013,Biedrzycki2020}, Particle Swarm Optimization \citep{Cheng2011,Helwig2013,Castillo2017,Oldewage2018}, Differential Evolution (for DE see a summary in Table~\ref{tab:PreviousStudied}).

In relation to Differential Evolution, the first paper presenting a comparison between the performance of various SDISs applied to \texttt{DE/rand/1/bin} \citep{Arabas2010} analyses the following SDISs: \texttt{sat} (referred to as \texttt{projection}), \texttt{tor} (referred to as \texttt{wrapping}), \texttt{uniform-resampling} (referred to as \texttt{reinitialisation}) and \texttt{mir} (referred to as \texttt{reflection}). The main observation is that the choice of SDIS might have an influence on the DE performance but the amount of impact depends on the problem characteristics (e.g. position of the optimum and problem size): (i) \texttt{sat} and \texttt{mir} work well when the optimum is near the bounds; (ii) for small size problems (e.g. $n=10$) the amount of corrected elements is not significantly influenced by the used SDIS and there are no significant differences between the performance of various SDISs; (iii) for larger size problems (e.g. $n=30$) a higher effectiveness has been observed for \texttt{sat} and \texttt{mir} when compared with \texttt{uniform-resampling}. 
     
In the study \citep{Padhye2015} addressing the influence of SDIS on Particle Swarm Optimisation, Differential Evolution (\texttt{DE/best/1/bin}) and Genetic Algorithms, it is stated that deterministic methods, as for instance \texttt{sat}, lead to a loss in population diversity, while \texttt{random-reinitialisation} loses useful information carried by the current population. The main remark on the performance of DE combined with a SDIS is that when the optimum is near the midpoint of the feasible domain there is no signi\-fi\-cant difference between the impact of various strategies. On the other hand, when the optimum is close to the boundary then a parameterised vector-wise non-deterministic strategy (\texttt{inverse-parabolic}) behaves the best with the experimental setup of the paper.

Currently the most extensive study on boundary constraints handling for standard DE is \citep{bib:BIEDRZYCKI2019} where experimental results are presented based on CEC 2017 benchmark suite and is analysed the impact of different strategies of dealing with infeasible solutions (penalty functions, repairing methods, feasibility preserving mutation) on the dynamics of the population, convergence speed and global optimisation efficiency. The main insights reported in \citep{bib:BIEDRZYCKI2019} are: (i) the highest influence on the mean and variance of the mutant population distribution is induced by \texttt{uniform-resampling} and \texttt{tor} SDISs; (ii) the performance sensitivity of standard DE to SDIS is higher for larger size problems; (iii) the adaptive variants (e.g. JADE, SADE and jSO) are less sensitive to the choice of SDIS than non-adaptive DE; (iv) the influence of a SDIS depends on the DE variant with which it is combined, but overall the best behaviour is induced by \texttt{midpoint} strategies (which use values between the corresponding component of the target or base vector and the violated bound) and by \texttt{mir}. The authors of \citep{bib:BIEDRZYCKI2019} consider that the results of the experimental analysis can be explained by the fact that SDISs with better performance lead to a lower discrepancy between the distributions of the before and after correction populations. 
It should be however emphasised that best performance is obtained when the mutant construction is repeated by sampling new parents until a feasible trial vector is obtained. 

Same conclusion has been reported also in \citep{Kreischer2017} where \texttt{DE/target-to-best/1/bin} algorithm (with $F=0.8$ and $C_r=0.9$) is combined with the same SDISs mentioned above and tested on CEC 2017 benchmark. The authors of \citep{Kreischer2017} also recommends \texttt{mir} and \texttt{projection} to an interior point of the feasible region as well performing strategies in the case of CEC 2017 benchmark, followed by \texttt{HVB}.

An experimental study on the influence of nine strategies is presented in \citep{Martinez2020} aiming to deal with boundary constraints when combined with the so-called Deb feasibility rules \citep{DEB2000b} to solve four real-world constrained optimization problems related to mechanical design. The analysed strategies are \texttt{midpoint-target} (\texttt{HVB}), \texttt{reflection} (\texttt{mir}), \texttt{projection} (\texttt{sat}), \texttt{random-scheme} (\texttt{unif-resample}), \texttt{full reinitialisation} (all components, including the feasible ones, are randomly reinitialised), \texttt{conservatism} (the trial vector is just discarded), \texttt{resampling} (a new mutant is constructed by using other randomly selected parents), \texttt{evolutionary}~\footnote{Name as given by \citep{Martinez2020}} (the infeasible component is replaced with a convex combination between the violated bound and the corresponding component from the best element in the population), \texttt{centroid K+1} (average of a set including an element selected from the populations and $K$ other elements obtained by applying \texttt{unif-resample} to the infeasible components). The main conclusion is that the influence of the SDIS is highly dependent on the problem to be solved but overall, \texttt{sat} proved to be the most effective strategy.

Similar to the setup in \citep{bib:BIEDRZYCKI2019}, \citep{Boks2021} has investigated the effect of SDIS on performance on the single-objective noiseless version of the BBOB benchmark~\citep{finck2010real} in DE for a wide selection of operators, in a fully modular fashion: \texttt{rand/1}, \texttt{best/1}, \texttt{target-to-best/1}, \texttt{best/2}, \texttt{rand/2}, \texttt{target-to-best/2}, \texttt{target-to-$p$best/1}, \texttt{rand/2/dir}, \texttt{NSDE}, \texttt{trigonometric}, \texttt{2-opt/1}, \texttt{2-opt/2}, \texttt{proximity-based rand/1}, \texttt{ranking-based-target\--to\--$p$best/1} mutations with \texttt{bin} and \texttt{exp} crossovers with SHADE-based adaptation of control parameters for a wide range of SDIS methods: \texttt{death-penalty}, \texttt{resampling}, \texttt{reinitialisation}, \texttt{pro\-jec\-tion}, \texttt{reflection}, \texttt{wrapping}, \texttt{boundary-transformation}, \texttt{rand\--base}, \texttt{midpoint-base}, \texttt{midpoint-target}, \texttt{projection-to-midpoint}, \texttt{pro\-jec\-tion-to-base}, \texttt{conservatism}. This paper appears to be the only analysis of performance dependency on SDIS on the BBOB benchmark. The main conclusions of \citep{Boks2021} are: (i) no SDIS appears to be optimal for all DE configurations considered; (ii) SDIS ranks differ greatly between configurations and BBOB function groups; (iii) to some extent, the best SDIS tends to depend on crossover. As a rule of thumb, for similar setups,  practitioners are therefore advised to consider \texttt{conservatism} for \texttt{exp} crossover and \texttt{reinitialisation} for \texttt{bin} crossover as they perform best with many configurations; such policy, however, does not always give the optimal result. Op\-tio\-nal\-ly, \texttt{midpoint-target} in \texttt{bin} configurations rarely performs best but always performs well and \texttt{projection-to-midpoint} is a reliable second option for \texttt{exp} configurations. Finally, over all cases considered, \texttt{resampling} SDIS has been deemed successful the highest number of times.

As regards DE implementations incorporated in various popular open-source \textit{libraries}, the most common SDIS is \texttt{uniform-reinitialisation}, as in SciPy~\footnote{Package \texttt{scipy.optimize.differential\_evolution} \citep{2020SciPy-NMeth}},

PyMOO~\footnote{\url{pymoo.org/algorithms/soo/de.html}} and PAGMO~\footnote{\url{esa.github.io/pagmo2/docs/cpp/algorithms/de}}, followed by \texttt{sat}, as in MOEA framework~\footnote{\url{moeaframework.org}}, PyADE~\footnote{\url{github.com/xKuZz/pyade}}, and \texttt{reflection} in PyMOO. Finally, a \textit{notable exception} is a highly modular AutoDE lib\-ra\-ry~\footnote{\url{github.com/rickboks/auto-DE} \citep{Boks2021,BoksThesis}} based on the aforementioned paper \citep{Boks2021} which provides many standard DE configurations and a hyper adaptive version of SHADE \citep{BoksThesis}, all with 13 SDIS variants (see the list above).

\section{Search direction}\label{sect:SearchDirection}

Any heuristic iterative optimiser can be considered an \textit{adaptive sampler} which is guided according to some logic by differences in values of objective function (or derivatives thereof) evaluated in the previously sampled points. It therefore makes sense to consider a `path' taken by an optimiser in the search domain or, more practically, a sequence of search directions over sampled points. 

This section defines the search direction induced by the DE operators and proposes to quantify the influence of a SDIS by computing the cosine similarity between the unconstrained search direction and the search direction resulting after applying a SDIS.

\subsection{Definition of search direction in Differential Evolution}\label{sect:DESearch}
While it is easy to define the search direction for iterative single-solution methods, such task gets excessively complicated in case of general population-based iterative heuristic optimisers where multiple solutions steer the generation of subsequent solutions.
 However, Differential Evolution (see Section~\ref{sect:DEs}) lends itself to such analysis easier thanks to its survivor selection mechanism based on a `1-to-1 spawning' -- eve\-ry new solution updates its direction from the direction of a parent and the whole population represents a repeatedly updated \textit{ensemble} of search directions with a clear `inheritance' scheme -- note that this happens despite the fact that new solutions also incorporate information from other solutions in the population apart from their direct parent and implicitly capitalise on the information contained in the population, following the spirit of population-based heuristic optimisation \citep{bib:Bennett2010}.

In this context, we consider for each population element, interpreted as a target element, a search direction is defined as the difference between the corresponding trial element and the target element. When an unfeasible trial element is corrected by applying a SDIS the search direction will be altered. 
Changes in such `sequence of search directions' can then be measured via, e.g. cosine similarity. 

\subsection{Cosine as a measure of similarity between search directions}\label{sec:cs} 

The Cosine Similarity (CS) between two non-zero vectors $\mathbf{v_1}$ and $\mathbf{v_2}$ is defined as their `normalised' inner product:
\begin{equation}\label{eq:cs}
\textnormal{CS}(\mathbf{v_1},\mathbf{v_2})=\frac{\mathbf{v_1}^\textnormal{T}\mathbf{v_2}}{\|\mathbf{v_1}\|\|\mathbf{v_2}\|} 
\end{equation}
thus not depending on their magnitudes and registering only differences based on the angle $\theta$ between them. For this reason, it can be seen as an angular distance, and can be used to determine whether two directed vectors $\mathbf{v_1}$ and $\mathbf{v_2}$ are pointing to the same oriented direction. Indeed, let us note that  $\textnormal{CS}(\mathbf{v_1},\mathbf{v_2})=\cos(\theta)\in[-1,1]$ and observe that:
\begin{itemize}
    \item  $\textnormal{CS}(\mathbf{v_1},\mathbf{v_2})=0\iff \mathbf{v_1}\textnormal{ and }\mathbf{v_2}$ are orthogonal  (i.e. the most dissimilar); 
   \item $\textnormal{CS}(\mathbf{v_1},\mathbf{v_2})=1\iff \mathbf{v_1}\textnormal{ and }\mathbf{v_2}$ are parallel (i.e. aligned and pointing to the same oriented direction);
      \item $\textnormal{CS}(\mathbf{v_1},\mathbf{v_2})=-1\iff \mathbf{v_1}\textnormal{ and }\mathbf{v_2}$ are anti-parallel (i.e. have opposite oriented directions).
\end{itemize}

In this study, we detect changes in the \textit{search direction} during the evolution in DE algorithms through the CS between two vectors~\footnote{For feasible solutions, CS values are not computed and thus excluded from the analysis.}, namely:
\begin{itemize}
    \item $\mathbf{d}$: obtained as the difference between the trial individual (before a SDIS is applied) and the target vector - i.e. the \texttt{target-to-trial} directed vector,
    \item $\mathbf{d_C}$: obtained as the difference between the trial individual (after a SDIS is applied) and the target vector - i.e. the \texttt{target-to-feasibleTrial} directed vector,
\end{itemize}
to \textit{observe if the employed SDIS is responsible for a change in the search direction}. In a single run, CS values are computed after a new trial individual is available, thus generating a sequence of angular distances over time whose length is equal to the computational budget used minus the population size. Sequences obtained across multiple runs for the same algorithmic configuration are then aggregated for analysis purposes as indicated in Section~\ref{sect:CSAnalysis}.

\subsection{Strategy of dealing with infeasible solutions as source of search disruptiveness}\label{sect:disrupt} 
In the case of an unconstrained search, Differential Evolution can generate, for given values of the control parameters, only a finite set of trial elements which define the corresponding set of search directions. When a SDIS is incorporated in DE, these search directions can be altered through the introduction of new moves/perturbations, hence a SDIS can be viewed as a source of disruptiveness in the DE search process. When analyzing the impact of a SDIS on the search process there are at least \textit{two questions} that arise: (i) how much is the search process altered? (ii) does a SDIS have a beneficial or a detrimental influence on the performance of the search?

The amount of disruptiveness depends both on the amount of components in the trial elements which are corrected by the SDIS and on the characteristics of the SDIS. One way to quantify the influence of the SDIS on the search process is to compute the cosine between the uncorrected and the corrected search directions, as it is easy to compute and can provide useful information even for high dimensions. As this value is closer to one, more of the search direction is preserved. Besides the influence on the search direction, different SDISs lead to different positions of the corrected elements and, as the chance of generating such elements through the DE mutation (based on the current population distribution and values of the control parameters) is smaller, the disruptive character of the SDIS can be considered more significant.

The second question is more difficult to answer, as the interference between the SDIS and the DE mechanisms can hinder but also help the search. At a first sight, at least the coordinate-wise SDISs are characterised by an inappropriate usage of the information contained in the population, as it alters the self-referential search direction induced by the difference-based mutation and decouples the components (with impact on the ability of some DE variants to be rotationally invariant \citep{Bujok2014,Fabio2018}). From this point of view, the vector-wise SDISs, as that proposed in \citep{Kreischer2017}, can be considered less disruptive.

On the other hand, interfering with the DE search might be beneficial at least with respect to: (i) generation of trial elements which would otherwise not be generated by DE operators, thus increasing in this way the pool of candidate solutions (this might be particularly useful in the case of small populations); (ii) influence on the population diversity by including components/elements which do not fully rely on differences (this might be useful to avoid premature convergence or stagnation which are two of the main causes for lack of performance of DE). So we can say that a \textit{SDIS might turn into a diversity increasing mechanism} which, however, is \text{blind}. Some of these aspects are discussed in Section~\ref{sect:diversityTheor} and illustrated further in Section~\ref{sect:diversity}, but there are some more complex issues related to the control of diversity which are not discussed here.

\section{Theoretical analysis of strategies of dealing with infeasible solutions}\label{sect:theoretic} 

Under the usual assumption that the scale factor $F$ is less than one, all mutant vectors generated by \texttt{DE/rand/1} will have the components inside the extended domain $[2a_i-b_i,2b_i-a_i]$, thus the infeasible components will belong to $[2a_i-b_i,a_i)\cup (b_i,2b_i-a_i]$~\footnote{Objective function $f: \mathbf{D} = \bigtimes_{i=1}^n [a_i,b_i] \rightarrow \mathbb{R}$, see Eq. ~\ref{eq:def_f}}. To quantify the influence of a SDIS on the search dynamics, several elements can be taken into account: (i) the amount of corrected components (in the case of component-wise strategies); (ii) the difference between the infeasible trial element and the corrected one; (iii) the (dis)similarity between the search direction as it would be in an unconstrained search  and the corrected search direction; (iv) the impact of SDIS on the population diversity. Different SDISs can behave differently with respect to these aspects. A theoretical analysis, even if conducted under some simplifying assumptions, might allow to extract some insights which could explain empirical observation or provide guidelines in selecting a SDIS.

The results presented in this section correspond to \texttt{DE/rand/1/*}~\footnote{Symbol * means that either \texttt{bin} or \texttt{exp} crossover can be used.} and are obtained under some \textit{simplifying assumptions}: (i) absence of a selection pressure (this is equivalent with using a flat fitness function, i.e. it is considered that all new trial elements are accepted as soon as they are generated); (ii) the analysis of population diversity is conducted component-wise; (iii) the population of current elements is almost uniformly distributed. The last assumption is used only for the estimation of the probability to generate infeasible components and of the variance of the population of elements which are corrected by using \texttt{mir}.

Furthermore, results discussed in this section under the aforementioned simplifying assumptions are contrasted with empirical results on benchmark functions in Sections~\ref{sect:exponf0},~\ref{sect:BBOBexperiments} which are generally free of such assumptions. Formal proofs for statements in this section are provided in Appendix ~\ref{sect:appendix_proofs}.

\subsection{Amount of infeasible components \label{sect:ViolProb}}
The probability that a \texttt{DE/rand/1} trial element contains an infeasible component depends on the mutation probability ($p_m$) and on the probability to generate an outside the bounds component ($p_v$). The mutation probability depends on $C_r$, on the problem size, $n$, and on the crossover type \citep{zaharie2009influence}. On the other hand, as long as the current population is almost uniformly distributed, the probability that a component violates the bounds is close to $F/3$ \citep{Zaharie2017}. The distribution of the trial population is influenced both by the mutation operator and by the used SDIS. One question is if the usage of a SDIS in the current generation increases or decreases the risk of generating infeasible components in the next generation. This question is rather difficult to answer in the general case, but it can be addressed in the case of \texttt{sat} strategy.  In this case, the current population, $P$, consists of three subpopulations: $P_w$ (elements within the bounds), $P_{lb}$ (elements on the lower bound) and $P_{ub}$ (elements on the upper bound). If $p_v(g)$ denotes the violation probability corresponding to generation $g$ (for the initial population it is considered that $p_v(1)=F/3$) then it can be proved (Proposition~\ref{prop:ViolProb} in Appendix~\ref{sect:appendix_proofs}), under the assumption that $P_w$ remains almost uniformly distributed, that \begin{equation}p_v(g+1)=p_v(g)/2+(1-p_v(g))(p^2_v(g)F/4+(1-p_v^2(g))F/3)
\end{equation}
This sequence, $p_v(g)$, of probabilities converges to a value which is between $F/3$ and $2F/3$ (see Figure~\ref{fig:ViolProbSaturate} in Appendix~\ref{sect:appendix_proofs}). 
This suggests that, in the absence of selection pressure, \texttt{sat} increases the risk of generating infeasible components, but not by a large amount. 
This is confirmed by the empirical results presented in Section \ref{sect:POIS_f0}, Figure \ref{fig:ViolProbEmpirical}.

\subsection{Difference between infeasible and corrected elements}

The simplest way to quantify the impact of a SDIS is to just compute the Euclidean distance between an infeasible trial element and its corrected version, $\|z-c(z)\|^2$. Using Eq. (\ref{eq:components}) it follows that the expected value of $\|z-c(z)\|^2$ is $p_mp_v\sum_{i=1}^n(y_i-SDIS(y_i))^2$.  
\begin{equation} \label{eq:components}
   z_i=\left\{\begin{array}{ll}
    x_i & \hbox{with probab. } 1-p_m\\
    y_i & \hbox{with probab. } p_m\\
    \end{array}
    \right. \quad
    c(z_i)=\left\{\begin{array}{ll}
    x_i & \hbox{with probab. } 1-p_m\\
    y_i & \hbox{with probab. } p_m(1-p_v)\\
    SDIS(y_i) & \hbox{ with probab. } p_mp_v\\
    \end{array}
    \right.
\end{equation}
For a given component, the difference $(y_i-SDIS(y_i))^2$ is obviously the smallest in the case of \texttt{sat}. If $0<y_i-b_i\leq (b_i-a_i)/2$ or $0<a_i-y_i\leq (b_i-a_i)/2$ then the difference is smaller for \texttt{mir} than for \texttt{tor}. This always happens if $F\leq 0.5$. In the case of \texttt{uni}, as the corrected value can be anywhere in $[a_i,b_i]$, the difference can take any value in $(0,2(b_i-a_i))$. However, as the violation probability induced by \texttt{sat} is usually higher than in the case of the other SDISs it means that the expected value of the Euclidean distance corresponding to \texttt{sat} is not necessarily the smallest one.  

\subsection{(Dis)similarity between search directions \label{sect:cosine}}

The similarity between the DE search direction, i.e. $d=z-x$ (differences between the infeasible trial element, $z$, and the target element, $x$) and the corrected search direction, $d_{SDIS}=SDIS(z)-x$, can be analysed using the cosine of the angle between $d$ and $d_{SDIS}$. In the case of \texttt{sat}, the components of the corrected search direction, $d_S$, have the same sign as the components of the DE search direction (either $z_i<a_j=c_S(z_i)\leq x_i$ or $z_i>b_i=c_S(z_i)\geq x_i$, where $c_S$ denotes the transformation corresponding to \texttt{sat}). Thus, for \texttt{sat} the cosine similarity is always larger than $0$. The highest value of the similarity is obtained when $SDIS(z)$ is a convex combination between $x$ and $z$, meaning that the search direction is not altered by the SDIS.

The symmetry between the corrected solutions obtained by \texttt{mir}, $c_M$, and \texttt{tor}, $c_T$, i.e. $c_M(z_i)+c_T(z_i)=a_i+b_i$ allows to prove (see Proposition~\ref{prop:cosine} in Appendix~\ref{sect:appendix_proofs}) that when $F\leq 0.5$ (which ensures that $d^Td_M\geq d^Td_T$) and $x$ and $c_M(z)$ are in the same quadrant, i.e. $(c_M(z_i)-(a_i+b_i)/2)( x_i-(a_i+b_i)/2)\geq 0$ for $i=\overline{1,n}$
(which ensures that $\|d_M\|\leq\|d_T\|$) then $\cos(d,d_{M})\geq \cos(d,d_{T})$. When $x$ and $c_M(z)$ are not in the same quadrant then it is not necessary that $\|d_M\|\leq\|d_T\|$, thus the relationship between $\cos(d,d_{M})$ and  $\cos(d,d_{T})$ cannot be inferred so easy.

For the other SDISs it is difficult to prove statements on the search directions in the general case, since the target value, $x$, can be placed anywhere in the feasible domain. However, in the case when only one component is corrected and the Euclidean norm of the uncorrected search direction is not fully determined by the infeasible component, then \texttt{sat}  preserves the search direction better than any other SDIS which generates values inside the open domain, i.e. $(a_i,b_i)$.  More specifically, if $k$ denotes the index of the infeasible component (e.g. $z_k>b_k$) and if $\|d\|^2=\sum_{i=1}^n(z_i-x_i)^2\geq 2(z_k-x_k)(z_k-b_k)$ then $\cos(d,d_{sat})\geq \cos(d,d_{other})$ (see proof of Proposition~\ref{prop:cosineOneComponent} in Appendix~\ref{sect:appendix_proofs}). The above constraint on $\|d\|^2$ might be violated in the case when $C_r$ is small, which leads to very few mutated components (in the extreme case only one component is mutated and this is also infeasible) and $F$ is large, leading to a large deviation of $z_k$ with respect to $x_k$.

\subsection{Influence of the strategy of dealing with infeasible solutions on population diversity \label{sect:diversityTheor}}

The population diversity can be quantified using the variance of the population computed component-wise. In the case of \texttt{DE/rand/1} variant, the expected value of the trial population variance (after applying SDIS, $c(Z)$) depends  on the variance of the current population, $X$, as it is given \citep{Zaharie2017}, in Eq.\ref{eq:variance} :
\begin{equation}\label{eq:variance}
\mathbb{E}[\hbox{var}(c(Z))]= \alpha(p_m,p_v,N,F)\cdot \hbox{var}(X)+\beta(p_m,p_v,N,SDIS).  
\end{equation}
The first coefficient, $\alpha(p_m,p_v,N,F)$ is not influenced by the used SDIS, depending only on the mutation probability ($p_m$), violation probability ($p_v$), population size ($N$) and scale factor ($F$). The SDIS influence is incorporated in the free term which depends on the average and variance of the corrected values, as is specified in Eq.~\ref{eq:varBeta}, where $\overline X$ denotes the midpoint of the current population, $\hbox{mean}(SDIS)$ and  $\hbox{var}(SDIS)$ denotes the average and the variance of the corrected values, respectively.
\begin{eqnarray}\label{eq:varBeta}
 \beta(p_m,p_v,N,SDIS)&\simeq& p_mp_v(1-p_mp_v)\frac{N-1}{N}(\overline X-\hbox{mean}(SDIS))^2 \nonumber \\
  & & +p_mp_v\left(1-\frac{1-p_mp_v}{N}\right)\hbox{var}(SDIS)   
\end{eqnarray}
In the case of \texttt{uni} strategy, $\hbox{mean(\texttt{uni})}=(a+b)/2$ and $\hbox{var(\texttt{uni})}=(b-a)^2/12$. In the case of \texttt{sat} strategy one can consider, in the absence of the selection pressure, that the set of corrected elements follows a Bernoulli distribution with values $a$ and $b$ and probabilities $p_{lb}+p_{ub}=1$ to violate the bounds. Since, in the case of a flat function, there is no incentive to bias the population toward one of the bounds, one can consider that $p_{lb}=p_{ub}=1/2$. In this case one obtains $\hbox{mean(\texttt{sat})}=(a+b)/2$ and $\hbox{var(\texttt{sat})}=(b-a)^2/4$. This suggests that, at least under these simplifying assumptions, the SDIS influence on the population variance is larger in the case of \texttt{sat} than in the case of \texttt{uni} strategy, as the second term of $\beta(p_m,p_v,N,SDIS)$ is larger in the case of \texttt{sat}. 

For \texttt{mir} and \texttt{tor} strategies, the distribution of the population of corrected elements is directly related to the distribution of the trial population ($Z$). In the case of \texttt{mir}, a corrected element, $c_M(z)$ is $2b-z$ with probability $p_{ub}$ and $2a-z$ with probability $p_{lb}$. Since for \texttt{DE/rand/1} the expected mean of the mutant population ($\mathbb{E}[\overline Z$]) is the same as the expected mean of the current population ($\mathbb{E}[\overline X]$), it follows that $\hbox{mean(\texttt{mir})}=(a+b)-\mathbb{E}[\overline X]$. Thus $(\overline{X}-\hbox{mean(\texttt{mir})})^2$ is close to $4\left(\overline{X}-\frac{a+b}{2}\right)^2$, suggesting that the first rhs term in Eqs.\ref{eq:varBeta} is four times larger in the case of \texttt{mir} strategy than in the case of \texttt{sat} and \texttt{uni} strategies. However, if $\overline{X}-\frac{a+b}{2}$ is small, the influence of this term is also small. On the other hand, the variance of the population of elements which violate one of the bounds and are corrected using the \texttt{mir} strategy is around $F^2/10-F/4+1/4$ (under assumptions that $a=0$ and $b=1$ and as long as the current population, $X$, is almost uniformly distributed) which is smaller than $1/4$ (variance of the elements corrected by \texttt{sat}) but larger than $1/12$ (variance of the elements corrected by \texttt{uni}). For details see the proof of Proposition~\ref{prop:diversity} in Appendix~\ref{sect:appendix_proofs}.

Thus, is to be expected that the highest impact on the population diversity is induced by  \texttt{sat} followed by \texttt{mir} and then by \texttt{uni}.

Due to the symmetry between \texttt{mir} and \texttt{tor} (as $c_M(z)+c_T(z)=a+b$) the  population of corrected elements has the same variance, thus it is expected that these two strategies have the same influence on the population diversity.

\section{Experimentation on $f_0$}\label{sect:exponf0}

Understanding the `behaviour' of a heuristic algorithm during the search for (near-) optimal positions is challenging. As the fitness landscape of the problem at hand is the main driving force of this process, decoupling its effect on the internal dynamics of the candidate solutions is key to observing how the algorithm's working logic operates in figuring out promising search directions to follow. The $f_0$ function of Eq.~\ref{eq:f0} was first proposed in \citep{Kononova2015} to serve this purpose:
\begin{equation}\label{eq:f0}
        \bf{f_0}:[0,1]^n\to[0,1],  \textnormal{ where } \forall \bf{x}, \bf{f_0(x)} \sim \mathcal{U}(0,1).
\end{equation}
Because of its truly stochastic nature, $f_0$ separates otherwise highly interconnected effects from the fitness landscape and the location of the optima, thus being suitable for analysing structural implications to the search. Previous studies used $f_0$ for understanding how often solutions are generated outside the search space in classic DE variants \citep{Kononova2020_outside}, relevant observations are reported in Section~\ref{sect:POIS_f0}, and for finding structural biases in DE variants \citep{Caraffini2019lego,Caraffini2019,vStein2021_emergence,bib:DEAnalysis2022} as well as several other algorithmic frameworks for heuristic optimisation \citep{Kononova2020CEC,Kononova2020PPSN,Vermetten2021_anisotropy}, as summarised in Section~\ref{sect:SB}. In this piece of work, we relate to these aspects and further employ $f_0$ for collecting the CS angular distances in the two most known DE configurations and study their distribution.

Where possible, results on function $\bf{f_0}$ discussed in this section are contrasted with theoretical conclusions obtained under a number of simplifying assumptions made in Section~\ref{sect:theoretic}.

\subsection{Setup for the experimentation on $f_0$}\label{sect:f0_exp}

The two well-known \texttt{DE/rand/1/bin} and \texttt{DE/rand/1/exp} algorithms 
are executed on $f_0$ function in $n=30$ dimensions~\footnote{This dimensionality value is consistent to previous studies on $f_0$, thus allowing for a direct comparison.} for a fixed number of $10000\cdot n$ function evaluations. This is done for multiple configurations of the two DE variants, each one characterised by a SDIS from the set $\{\texttt{COTN},\texttt{mir},\texttt{sat},\texttt{tor},\texttt{uni},\texttt{HVB}\}$ (see Section~\ref{sect:SDIS}), a specific population size $N\in\{5,$ $20,$ $100\}$ and a pair of control parameters. 

To practically identify these configurations in our analysis we use the notation \texttt{DE/rand/1/$\star$-p$N$}, where $\star$ can either be the binomial \texttt{bin} or the exponential \texttt{exp} crossover operator. As each configuration is tested over a wide range of $F-C_r$ pairs, these are not included in the notation but more effectively reported in the graphical results. Note that the same set of scale factor values, i.e. $F \in \{0.05,0.285,0.52,0.755,0.99\}$, are considered for \texttt{bin} and \texttt{exp} DE variants - which offers a good discretisation of the space of its typical admissible values. Conversely, two different sets of values are used for $C_r$ depending on the employed crossover operator, as indicated below
\begin{itemize}
        \item $C_r^{bin} \in \{ 0.05,0.285,0.52,0.755,0.99\}\cup\{0.0891,0.1283, 0.1675, 0.2067, 0.2458\}$,
         \item $C_r^{exp} \in \{0.05,0.285,0.52,0.755,0.99\}$,
    \end{itemize}
where the admissible range $(0,1]$ is first discretised equally for the two recombination strategies and a further set is added to better cover the range $(0.05,0.285)$ when the \texttt{bin} operator is employed. The latter additional range is of interest to our analysis as it allows for spotting the nature of changes (smooth or sudden) in the algorithm behaviour, measured here in terms of cosine similarity between search directions (as indicated in Section~\ref{sec:cs}), when only one or very few components are inherited and corrected. It should be remarked that doing this while employing the \texttt{exp} crossover operator would be irrelevant. Indeed, with such a low $C_r$ values this operator would be quite unlikely to exchange any component from the \texttt{mutant} to the \texttt{target} on top of the one that gets necessarily replaced by design -- for clarifications on the behaviour of the \texttt{exp} operator see \citep{Caraffini2019,Kononova2020_outside,bib:DEAnalysis2022}.

In total, the \texttt{DE/rand/1/exp} configurations obtained from combining the $6$ SDIS operators, the $3$ population sizes, the $5$ scale factor values and the $5$ crossover rate values (i.e. $6\cdot3\cdot5\cdot5=450$ configurations), plus those obtained with similar settings but with $5$ more $C_r$ values for \texttt{DE/rand/1/bin} (i.e. $6\cdot3\cdot10\cdot5=900$), results in $1350$ optimisation processes. Each DE configuration is executed $30$ times to produce robust results over $30$ independent runs. Relevant information stored during each run include CS measure values for solutions where SDIS has been applied, percentages of infeasible solutions (see Section~\ref{sect:pois}), population and fitness diversity measures, etc. This experimentation has been performed with the SOS software platform \citep{CaraffiniSOS_2020}, whose source code is available in its GitHub repository, where a permanent release of the current state of the SOS platform is also made available prvided~\footnote{\url{github.com/facaraff/SOS/releases/tag/ECJ-ReproducibilityInEC}}. This is accompanied by detailed clarifications on how to find software classes within the platform and on how to reproduce the entire employed dataset, which we have stored in the zenodo repository \cite{zenodo_tiobr} along with the full sets of static versions of all processing scripts. 

\subsection{Analysis of results on distributions of cosine similarity}\label{sect:CSAnalysis}
\begin{figure}
    \centering
    \includegraphics[width=\textwidth,trim=12mm 9mm 11mm 8mm,clip]{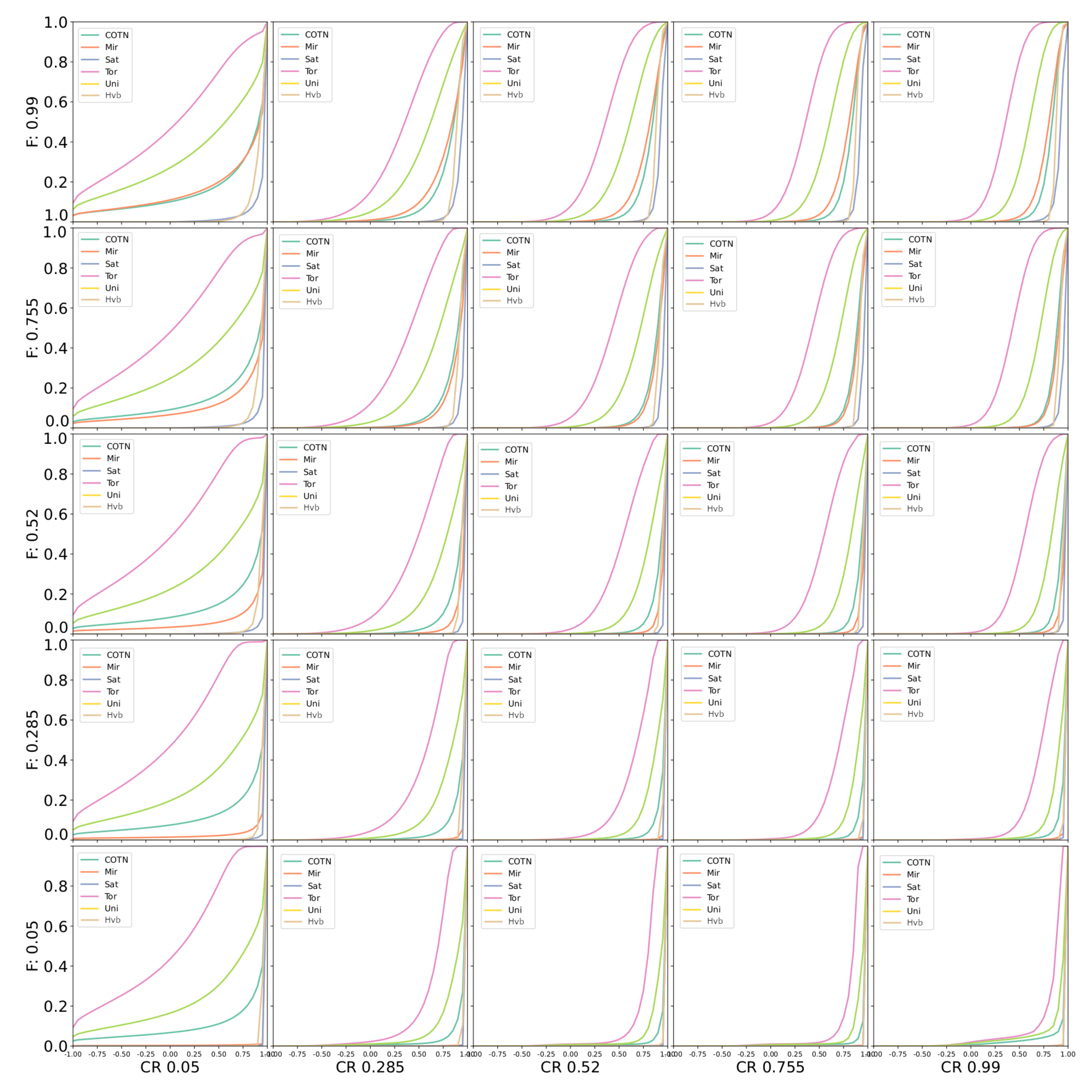}
    \caption{ECDF curves of the cosine similarity values on $f_0$ for \texttt{DE/rand/1/bin-p100} for different SDIS and values of $F$ and $C_r$, $30$ runs each. }\label{fig:ECDF_CS}
\end{figure}

To analyse the distribution of cosine similarity for infeasible solutions when using different SDIS, we can make use of the Empirical Cumulative Distribution Function (ECDF), which shows for each value of $x$ what fraction of infeasibility corrections had a CS of at most $x$. 

In Figure~\ref{fig:ECDF_CS}, we show the ECDF for each SDIS, based on the values $F$ and $C_r$, for \textit{all infeasible} solutions generated during full runs of \texttt{DE/rand/1/bin-p100} on $f_0$. In this figure, SDIS with a larger disruptiveness will have a curve which is closer to the upper left corner of the plot, so larger areas under the ECDF correspond to larger values of the cosine between unconstrained and corrected search directions, i.e. larger amounts of disruptiveness. When comparing two strategies, in the case when $ECDF(SDIS_1)\geq ECDF(SDIS_2)$ then it can be proved (see Proposition~\ref{prop:ECDF} in Appendix~\ref{sect:appendix_proofs}) that it is more likely that the cosine values corresponding to $SDIS_1$ are smaller than those corresponding to $SDIS_2$.

We see from Figure~\ref{fig:ECDF_CS} that even though the shape of the curves can change significantly based on the used parameter settings, the ordering remains consistent: \texttt{tor} is the most disruptive, followed by \texttt{uni}, \texttt{COTN} and \texttt{mir}, while \texttt{HVB} and \texttt{sat} are the least disruptive. This ordering is also preserved when changing population size or crossover type for \texttt{DE/rand/1} -- those figures are included in the Figshare repository of this paper~\citep{figshare_tiobr}. 

\textbf{Relation to theoretical results}. The experimental analysis in Figure~\ref{fig:ECDF_CS} leads to remarks which are \textit{in line} with the results derived in Section~\ref{sect:cosine}: (i) \texttt{sat} is more likely to lead to the highest value of the cosine similarity (provable in the case when only one component is corrected - this might be related with small values of $F$ and $C_r$); (ii) in the case when $F\leq 0.5$ it is more likely that \texttt{tor} induces a higher disruption than \texttt{mir} (provable when the element obtained by mirroring and the target element are in the same quadrant); (iii) the cosine similarity between the initial search direction and that corresponding to \texttt{sat} is always positive and higher than that corresponding to \texttt{HVB} \citep{Mitran2021}.

\subsection{Implications of direction-disruptive SDIS operators on population diversity}\label{sect:diversity}
To measure further the impact of SDIS, we track the \textit{population diversity} on $f_0$ in each generation and show the results in Figure~\ref{fig:diversity_matrix} for multiple runs. The population diversity is defined here as the average of the standard deviations of solutions in the population within the domain in each of the $30$ dimensions.
\begin{figure}
    \centering
    \includegraphics[width=\textwidth,trim=7.5mm 9.5mm 7.5mm 7.5mm,clip]{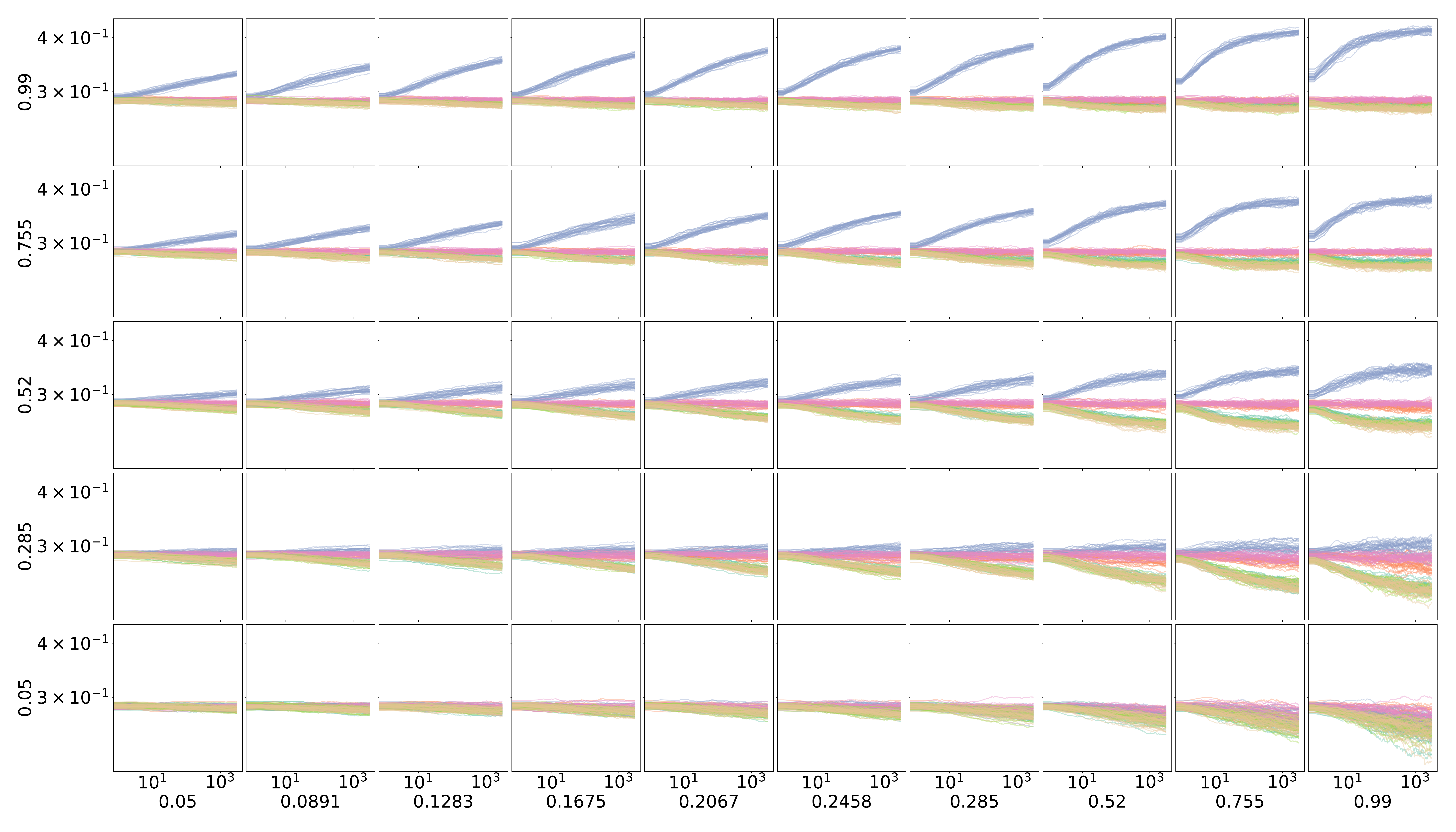}
    \caption{Evolution of population diversity per generation on $f_0$ for $30$ runs of \texttt{DE/rand/1/bin-p100} for different SDIS, values of $F$ (subfigures in vertical direction), $C_r^{bin}$ (subfigures in horizontal direction). Colours correspond to those in Figure~\ref{fig:ECDF_CS}.}\label{fig:diversity_matrix}
\end{figure}

Figure~\ref{fig:diversity_matrix} demonstrates a rather consistent picture for \texttt{DE/rand/1/bin-p100} on $f_0$ for all values of $F$ excluding the smallest and for all values of $C_r$: (i) diversity of the \texttt{sat} variant shown in blue is consistently the highest and increasing during the runs; (ii) the second highest values of diversity are attained consistently by the \texttt{tor} and \texttt{mir} variants shown in pink and orange, respectively; these two variants are near-constant during the runs and indistinguishable in terms of diversity; (iii) the third group of variants, also indistinguishable in terms of diversity, is demonstrated by \texttt{COTN}, \texttt{unif-resample} and \texttt{HVB}; diversity of these variants is consistently decreasing during the runs. Meanwhile, for the smallest value of $F=0.05$, diversity of all variants is largely indistinguishable and decreasing over time. In general, increase and decrease of diversity during the run depend on both DE control parameters, however increase of diversity in \texttt{sat} variant is more drastic than the decrease of variants from the third group discussed above. Meanwhile the standard deviation appears to depend on $F$, with smaller $F$ leading to higher standard deviations of diversity values. 

Furthermore, by comparing the results on disruptiveness (Section~\ref{sect:CSAnalysis}) with those related to diversity, it turns out that a more disruptive strategy \textit{does not} necessarily induce an increase in the population diversity.

\textbf{Relation to theoretical results}. The theoretical results on population diversity  presented in Section~\ref{sect:diversityTheor} (for \texttt{sat}, \texttt{uni} and \texttt{mir}) state that the variance of the population of {\it corrected components} is the largest in the case of \texttt{sat} and smallest in the case \texttt{uni}, being equal to $1/4$ and $1/12$, respectively. The variance of the population of components corrected by \texttt{mir} decreases with $F$ from the value corresponding to \texttt{sat} toward that corresponding to \texttt{uni} (Fig.~\ref{fig:VarianceCorrection}, Appendix~\ref{sect:appendix_proofs}).
On the other hand, as it is also stated in  Section~\ref{sect:diversityTheor}, the symmetry between \texttt{mir} and \texttt{tor} leads to the same variance of the po\-pu\-la\-tion of components corrected by these two SDISs. Thus, as it is illustrated in Fig.~\ref{fig:TheoreticalDiversity} (Appendix~\ref{sect:appendix_proofs}) the experimental results are \textit{consistent} with what is to be expected from the theoretical point of view.

The \textit{difference in the shape} of standard deviation curves as they are illustrated in Figures~\ref{fig:diversity_matrix} and~\ref{fig:TheoreticalDiversity} can be explained by the fact that the theoretical curves have been estimated in the case of a flat function, i.e. all new elements are incorporated into the population after correction, while in the case of $f_0$ acceptance is based on a random decision, thus the change in the populations corresponding to two consecutive generations is expected to be smaller than in the case of a flat function. This might explain the more gradual change of the diversity measure in the case of empirical analysis (Fig.~\ref{fig:diversity_matrix}) than in the case of theoretical estimations (Fig.~\ref{fig:TheoreticalDiversity}) where the limit values are rather quickly reached.

\subsection{Relation to the analysis of boundary violation probabilities}\label{sect:POIS_f0}

To collect information on the influence of SDIS on the amount of infeasible solutions, an experimental analysis has been conducted on $f_0$ using a population of $N=100$ elements and collecting the number of infeasible components during the first $100$ generations of DE/rand/1/*, in the case of the $5$ values of $F$, as used in the previous experiments. The bound violation probability has been estimated as the averaged frequency of infeasible components and the estimations for five SDISs are illustrated in  Figure~\ref{fig:ViolProbEmpirical}. The main remark is that \texttt{sat} generates the highest number of infeasible components followed by \texttt{uni}, while \texttt{mir} and \texttt{tor} lead to almost the same bound violation probability. The smallest amount of infeasible components seems to be generated by \texttt{COTN}.

\begin{figure}[!tb]
    \centering
    \includegraphics[width=0.75\textwidth]{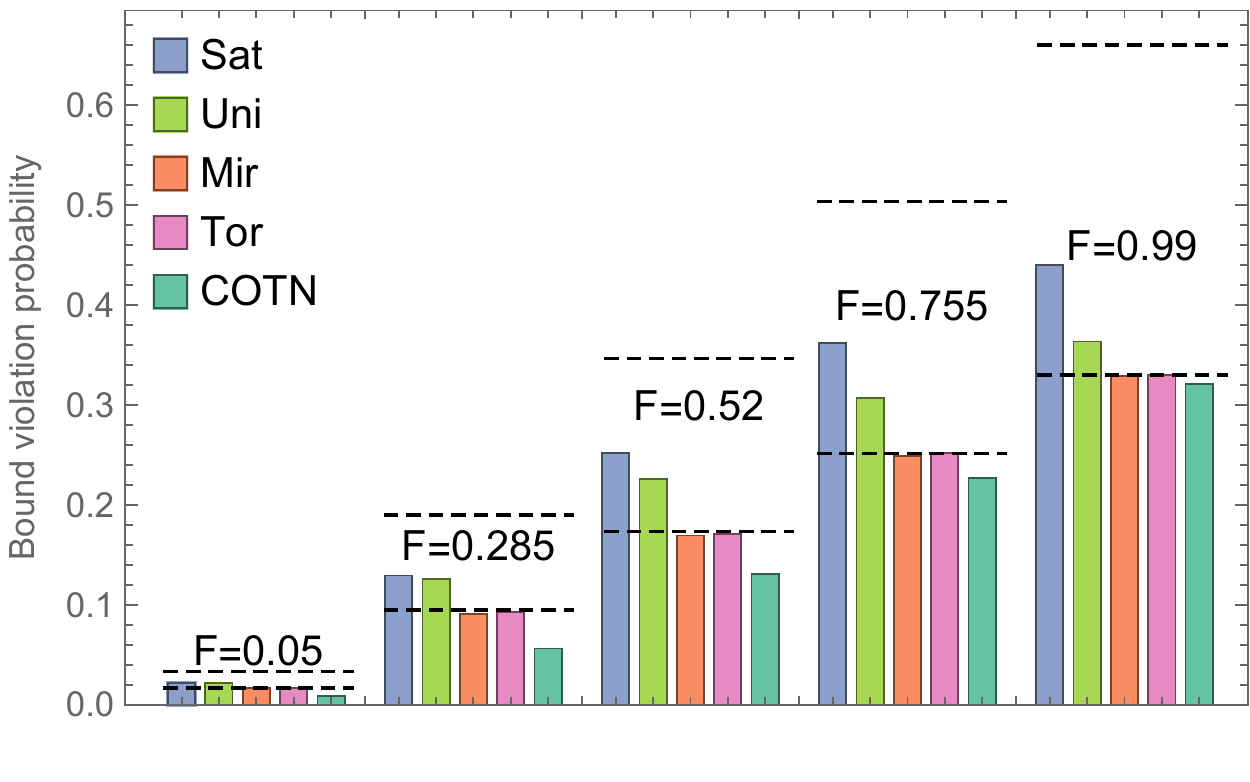}
    \caption{Empirical probability of bounds violation (averages of the relative frequencies of infeasible components) in \texttt{DE/rand/1/*-p100} on $f_0$ of dimensionality $n=30$, estimated over $100$ generations and $30$ independent runs. The dashed lines correspond to the theoretical lower and upper bounds ($F/3$ and $2F/3$, respectively).}\label{fig:ViolProbEmpirical}
\end{figure}

\textbf{Relation to theoretical results}.

In the absence of a selection induced bias, it is expected that the probability of generating, by \texttt{DE/rand/1} mutation, components which are outside the bounds is close to that estimated theoretically, i.e. $p_v=F/3$. However, the inclusion of corrected components in the population, during the evolutionary ge\-ne\-ra\-tions, can lead to changes of the bounds violation probability, as it has been inferred in Proposition~\ref{prop:ViolProb} for \texttt{sat}. Except for \texttt{COTN} strategy, the bound violation probability is between the theoretically estimated limits, i.e. $F/3$ and $2F/3$.

\subsection{Relation to the analysis of structural bias}\label{sect:SB}
\begin{figure}
    \centering
    \subfigure[\texttt{COTN}]{\includegraphics[width=0.1945\textwidth,trim=12mm 10mm 40mm 58mm,clip]{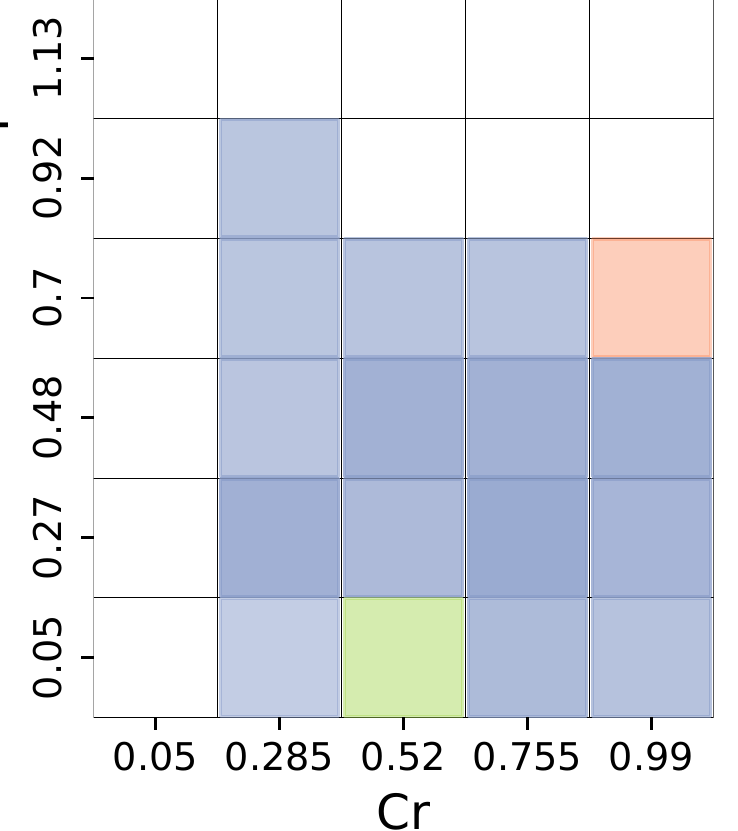}}
    \subfigure[\texttt{mir}]{\includegraphics[width=0.1945\textwidth,trim=12mm 10mm 40mm 58mm,clip]{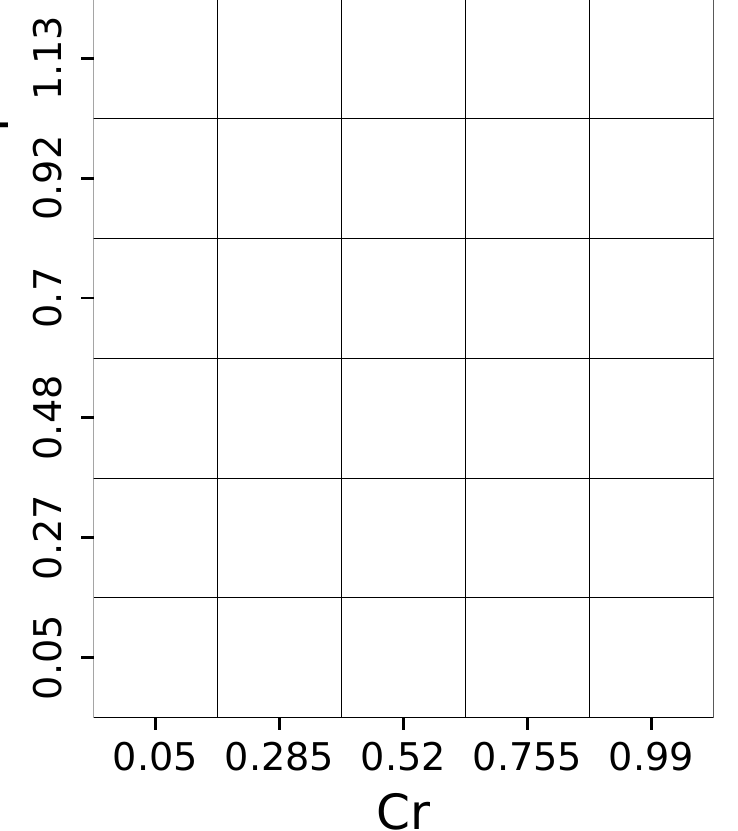}}
    \subfigure[\texttt{sat}]{\includegraphics[width=0.1945\textwidth,trim=12mm 10mm 40mm 58mm,clip]{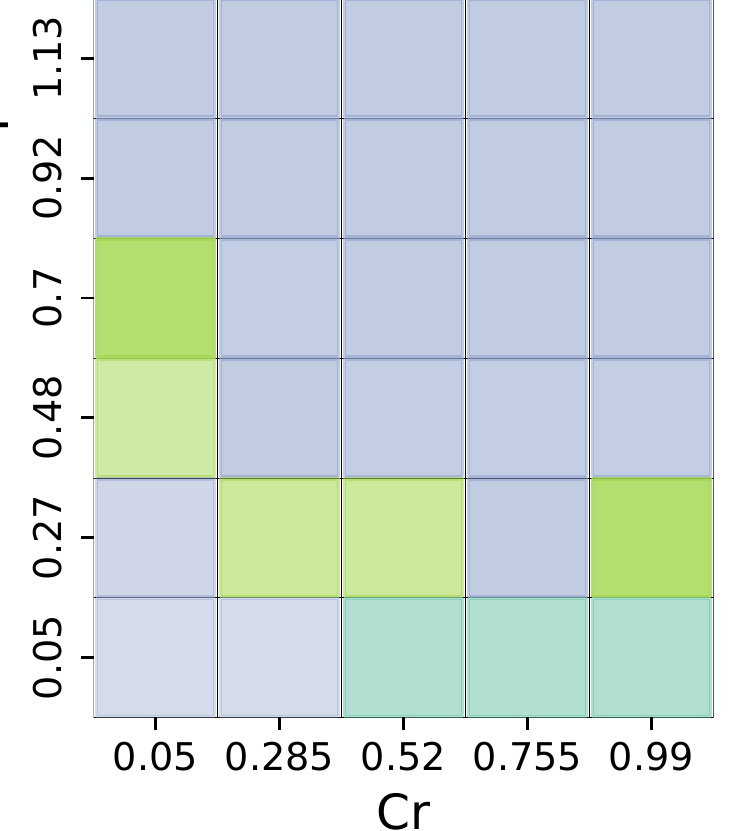}}
    \subfigure[\texttt{tor}]{\includegraphics[width=0.1945\textwidth,trim=12mm 10mm 40mm 58mm,clip]{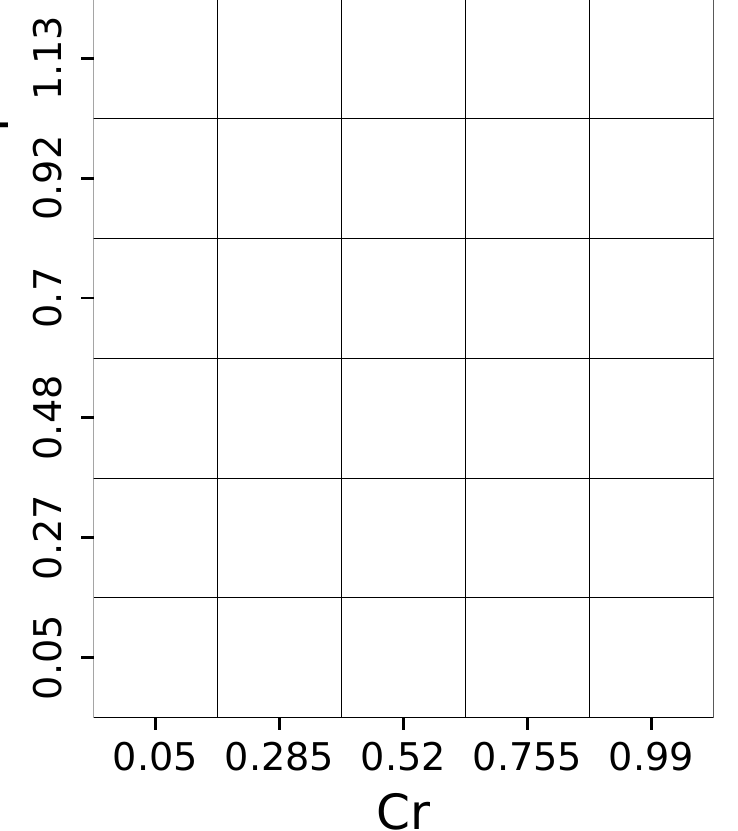}}
    \subfigure[\texttt{uni}]{\includegraphics[width=0.1956\textwidth,trim=12mm 10mm 40mm 58mm,clip]{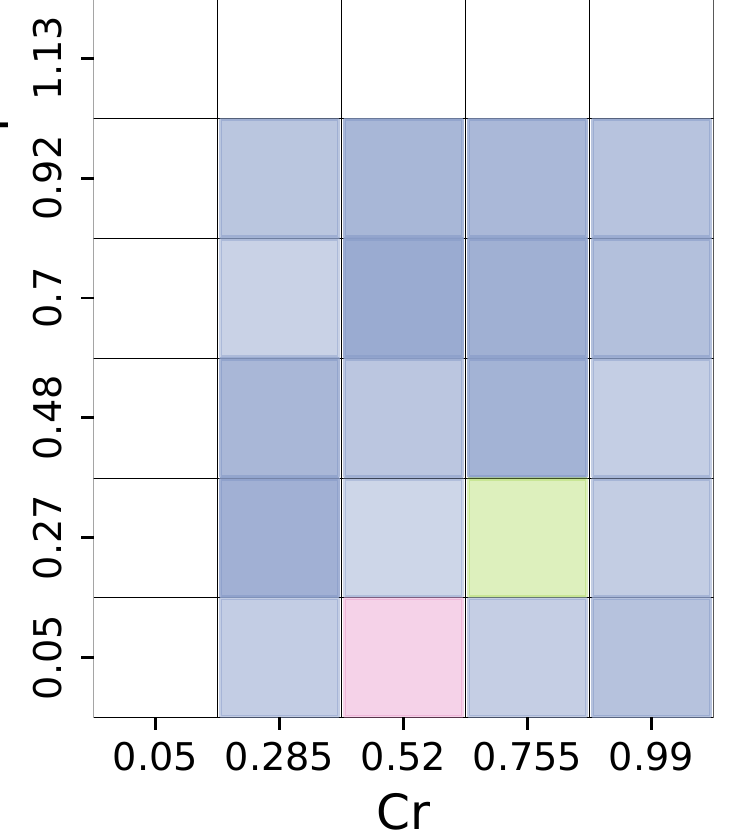}}
    \subfigure{\includegraphics[angle=90,width=0.5\textwidth,trim=90mm 10mm 15mm 9mm,clip]{IMG/Heatmap_r_o_b_u_p100.pdf}}
    \caption{Predictions on the kind of structural bias (with confidences as colour intensities) produced by the random forest model from the \texttt{BIAS} toolbox \citep{techrxiv_bias} for \texttt{DE/rand/1/bin-p100} with 5 SDIS variants, $F\in[0.05,1.13]$ and $C_r\in[0.05,0.99]$. White squares correspond to configurations with no structural bias detected. Figure is taken from \citep{bib:DEAnalysis2022}, which has not considered the \texttt{HVB} SDIS.}
    \label{fig:SB_toolbox}
\end{figure}  

Test function $f_0$ used in this study has been originally introduced~\citep{Kononova2015} for studying the so-called structural bias (SB) of a heuristic optimisation algorithm which is an inability of an algorithm to explore different parts of the search domain to equal extent regardless of the objective function. Such study requires decoup\-ling the effects of the landscape of the objective function from that of SB. It is precisely the random nature of $f_0$ and thus, known distribution of locations of it's optima in a series of independent runs, that allows identification of SB: an algorithm is said to suffer of SB if locations of the final best points found in a series (of a reasonable size) of independent runs minimising $f_0$ produced within a realistic budget of fitness evaluations deviates from uniform~\citep{Kononova2015}.

As the nature of SB appears to originate from the iterative application of a limited set of algorithm's operators, its identification is not straightforward~\citep{Kononova2020CEC,Kononova2020PPSN,techrxiv_bias}. Among a number of algorithms investigated in li\-te\-ra\-tu\-re so far, results on SB in DE show clear patterns in time~\citep{vStein2021_emergence}, dimensionality~\citep{Vermetten2021_anisotropy} and parameter space~\citep{bib:DEAnalysis2022}. Referring to the latter, Figure~\ref{fig:SB_toolbox} shows such results on the presence and type of SB identified for \texttt{DE/rand/1/bin-p100} for various values of $F$ and $C_r$ with $5$ variants of SDIS considered in this paper. These results support the picture in Figure~\ref{fig:diversity_matrix}: (i) \texttt{mir} and \texttt{tor} indeed stay constant in terms of diversity, which is what we would expect from an unbiased algorithm; (ii) the `most' biased variant (\texttt{sat}) also has the largest difference in terms of diversity, while and \texttt{uni} and \texttt{COTN} are somewhat in-between.

\textbf{Relation to theoretical results}.
The absence of structural bias in the case of \texttt{mir} and \texttt{tor} suggests that the assumption of uniformly distributed populations, used in the theoretical analysis, is not too restrictive, at least for these strategies. This aspect is reflected by the theoretically estimated value for the bound violation probability ($p_v=F/3$) which is very close to the empirically estimated value (Figure~\ref{fig:ViolProbEmpirical}) in the case of \texttt{mir} and \texttt{tor} strategies. On the other hand, in the case of \texttt{sat}, \texttt{uni} and \texttt{COTN}, the presence of structural bias induces a deviation from the uniformity assumption and consequently, the empirical violation probabilities are not so close to the theoretical values obtained based on the uniformity assumption.

\section{Experimentation on the BBOB suite}\label{sect:BBOBexperiments}

In order to assess whether the observed differences in SDIS have any impact on the performance of DE outside of $f_0$, we run a benchmarking study on COCO bench\-mar\-king suite~\citep{hansen2021coco}. We make use of the single-objective, noiseless version of BBOB function set, which contains 24 distinct functions~\citep{finck2010real}, each of which can be instantiated with different transformations, referred to as instances of these functions. The BBOB-functions are all defined using box-constraints of $[-5,5]^n$, where $n$ is the dimension of the problem, and are accessed here using IOHexperimenter~\citep{iohexperimenter_arxiv}.

Where possible, results on BBOB functions discussed in this section are contrasted with theoretical conclusions obtained under a number of simplifying assumptions made in Section~\ref{sect:theoretic}.

\subsection{Setup for the experimentation on the BBOB suite}\label{sect:BBOB_algs}
To run Differential Evolution algorithms on the BBOB-functions, we make use of the \texttt{pyade} package~\footnote{\url{github.com/xKuZz/pyade}}, which is a python-based implementation used in the field~\citep{pyade_usecase1, pyade_usecase2} that incorporates several variants of DE, including \texttt{SHADE} and \texttt{L-SHADE} employed for this study -- see Section~\ref{sect:DEs} for their description. 
We made some minor modifications to the base-code of \texttt{pyade}:
\begin{itemize}
    \item The initialisation of the first population was changed from using the normal distribution to uniform within the box-constrains to ensure a better coverage of the search space and avoid generating infeasible solutions in the initial population. This modification reverts the \texttt{pyade} implementation to the original DE specification \citep{storn1997differential}.
    \item The SDIS methods described in Section~\ref{sect:SDIS} were implemented, with the exception of \texttt{HVB}~\footnote{As a result of the overall design structure of \texttt{pyade}, the SDIS does not have access to the information about the point which generated the uncorrected trial solution. As such, \texttt{HVB} was not implemented here.}.
    \item The overall code structure was changed to allow for easier logging of infeasible solution generation and population diversity during the algorithm's run.
    \item Parameter settings for \texttt{SHADE} and \texttt{L-SHADE} are the defaults as set in \texttt{pyade}.
\end{itemize}

The code for both this modified version of \texttt{pyade} and the complete benchmarking setup on BBOB can be found on GitHub~\footnote{\url{github.com/Dvermetten/DE_TIOBR}}. In addition, the full reproduciblity steps can be found on the same GitHub page. For BBOB-based experiments described in this section, we make use of both the 5- and 30-dimensional version of the problems, using instances 1-5, and perform 5 runs per instance. We make use of a total budget of $10000\cdot n$ fitness evaluations.

\subsection{Analysis of performance benchmarking}\label{sect:BBOB_perf}
\begin{figure}
    \centering
    \includegraphics[width=\textwidth,trim=10mm 73mm 6mm 3mm,clip]{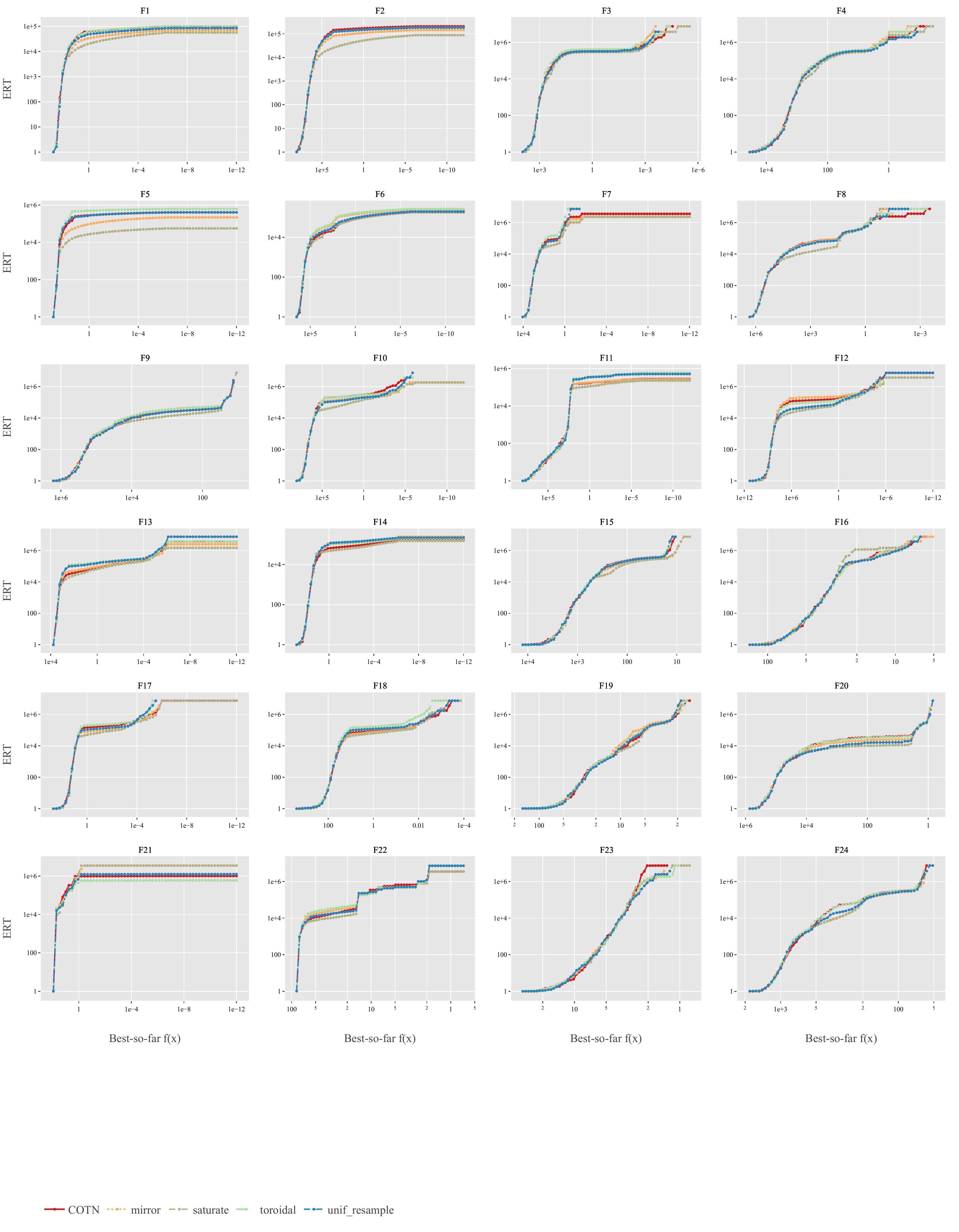}
    \caption{Overview of the ERT for \texttt{L-SHADE} with 5 SDIS methods on the BBOB-functions in 30D, 5 instances 5 runs each. Colours correspond to those in Figure~\ref{fig:ecdf}. }
    \label{fig:30D_overview}
\end{figure}

\begin{figure}
    \centering
    \includegraphics[width=\textwidth]{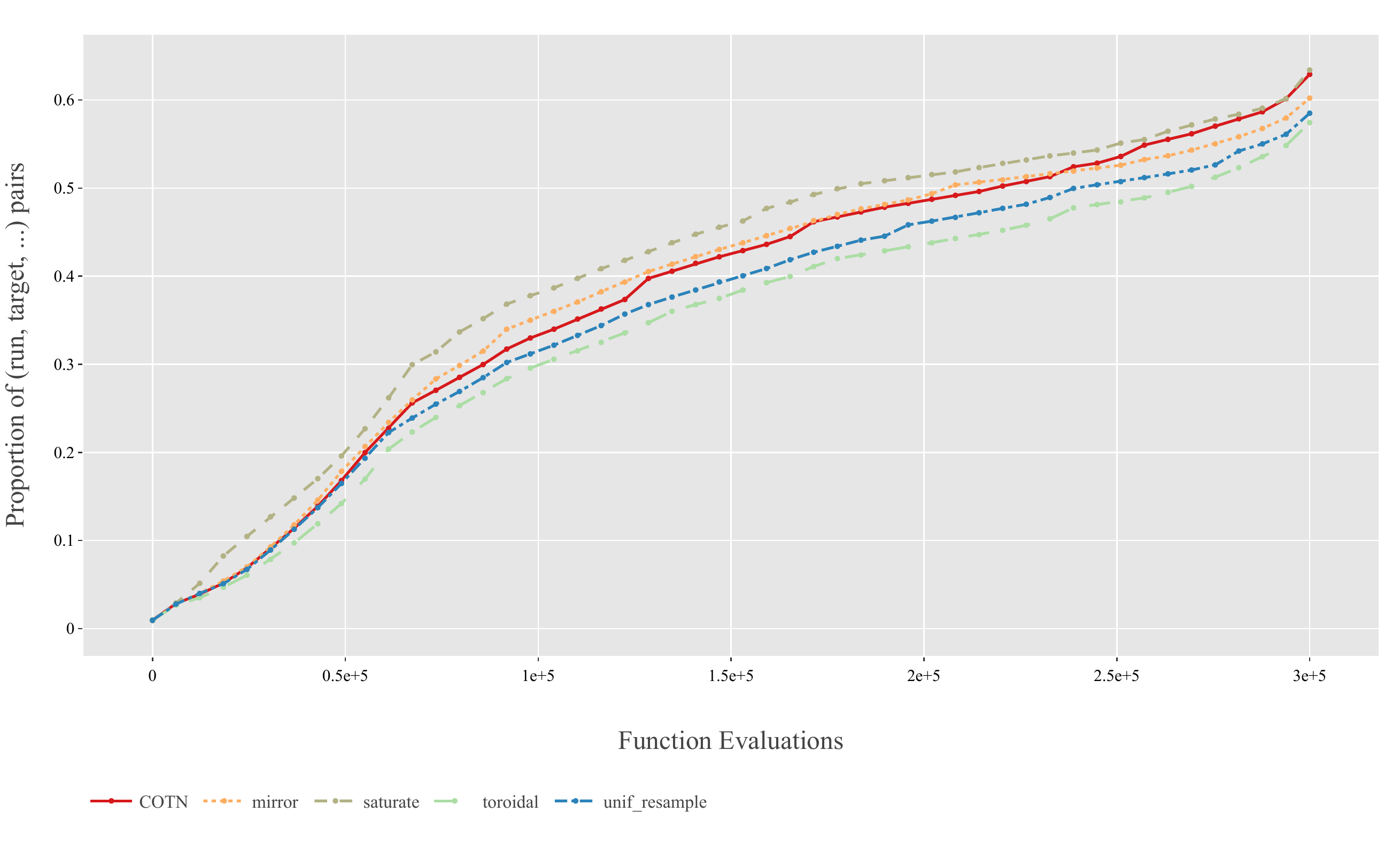}
    \caption{ECDF curves of \texttt{L-SHADE} with 5 different SDIS, aggregated across all 30-dimensional BBOB functions with 5 instances and 5 runs each, using 51 logarithmically spaced fitness targets between $10^2$ and $10^{-8}$.}\label{fig:ecdf}
\end{figure}

The most common way of analyzing the performance of an iterative optimisation heuristic is by considering the \emph{Expected Running Time (ERT)}, defined as follows.

Given an algorithm $A$, function instance $F$ and target $\phi$ and run-number $i$, we define the hitting time $t_{i}(A, F, \phi)$ to be the number of function evaluations that the algorithm used in its $i^{th}$ run before a solution of quality of at least $\phi$ was evaluated.
Based on this, we can then define the Expected running time over a set of $I$ runs and $J$ problem instances as follows:
$$\text{ERT}(v, \mathcal{F}, \phi) = \frac{\sum_{i=1}^I\sum_{j=1}^J \min\{ t_{i}(v, f^{(j)}, \phi), B\}}{\sum_{i=1}^I\sum_{j=1}^J \mathds{1}\{t_{i}(v, f^{(j)}, \phi) <\infty\}}$$
\noindent where $B$ denotes the computational budget with respect to the number of function evaluations.

In Figure~\ref{fig:30D_overview}~\footnote{Showing the fraction of runs shown in the horizontal axis which reach the specified targets on their respective function/instance within the number of function evaluations shown in the vertical axis.}, we show the ERT achieved by \texttt{L-SHADE} algorithm (see Section~\ref{sect:DEs}) with 5 different SDIS variants on the 30-dimensional BBOB functions, per function. From this figure, we can clearly see that for a lot of functions, the effect of SDIS on overall performance is relatively minor. However, there are some outliers, in particular $f_5$ (Linear slope). Since for $f_5$ the optimum is located on the bound in each coordinate, the SDIS method will have a significant impact on the ability of DE to converge. When zooming in on this function, we clearly see that \texttt{sat} outperforms all other SDIS, followed by \texttt{mir}, and then the other SDISs. This performance ordering is also present for some other unimodal functions such as $f_1$ (Sphere) and $f_2$ (Ellipsoidal). 

When aggregating the performance across all functions, which we can do using the ECDF as seen in Figure~\ref{fig:ecdf}, we indeed observe clear differences in overall performance between the different \texttt{L-SHADE} versions. While the ECDF is obviously impacted by the linear slope function, even when removing this from consideration, the ordering of SDIS-variants remains the same, and matches the order of least disruptiveness on the search as discussed in Section~\ref{sect:f0_exp}. In addition to running \texttt{L-SHADE} on the 30-dimensional BBOB-functions, we have also collected data on the 5D version, for both \texttt{L-SHADE} and \texttt{SHADE}. All figures are made available on figshare~\citep{figshare_tiobr}, while the extended dataset is available directly on the IOHanalyzer GUI~\footnote{Available as  dataset source `TIOBR\_DE' on \url{iohanalyzer.liacs.nl}}.

Overall, we can see that the choice of SDIS indeed has an impact on the final performance of the algorithm, and although this impact is not present on all functions, it is a clear indication that SDIS should be considered a part of the  specification of the algorithm. Furthermore, the fact that the differences are not equally present on all functions indicates that methods for per-instance algorithm configuration could benefit from including the SDIS in their search space.

\subsection{Cosine similarity distributions on BBOB}
\begin{figure} 
    \centering
    \includegraphics[width=0.95\textwidth,trim=0mm 0mm 0mm 0mm,clip]{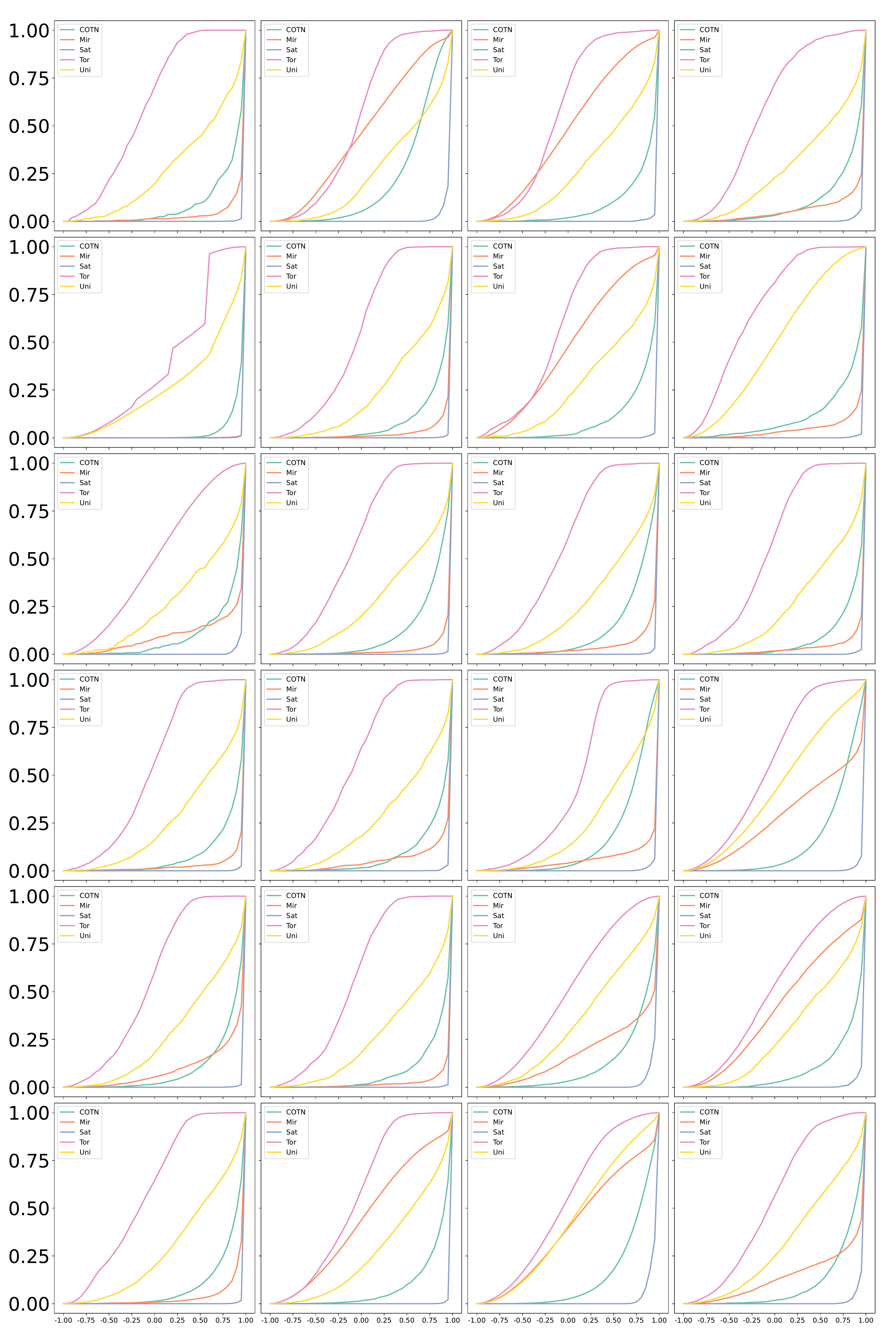}
    \caption{ECDF curves of cosine similarity values of corrected infeasible solutions generated during 5 independent full-budget runs of \texttt{L-SHADE} on 24 BBOB functions for 5 different SDIS variants (all instance 1, one subfigure per function, left to right top to bottom).} \label{fig:BBOB_CS}
\end{figure}

The distributions of the cosine similarity values corresponding to five SDIS variants (\texttt{COTN}, \texttt{mir}, \texttt{sat}, \texttt{tor} and \texttt{uni}) have been generated for 24 BBOB functions both when L-SHADE is used (Figure~\ref{fig:BBOB_CS}) and when SHADE is used - omitted for space limitation, see extensive set of graphical results in \citep{figshare_tiobr}. 

The patterns of behaviour are only slightly different between the two methods. When comparing the ECDFs illustrated in Figure~\ref{fig:BBOB_CS} with those obtained by applying \texttt{DE/rand/1/$\star$} on $f_0$ (Figure~\ref{fig:ECDF_CS}), we can see that the curves have shifted as a result of the different objective functions landscapes, but the global ordering is preserved almost everywhere with \texttt{sat} SDIS characterised by the largest CS values and \texttt{tor} by the smallest ones. 

The presence of different patterns for different functions, support the idea that SDIS should indeed be considered a separate algorithmic component.

\textbf{Relation to theoretical results}. When compared with theoretical insights, results obtained for BBOB confirm the fact \texttt{sat} is more likely to preserve the search direction than the other SDISs. When comparing \texttt{mir} and \texttt{tor}, even if it has been proved only under some particular assumptions that \texttt{mir} preserves more of the search direction than \texttt{tor}, the experiments show that this might be true in most of BBOB functions. 

\subsection{Analysis of final percentage of infeasible solutions}\label{sect:pois}
\begin{figure}
    \centering
    \subfigure[5D]{\includegraphics[height=0.23\textwidth,trim=11mm 13mm 97mm 12mm,clip]{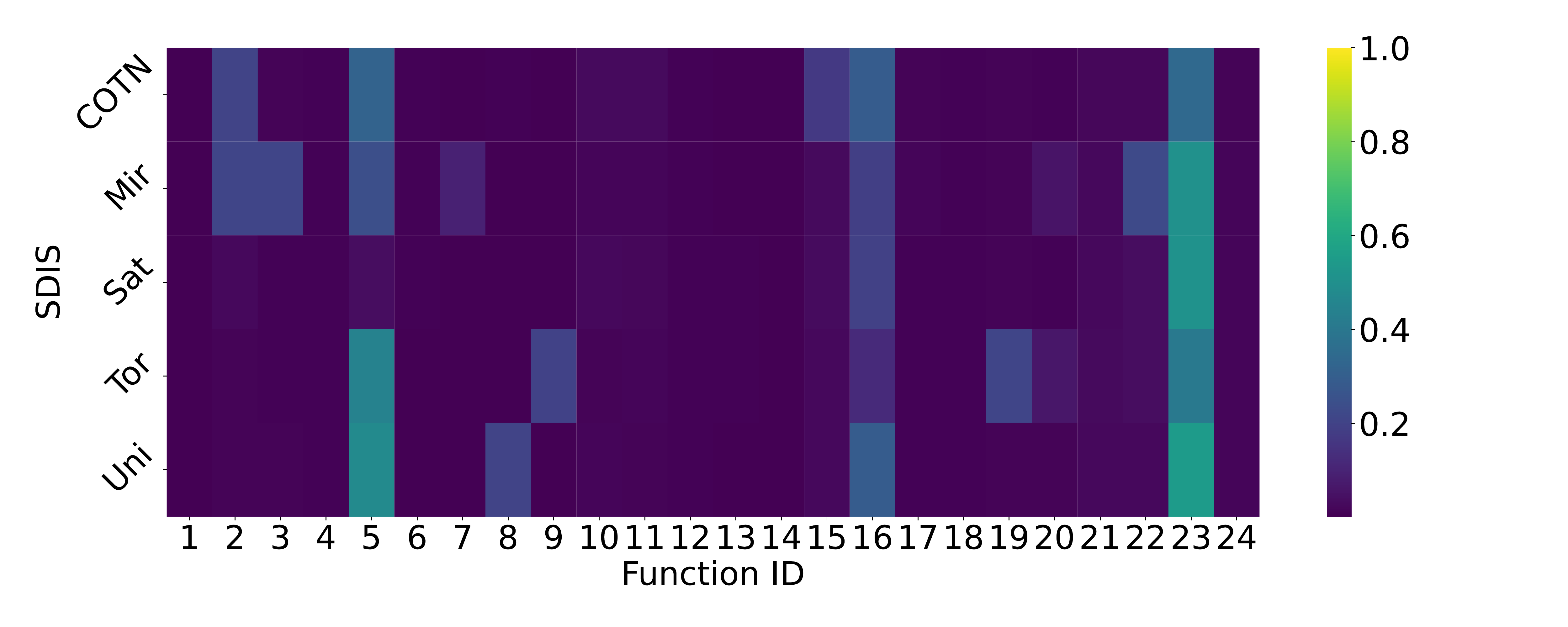}\label{fig:final_pois_5d_lshade}}
    \subfigure[30D]{\includegraphics[height=0.23\textwidth,trim=82mm 13mm 97mm 12mm,clip]{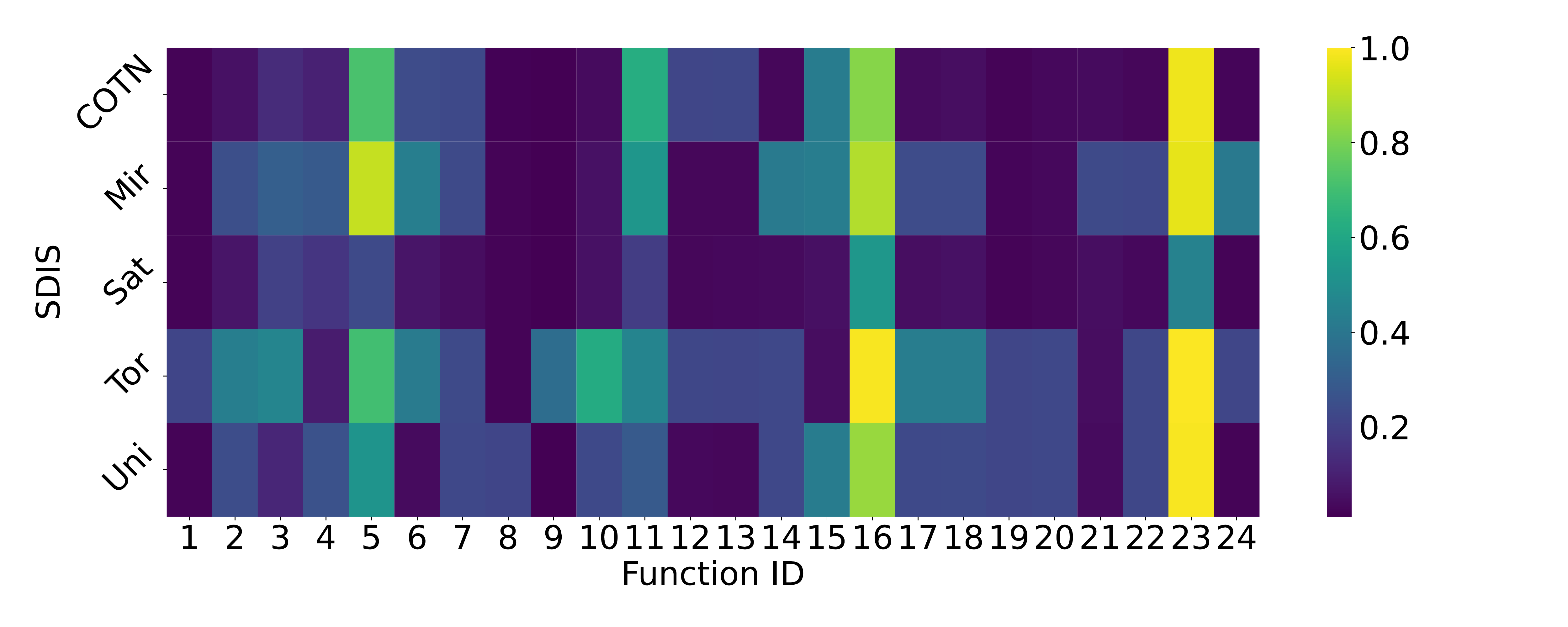}\label{fig:final_pois_30d_lshade}}
    \subfigure{\includegraphics[height=0.23\textwidth,trim=430mm 13mm 47mm 12mm,clip]{IMG/Final_POIS_30D_lshade_I1.pdf}\label{fig:final_pois_30d_lshade}}
    \caption{Final percentages of infeasible solutions generated in full-budget runs of \texttt{L-SHADE} with different SDIS variants on 24 functions of the BBOB suite (instance 1 only) averaged over 5 runs per variant per function. Vertical labels apply to both plots.}\label{fig:final_pois_lshade}
\end{figure}

In addition to looking at the performance of DE with different SDIS, we can also zoom in more on the related behaviour of the algorithm. Since SDIS are only activated when solutions outside of the bounds are generated, we can consider the Percentage Of Infeasible Solutions (POIS) generated throughout the optimisation process. Values of final POIS (i.e. number of infeasible solutions generated by the end of the run to the total fitness evaluation budget) from the L-SHADE algorithm produced on BBOB functions is shown in Figure~\ref{fig:final_pois_lshade}, from which we can clearly see that for the higher dimensionality ($n=30$), the fraction of infeasible solutions generated during the search is significantly higher than for lower dimensionality ($n=5$). This can be explained by considering we count a solution to be infeasible when one or more components are outside their boundaries, which is more likely to occur when there are more components in a solution (see also Section~\ref{sect:infeasibility}). 

Next to the obvious differences between dimensionalities, we note the large differences between the individual functions. Particularly of the note are that some functions seem to end up with a POIS of close to $1$ (e.g. $f_{23}$ in 30D), indicating that for the whole duration of optimisation runs the algorithm rarely manages to create a point \textit{inside} the domain when relying purely on the `standard' operators of DE. \textit{This highlights the importance of SDIS}, as in these cases the method of correcting infeasible solutions will by necessity influence the position of almost all points used during the optimisation process.

Finally, in Figure~\ref{fig:final_pois_lshade}, we note that there are some differences in POIS between SDIS: \texttt{sat} variant shows a clearly lower amount of infeasible points generated, which could be an indication that it has a \textit{lowest disruptiveness} as discussed in Section~\ref{sect:f0_exp}. 
This remark might look to be in contradiction with the theoretical results on bound violation probability (Section~\ref{sect:ViolProb}) and those experimentally obtained for $f_0$ (Section~\ref{sect:POIS_f0}). A possible explanation is that most of the infeasible elements are generated based on those already on the bounds and in the absence of selection pressure, the set of elements with components on the boundary is self-sustained. In the presence of selection pressure, if the optima are not on the bounds then the population elements are driven away from the boundary leading to a decrease in the POIS. On the other hand, if the optima have components on the boundary the search process is stopped as soon as the boundary is reached. 

In addition, the fact that several functions indicate a relatively large POIS for almost all SDIS variants indicates that these might be more sensitive to the selection of this operator, since it will be applied almost every time a candidate solution is generated.

\subsection{Analysis of windowed percentage of infeasible solutions and population diversity}
\begin{figure}
    \centering
    \subfigure[Windowed POIS, \texttt{SHADE}, 5D ]{\includegraphics[width=0.495\textwidth,trim=7mm 15mm 7mm 7mm,clip]{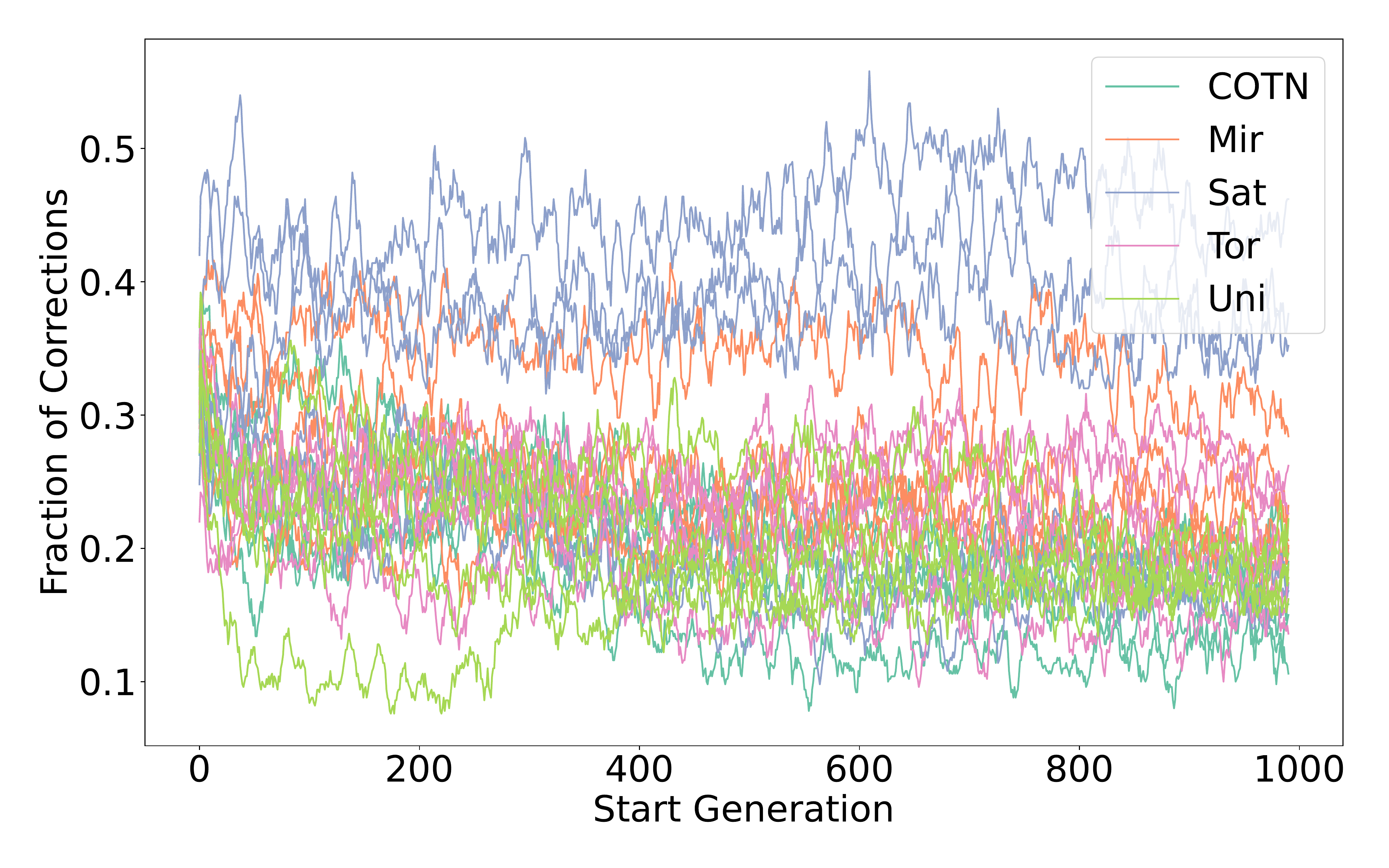}\label{fig:F23_D5_Shade_POIS}}
    \subfigure[Population diversity, \texttt{SHADE}, 5D]{\includegraphics[width=0.495\textwidth,trim=7mm 15mm 7mm 7mm,clip]{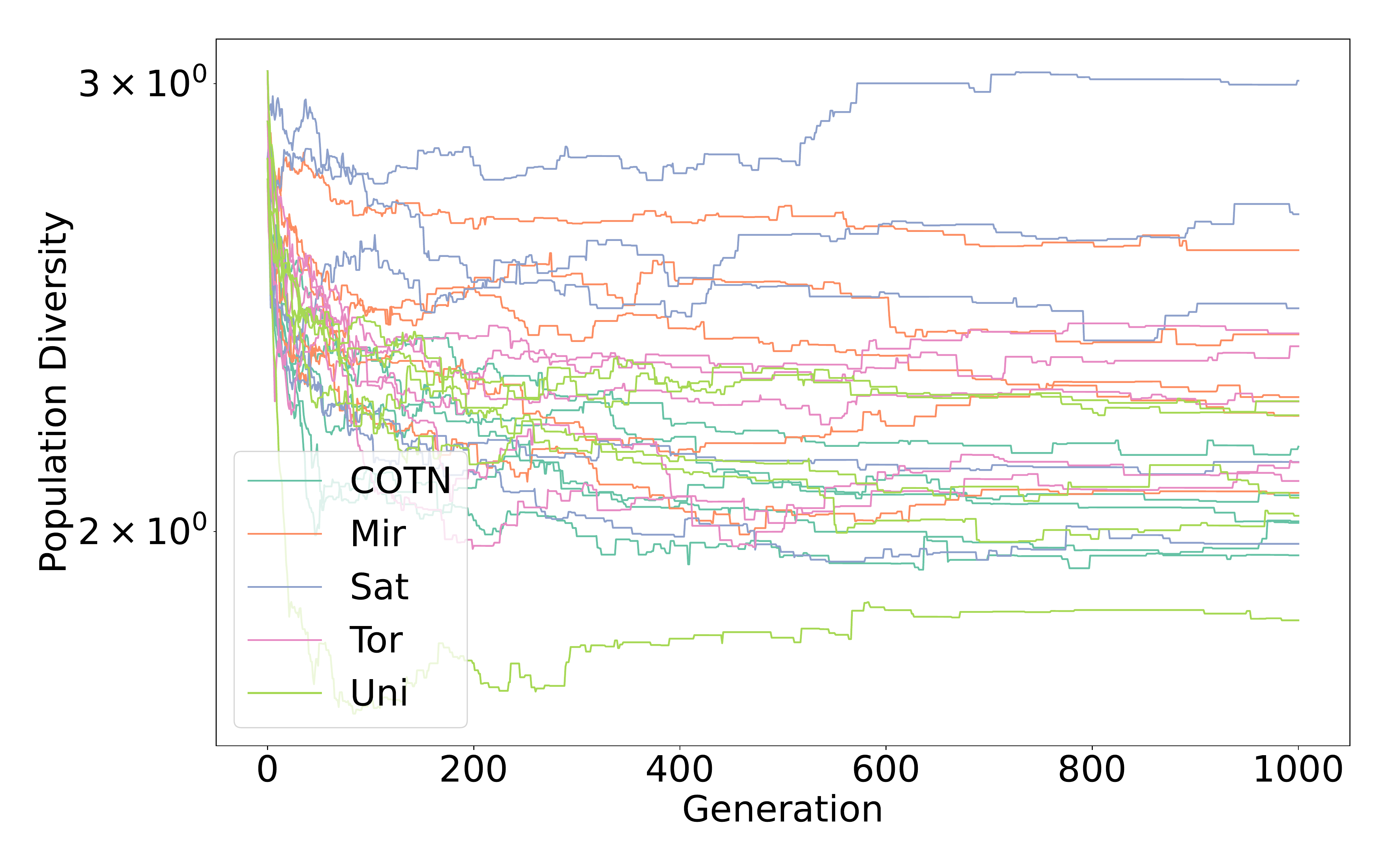}\label{fig:F23_D5_Shade_div}}
    \subfigure[Windowed POIS, \texttt{L-SHADE}, 5D ]{\includegraphics[width=0.495\textwidth,trim=7mm 15mm 7mm 7mm,clip]{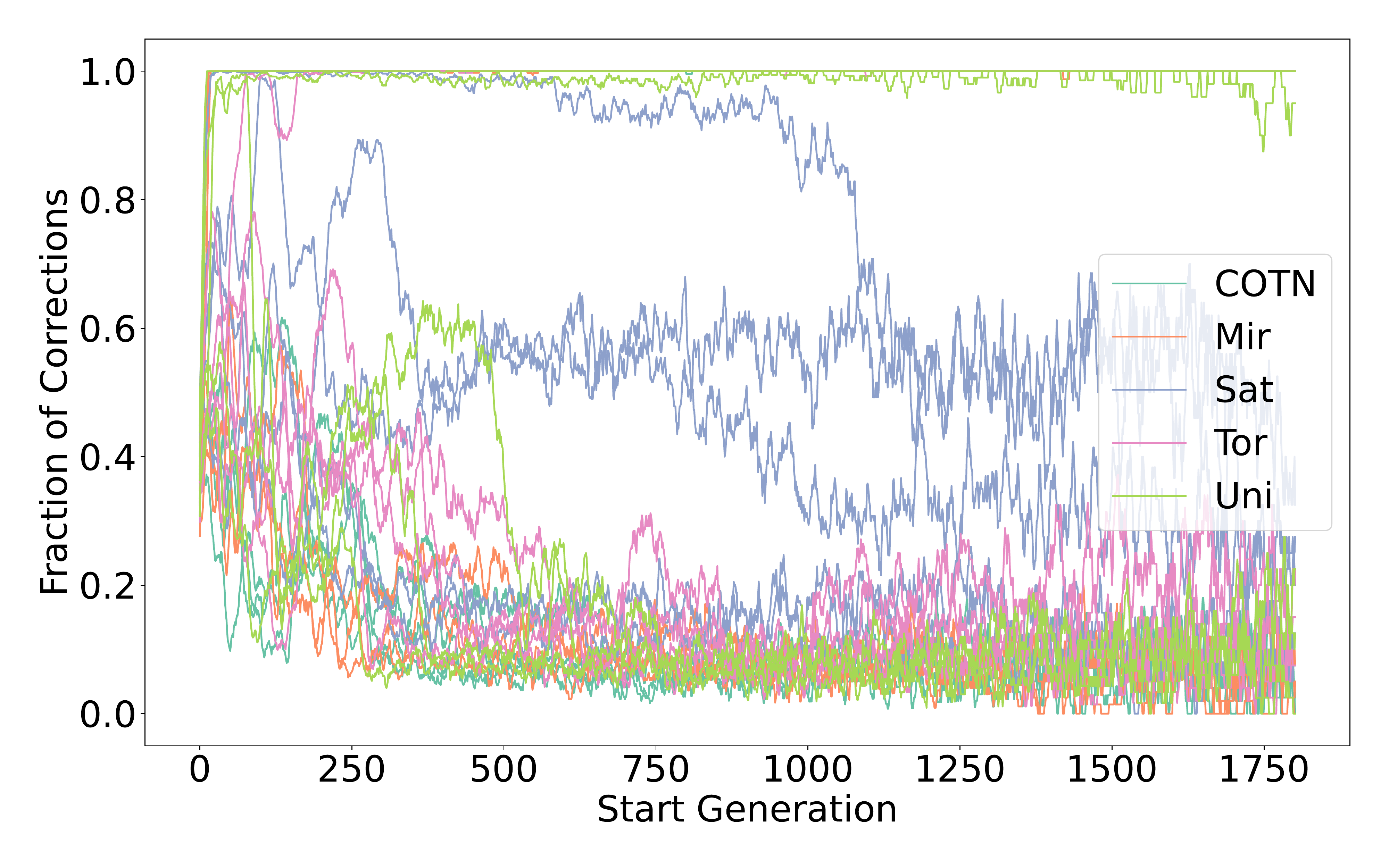}\label{fig:F23_D5_LShade_POIS}}
    \subfigure[Population diversity, \texttt{L-SHADE}, 5D]{\includegraphics[width=0.495\textwidth,trim=7mm 15mm 7mm 7mm,clip]{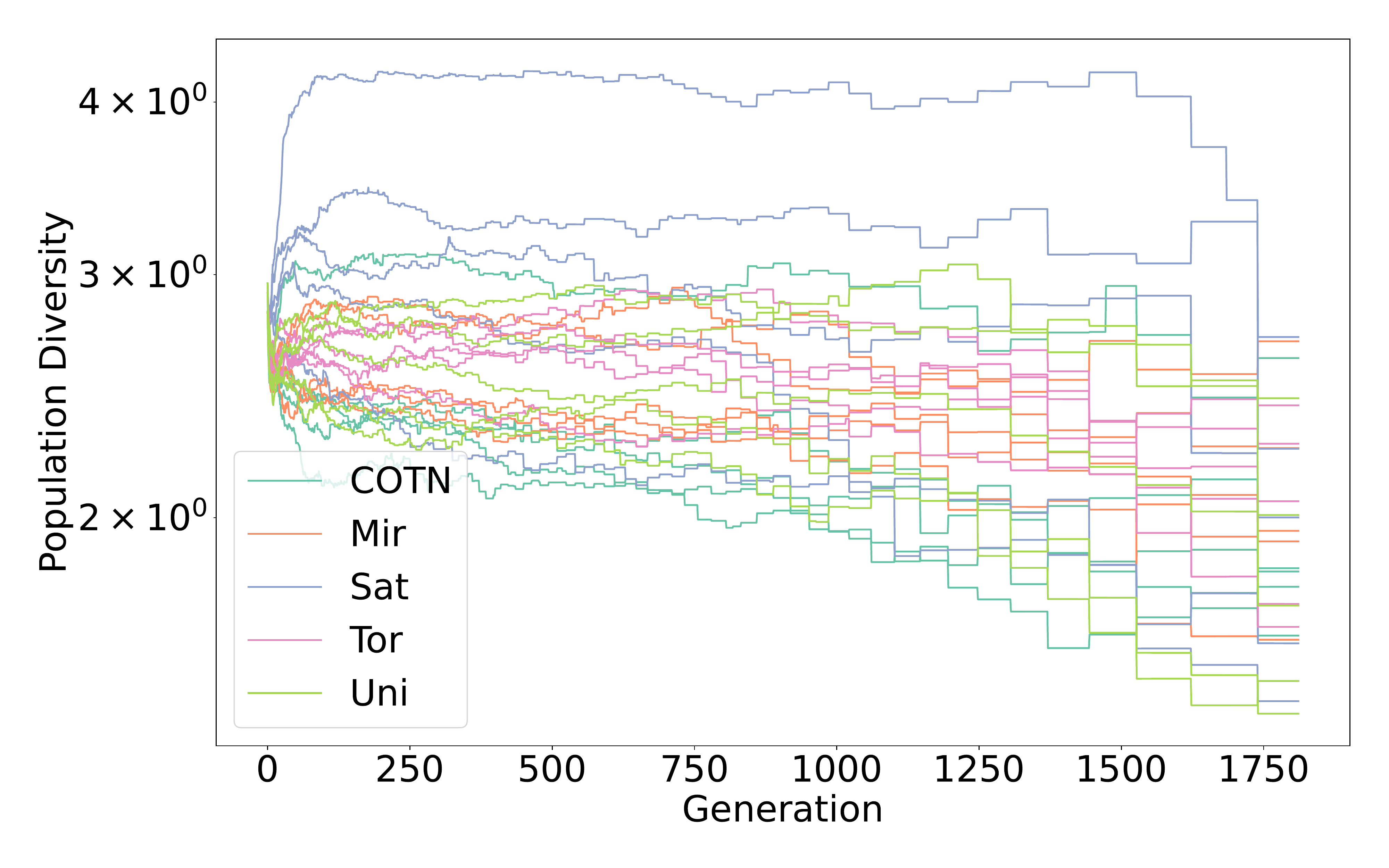}\label{fig:F23_D5_LShade_div}}
    \subfigure[Windowed POIS, \texttt{SHADE}, 30D ]{\includegraphics[width=0.495\textwidth,trim=7mm 15mm 7mm 7mm,clip]{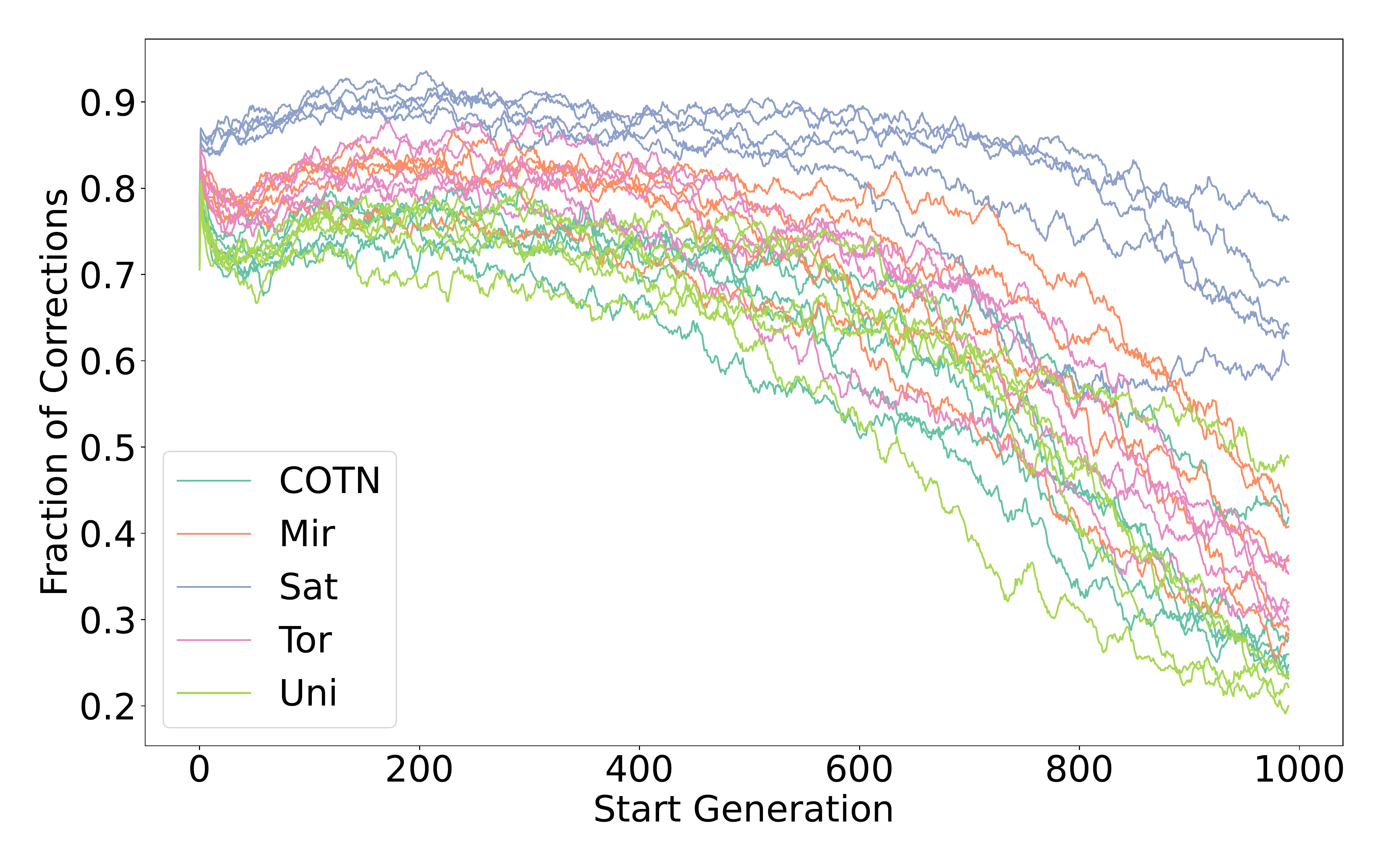}\label{fig:F23_D30_Shade_POIS}}
    \subfigure[Population diversity, \texttt{SHADE}, 30D]{\includegraphics[width=0.495\textwidth,trim=7mm 15mm 7mm 7mm,clip]{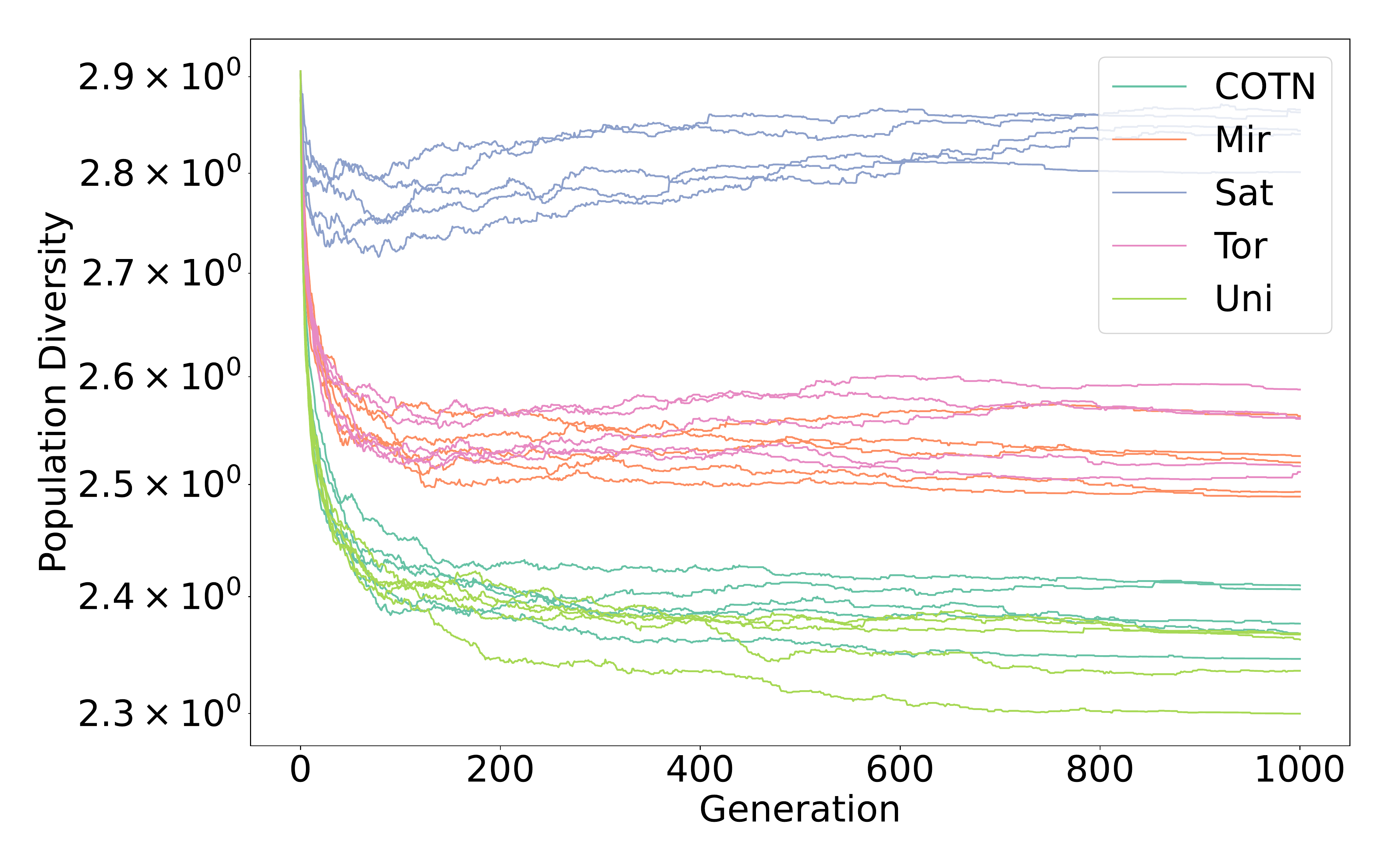}\label{fig:F23_D30_Shade_div}}
    \subfigure[Windowed POIS, \texttt{L-SHADE}, 30D ]{\includegraphics[width=0.495\textwidth,trim=7mm 15mm 7mm 7mm,clip]{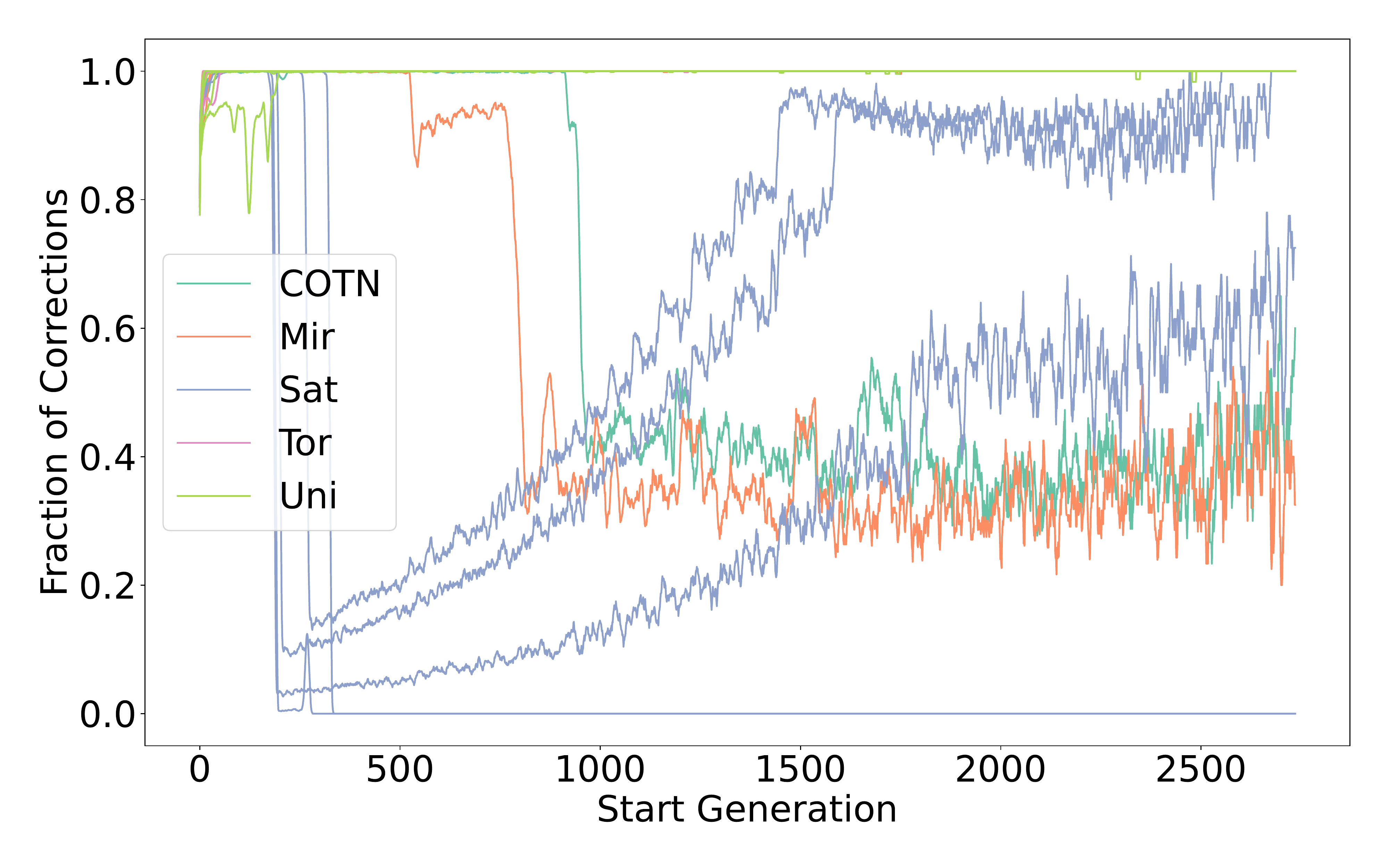}\label{fig:F23_D30_LShade_POIS}}
    \subfigure[Population diversity, \texttt{L-SHADE}, 30D]{\includegraphics[width=0.495\textwidth,trim=7mm 15mm 7mm 7mm,clip]{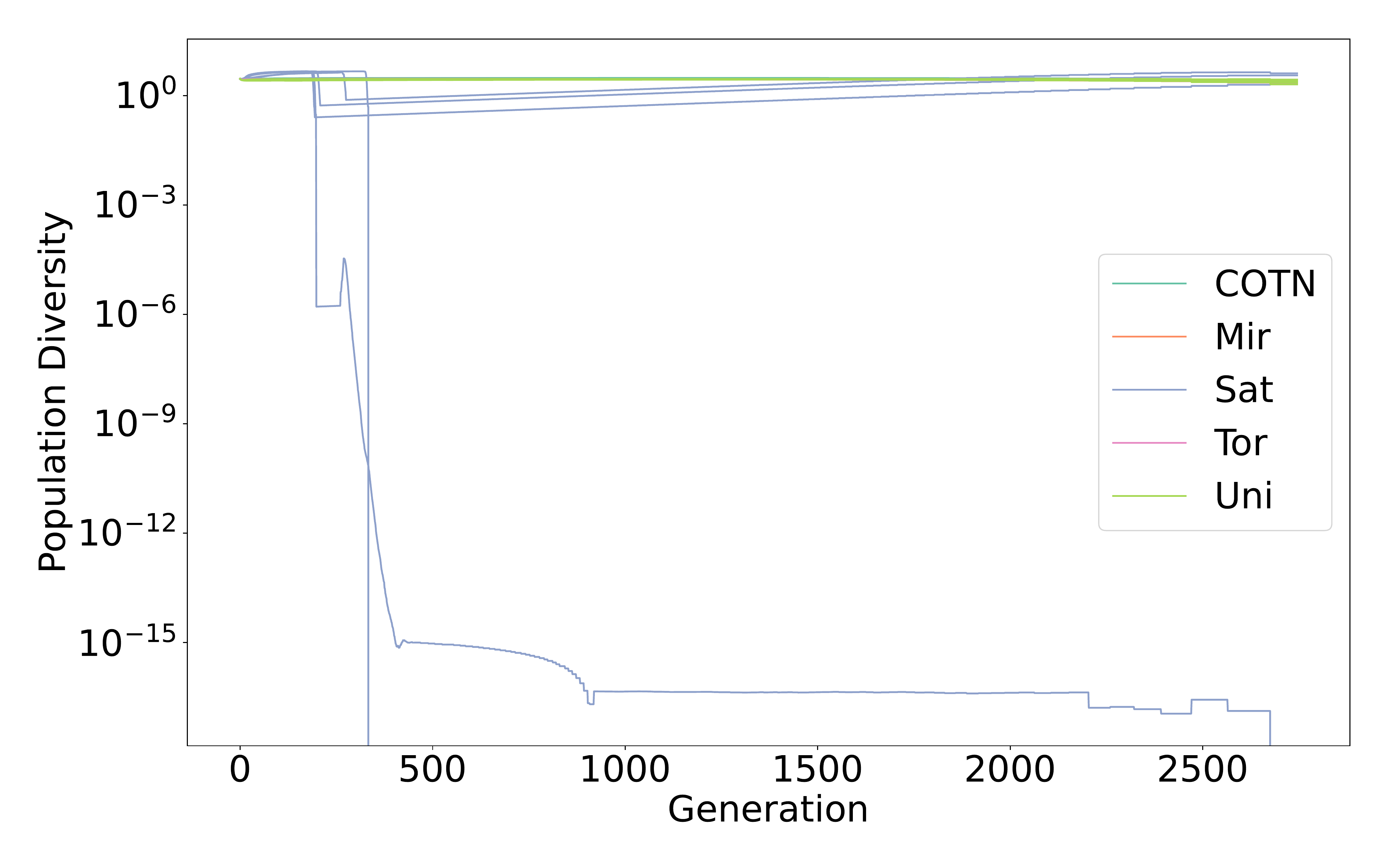}\label{fig:F23_D30_LShade_div}}
    \caption{Evolution of windowed POIS (left column) and population diversity (right column) over generations for different SDIS methods on 5 runs of $f_{23}$, instance 1. None of the runs reached the optimum.}\label{fig:F23_LShade_POIS_div}
\end{figure}

\begin{figure}
    \centering
    \subfigure[Windowed POIS, \texttt{SHADE}, 5D ]{\includegraphics[width=0.495\textwidth,trim=7mm 15mm 7mm 7mm,clip]{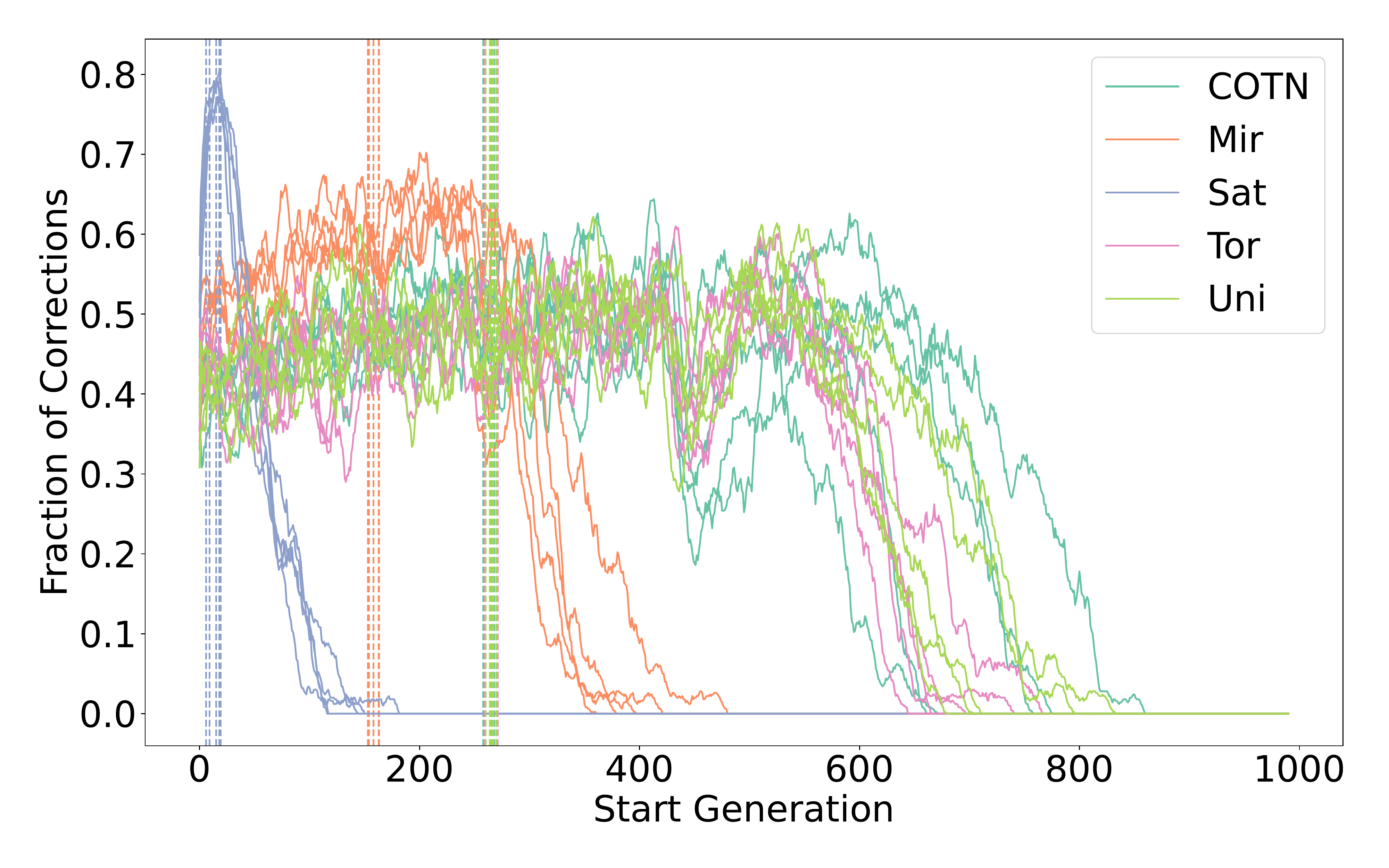}\label{fig:F5_5D_Shade_POIS}}
    \subfigure[Population diversity, \texttt{SHADE}, 5D]{\includegraphics[width=0.495\textwidth,trim=7mm 15mm 7mm 7mm,clip]{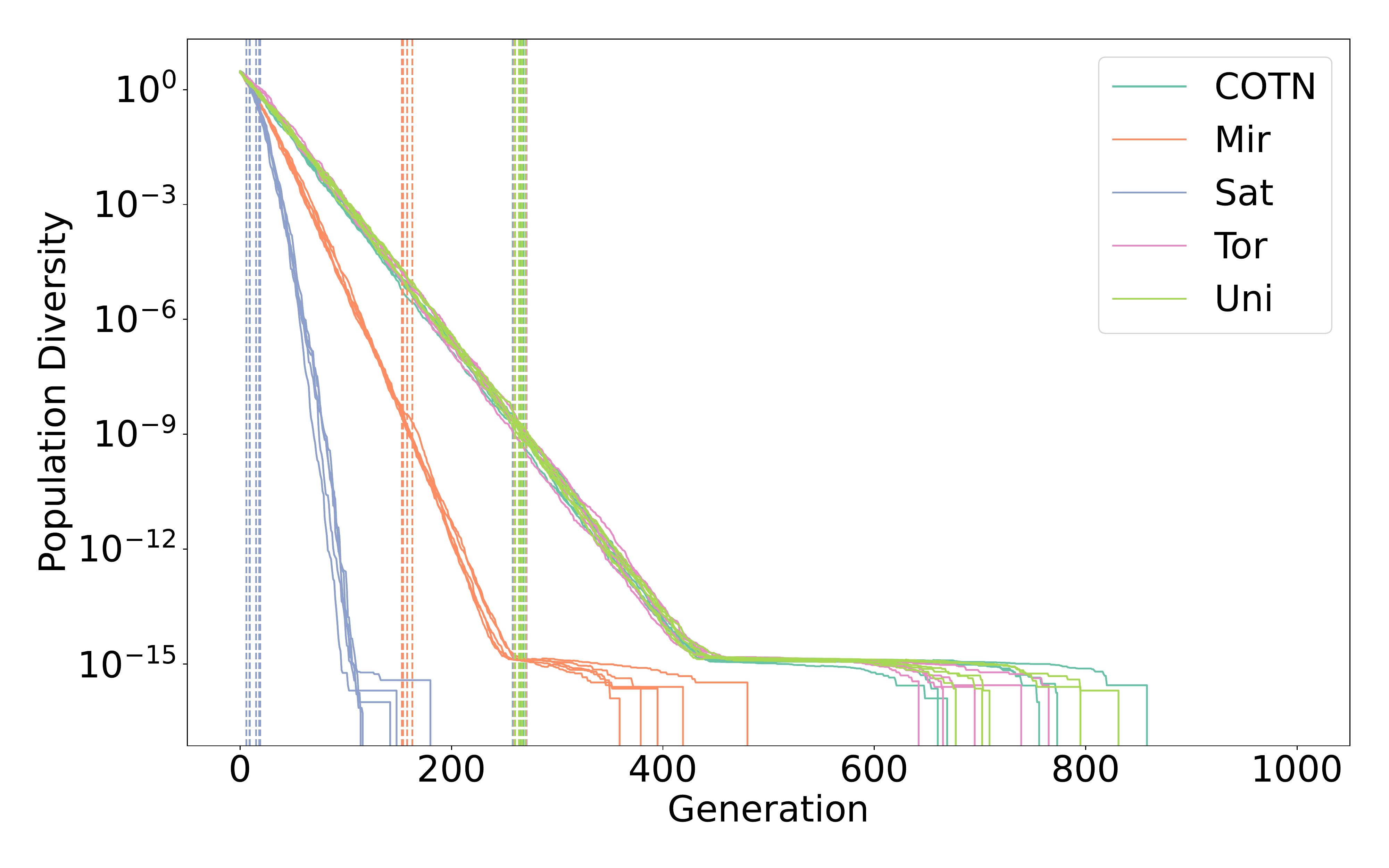}\label{fig:F5_5D_Shade_div}}
    \subfigure[Windowed POIS, \texttt{L-SHADE}, 5D ]{\includegraphics[width=0.495\textwidth,trim=7mm 15mm 7mm 7mm,clip]{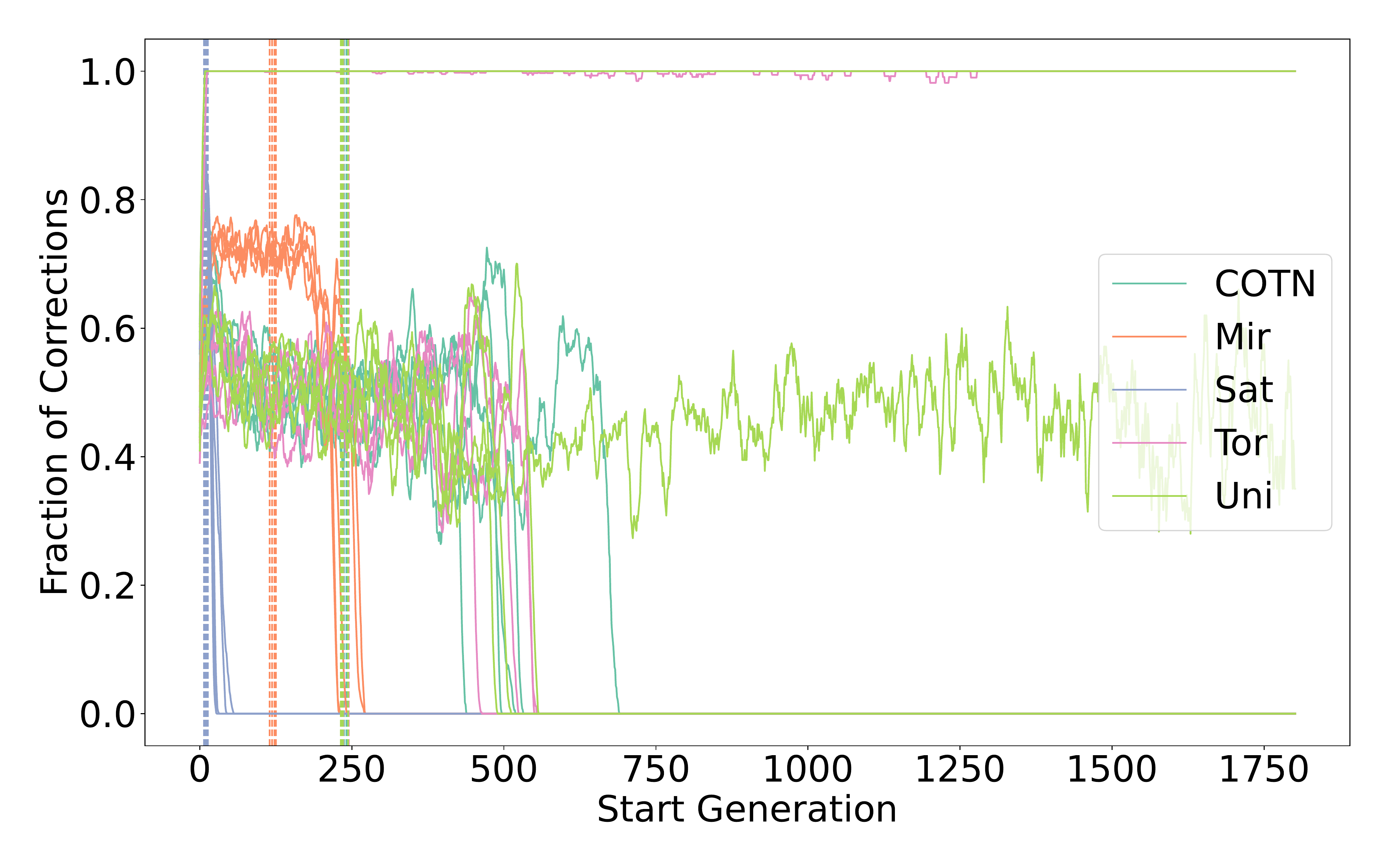}\label{fig:F5_5D_LShade_POIS}}
    \subfigure[Population diversity, \texttt{L-SHADE}, 5D]{\includegraphics[width=0.495\textwidth,trim=7mm 15mm 7mm 7mm,clip]{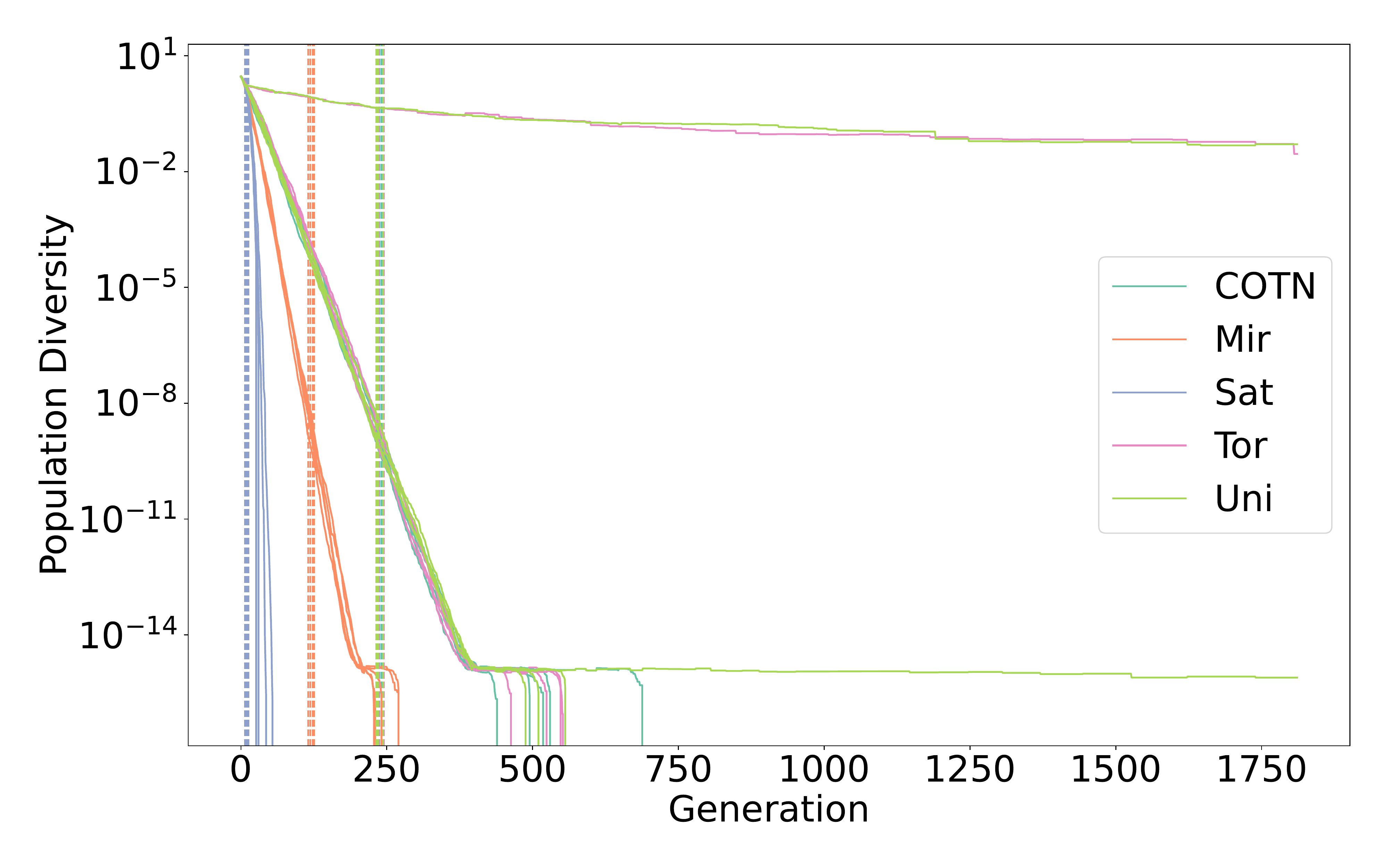}\label{fig:F5_5D_LShade_div}}
    \subfigure[Windowed POIS, \texttt{SHADE}, 30D]{\includegraphics[width=0.495\textwidth,trim=7mm 15mm 7mm 7mm,clip]{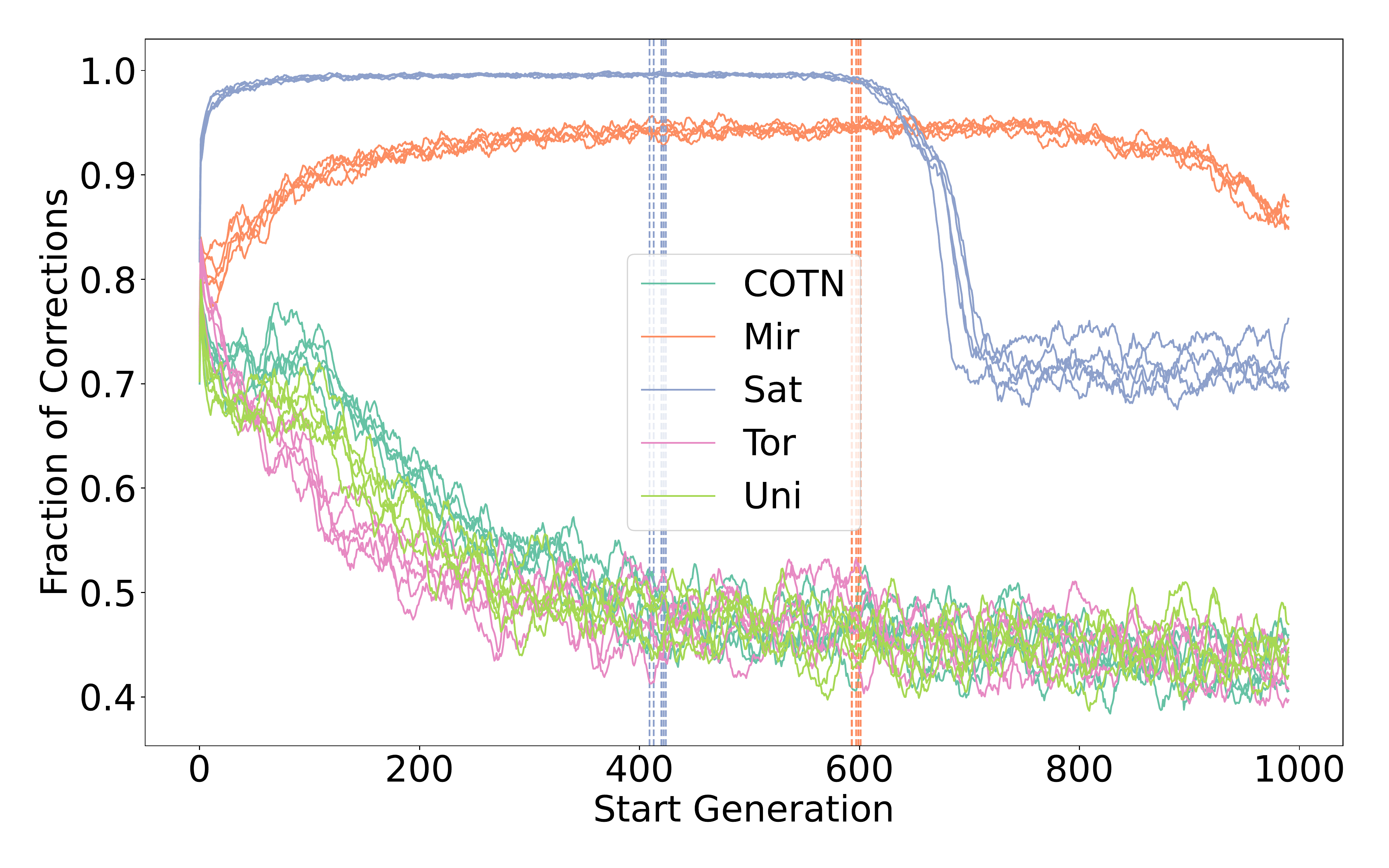}\label{fig:F5_30D_Shade_POIS}}
    \subfigure[Population diversity, \texttt{SHADE}, 30D]{\includegraphics[width=0.495\textwidth,trim=7mm 15mm 7mm 7mm,clip]{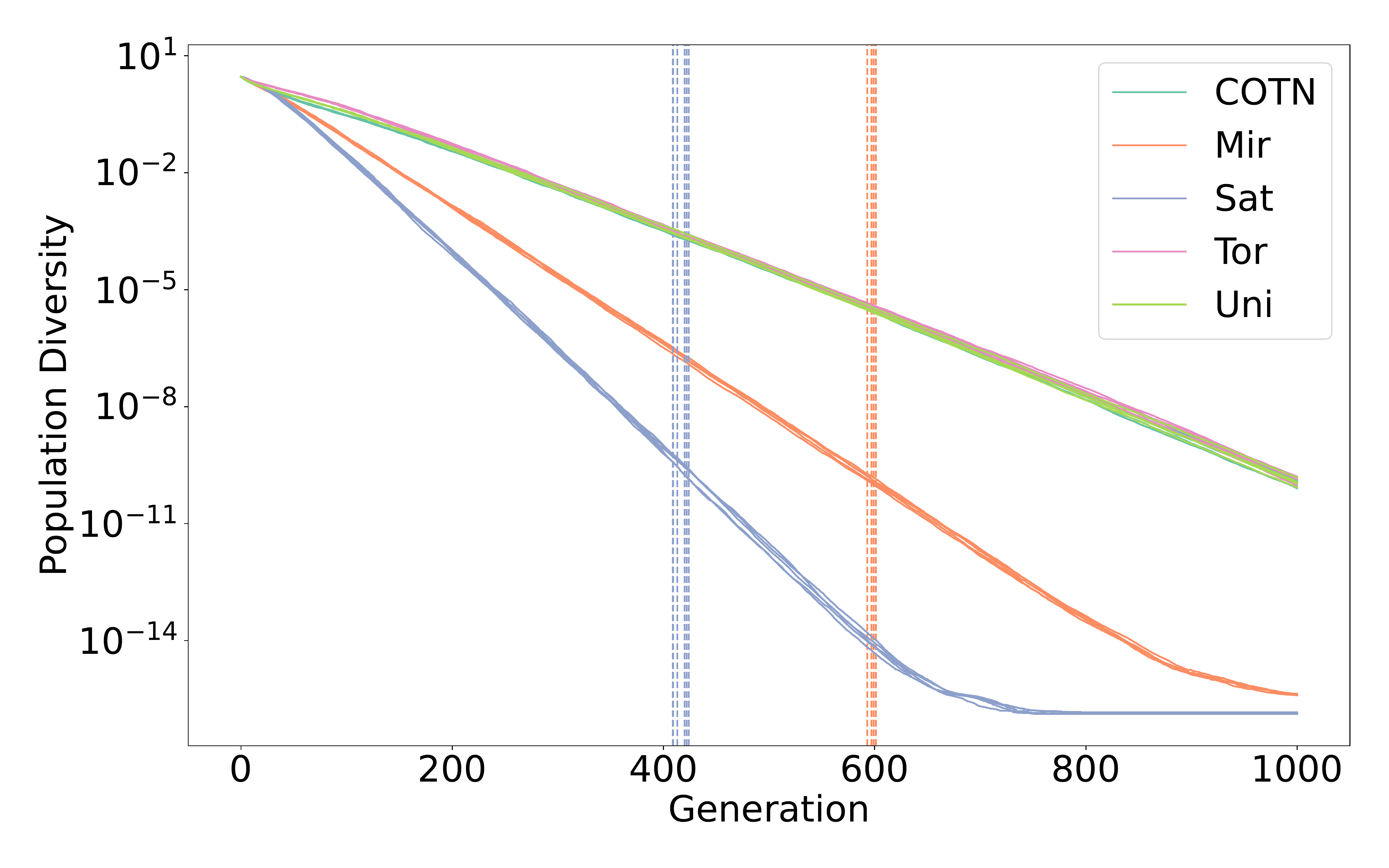}\label{fig:F5_30D_Shade_div}}
    \subfigure[Windowed POIS, \texttt{L-SHADE}, 30D]{\includegraphics[width=0.495\textwidth,trim=7mm 15mm 7mm 7mm,clip]{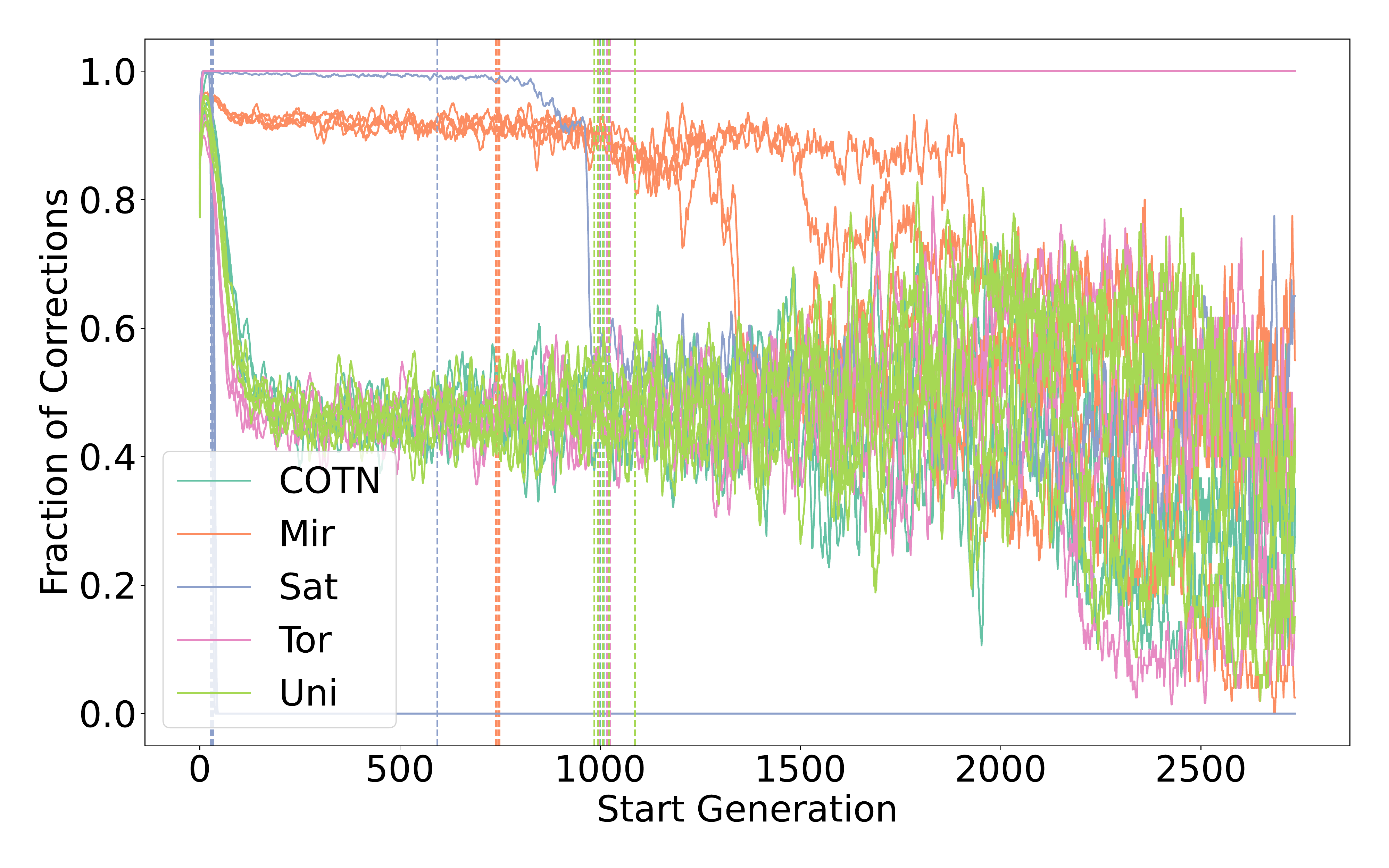}\label{fig:F5_30D_LShade_POIS}}
    \subfigure[Population diversity, \texttt{L-SHADE}, 30D]{\includegraphics[width=0.495\textwidth,trim=7mm 15mm 7mm 7mm,clip]{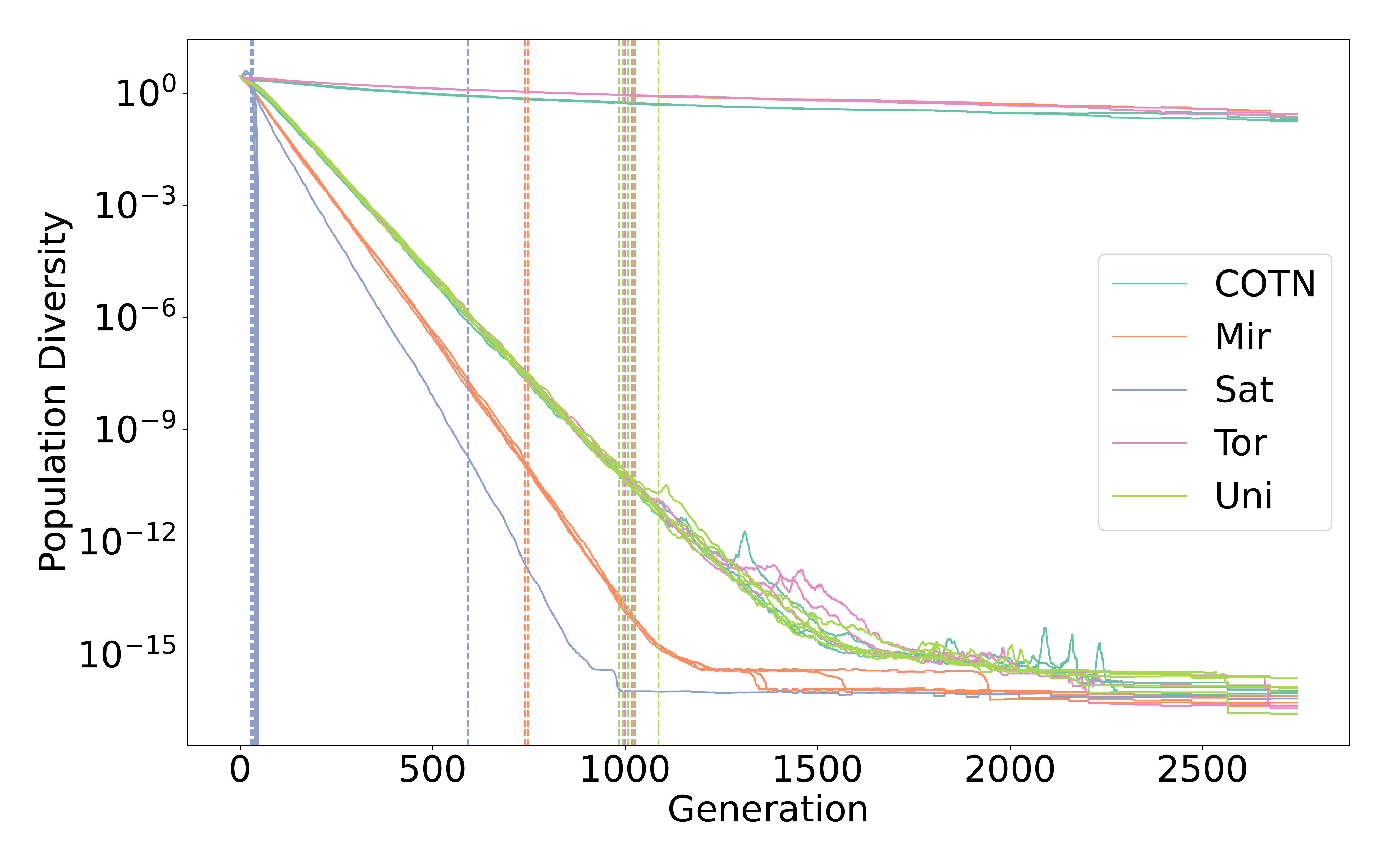}\label{fig:F5_30D_LShade_div}}
    \caption{Evolution of windowed POIS (left column) and population diversity (right column) over generations for different SDIS methods on 5 runs of $f_{5}$, instance 1. Coloured vertical lines show times when optimum was found at least once in a run.}\label{fig:F5_LShade_POIS_div}
\end{figure}

While final POIS discussed in Section~\ref{sect:pois} is very useful for giving a total number of infeasible solutions, we can get a more detailed overview of the number of infeasible solutions generated by taking a sliding-window approach: for each generation, we can calculate the fraction of infeasible solutions generated \textit{thus far}~\footnote{Equivalent to the fraction of solutions which have had a SDIS applied.} and visualise the moving average from $10$ generations. We can use a similar approach for population diversity, which allows us to check for possible correlations between on-going POIS and diversity. Figures~\ref{fig:F23_LShade_POIS_div} and~\ref{fig:F5_LShade_POIS_div} visualise these two kinds of plots for functions $f_5$ and $f_{23}$, respectively. 

In the case of the multi-modal $f_{23}$ function (Figure~\ref{fig:F23_LShade_POIS_div}) the behaviour of SHADE is in accordance with some of the theoretical insights concerning the amount of infeasible elements (largest amount in the case of \texttt{sat}) and population diversity evolution (largest diversity in the case of \texttt{sat}, smallest in the case of \texttt{uni} and intermediate in the case of \texttt{mir} and \texttt{tor}). Meanwhile, in particular for $f_5$, we can observe in Figure~\ref{fig:F5_LShade_POIS_div} a clear pattern between the different SDIS variants: \texttt{sat} initially has a much larger POIS, which would indeed be beneficial, as the optimum of this function is located on the bounds. The other SDISs don't have this direct benefit of \texttt{sat}, and need to generate a point on the boundary exactly, without the correction moving it away from the optimum. For the 5-dimensional version of the function, this is still achievable for most SDISs, with the lower disruptiveness of \texttt{mir} allowing it to find the optimum relatively easily, but for 30 dimensions the most disruptive SDIS have runs which fail to converge to a single solution, explaining their poor performance seen in Figure~\ref{fig:30D_overview}. 

We also notice that the differences between SHADE and L-SHADE are rather significant, especially on the higher dimensions -- detailed reasons for this require further investigation. 
\section{Conclusions}\label{sec:conclusion}

\begin{table}[!tb]
    \centering
    \caption{Ranking of SDIS based on theoretical (obtained where possible) and empirical results with respect to the amount of infeasible components, search direction disruptiveness measured using cosine (dis)similarity, population diversity measured using the component-wise variance and performance measured using ERT. }\label{tab:SDISdiscussion} 
    \begin{tabular}{cp{0.04\linewidth}p{0.04\linewidth}p{0.04\linewidth}p{0.04\linewidth}p{0.04\linewidth}p{0.04\linewidth}p{0.125\linewidth}}
    \toprule
     \multirow{3}{*}{\textbf{Aspect}} & \multicolumn{2}{c}{Amount of} & \multicolumn{2}{c}{\multirow{2}{*}{Amount of}} & \multicolumn{2}{c}{Increase of} & \multicolumn{1}{c}{\multirow{2}{*}{Fitness-based}}\\
   & \multicolumn{2}{c}{infeasible} & \multicolumn{2}{c}{\multirow{2}{*}{disruptiveness}} & \multicolumn{2}{c}{population} & \multicolumn{1}{c}{\multirow{2}{*}{performance}}\\
    & \multicolumn{2}{c}{components} & \multicolumn{2}{c}{} & \multicolumn{2}{c}{diversity} & \multicolumn{1}{c}{}\\
    \textbf{Sorting} & \multicolumn{2}{c}{smaller-larger}& \multicolumn{2}{c}{smaller-larger} & \multicolumn{2}{c}{larger-smaller} & \multicolumn{1}{c}{best-worst}\\
    \textbf{Algorithm} & \multicolumn{2}{c}{\texttt{DE/rand/1/*}} &  \multicolumn{2}{c}{\texttt{DE/rand/1/*}} &  \multicolumn{2}{c}{\texttt{DE/rand/1/*}} & \multicolumn{1}{c}{\texttt{(L-)SHADE}}\\
    \textbf{Type}& \multicolumn{1}{c}{theor.} & \multicolumn{1}{c}{empir.} & \multicolumn{1}{c}{theor.} &  \multicolumn{1}{c}{empir.} & \multicolumn{1}{c}{theor.} & \multicolumn{1}{c}{empir.} & \multicolumn{1}{c}{empir. } \\ 
    \textbf{Function(s)} & \multicolumn{1}{c}{flat} & \multicolumn{1}{c}{$f_0$} & \multicolumn{1}{c}{flat} &  \multicolumn{1}{c}{$f_0$} & \multicolumn{1}{c}{flat} & \multicolumn{1}{c}{$f_0$} & \multicolumn{1}{c}{BBOB} \\ 
    \textbf{Section} & \multicolumn{1}{c}{\ref{sect:ViolProb}} & \multicolumn{1}{c}{\ref{sect:POIS_f0}} & \multicolumn{1}{c}{\ref{sect:cosine}} & \multicolumn{1}{c}{\ref{sect:CSAnalysis}} & \multicolumn{1}{c}{\ref{sect:diversityTheor}} & \multicolumn{1}{c}{\ref{sect:diversity}} & \multicolumn{1}{c}{\ref{sect:BBOB_perf}}\\
    \midrule
    \textbf{1.} & \multicolumn{1}{c}{\texttt{uni}} & 
    \multicolumn{1}{c}{\texttt{COTN}} &
    \multicolumn{1}{c}{\texttt{sat}} &
    \multicolumn{1}{c}{\texttt{sat}} &
    \multicolumn{1}{c}{\texttt{sat}} &
    \multicolumn{1}{c}{\texttt{sat}} &
    \multicolumn{1}{c}{\texttt{sat}} \\
    
    \midrule
    \multirow{2}{*}{\textbf{2.}} & \multicolumn{1}{c}{\multirow{2}{*}{\texttt{sat}}} &
    \multicolumn{1}{c}{\texttt{mir}} &
    \multicolumn{1}{c}{\multirow{2}{*}{\texttt{mir}}} &
    \multicolumn{1}{c}{\texttt{COTN}} &
     \multicolumn{1}{c}{\texttt{mir}} & \multicolumn{1}{c}{\texttt{mir}} & \multicolumn{1}{c}{\texttt{COTN}} \\ 
    & & \multicolumn{1}{c}{\texttt{tor}} & \multicolumn{1}{c}{} & \multicolumn{1}{c}{\texttt{mir}} & \multicolumn{1}{c}{\texttt{tor}} & \multicolumn{1}{c}{\texttt{tor}} & \multicolumn{1}{c}{\texttt{mir}}\\ 
    \midrule
    \multirow{2}{*}{\textbf{3.}} & \multicolumn{1}{c}{\multirow{2}{*}{\texttt{}}} & \multicolumn{1}{c}{\multirow{2}{*}{\texttt{uni}}} & \multicolumn{1}{c}{\multirow{2}{*}{\texttt{tor}}} & \multicolumn{1}{c}{\multirow{2}{*}{\texttt{uni}}} & \multicolumn{1}{c}{\multirow{2}{*}{\texttt{uni}}} & \multicolumn{1}{c}{\texttt{COTN}} & \multicolumn{1}{c}{\multirow{2}{*}{\texttt{uni}}} \\ 
    & & & & & & \multicolumn{1}{c}{\texttt{uni}}\\ 
    \midrule
    \textbf{4.} & & \multicolumn{1}{c}{\texttt{sat}} & & \multicolumn{1}{c}{\texttt{tor}} & & & \multicolumn{1}{c}{\texttt{tor}} \\ 
    \bottomrule
    \end{tabular}
\end{table}

Results of the comparative analysis on SDIS presented in this paper are summarised in Table~\ref{tab:SDISdiscussion} where the investigated strategies are ranked based on what has been theoretically proved and/or experimentally observed. Despite the fact that the theoretical analysis is limited to subsets of strategies, consistency can be observed between theoretical and experimental results on how various SDISs can be grouped based on their impact on: (i) number of infeasible components; (ii) search direction; (iii) population diversity. The most significant agreement seem to be between the amount of disruptiveness and fitness based performance, suggesting that the strategies with a smaller impact on the search direction, i.e. higher cosine similarity between unconstrained and corrected search directions, have a better performance. When comparing with previously reported results, one can see that the more disruptive strategies, \texttt{uni} and \texttt{tor}, have been also identified in \citep{bib:BIEDRZYCKI2019} as having the highest influence on population distribution. Similarly, in \citep{Padhye2015} it is stated that \texttt{uni} leads to a loss of useful information carried by the current population which confirms the disruptive effect also observed in the current study. On the other hand, \texttt{sat} and \texttt{mir} identified in our study with a small impact on the search direction but a large one on diversity have been consistently amongst well performing strategies, as it is also illustrated in \citep{Kreischer2017} and \citep{Martinez2020}. However, performance advantage of less disruptive strategies on BBOB is not present on all functions equally.

Throughout this paper, we have shown that the strategy of dealing with infeasible solutions within Differential Evolution has a clear impact on the behaviour of the algorithm as a whole. While DE allows us to combine insights of the impact of SDIS from both a theoretical and empirical perspective, it is by no means the only algorithm where the way of handling box-constraints can have an impact on the overall search behaviour. This highlights an important issue in the field of evolutionary computation as a whole, as methods such as SDIS are often overlooked because they are considered unimportant compared to the novel algorithmic ideas discussed in the literature. When such omission is combined with other factors, such as inaccessibility of the source code used, this leads to a significant amount of ambiguity, even when other operators are defined clearly. As such, reproducing results requires specificity and ideally open source code for \textit{all} used operators, not only the core algorithmic components. 

In order to achieve a \textit{better standard for reproducibility}, SDIS should be considered as an operator to be specified in \textit{every} optimisation algorithm which deals with box-constraints, as the interaction of the algorithm with bounds can cause clear differences in behaviour, especially when for some of the high-dimension version of functions in the BBOB-set the state-of-the art DE variants use the boundary correction in a majority of their mutations. 
Finally, as such, we see potential benefits for the inclusion of SDIS in an \textit{automated algorithm configuration task}.

\section{Future work}\label{sec:future}

While we have investigated the impact of SDIS on several variants of differential evolution throughout this work, it remains formally unconfirmed that such findings translate to other optimisation algorithms. Since many of these algorithms are built to work with box-constraints by default, it stands to reason that the used SDIS would have an influence, and by studying this in more detail we could potentially widen our understanding of the interactions between SDIS, algorithm and objective function. 

In addition, we have started to lay a theoretical foundation for the study of SDIS by considering their disruptiveness and impact on diversity. By continuing to built upon these notions, we hope to gain a more detailed understanding on the impact of SDIS on search behaviour in general.

\appendix
\section{Proofs for statements in Sections~\ref{sect:theoretic} and ~\ref{sect:exponf0}}\label{sect:appendix_proofs}

\subsection{Amount of infeasible components. Violation probability for \texttt{sat} -- Section~\ref{sect:ViolProb}}

\begin{prop} \label{prop:ViolProb}
If $p_v(g)$ denotes the violation probability corresponding to generation $g$, then  under the assumptions that the population of elements which are within the bounds (i.e. in $(a_i,b_i)$) remains almost uniformly distributed and \texttt{sat} is applied for infeasible elements, the violation probability satisfies
\end{prop}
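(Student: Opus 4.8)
The plan is to argue componentwise and to track, over one generation, how the three parts of the population transform under \texttt{DE/rand/1} mutation followed by \texttt{sat}. Fix one coordinate and take $[a_i,b_i]=[0,1]$ without loss of generality. By hypothesis the sub-population $P_w$ of components lying in the open interval $(0,1)$ stays (almost) uniformly distributed on $(0,1)$; moreover, since the fitness is flat there is no incentive to push components toward one bound rather than the other, so the two boundary sub-populations have equal size, i.e. $|P_{lb}|/N=|P_{ub}|/N=p_v(g)/2$ and $|P_w|/N=1-p_v(g)$. A fresh mutant component is $y=x_{r_1}+F(x_{r_2}-x_{r_3})$ with $r_1,r_2,r_3$ distinct; because $N$ is large I would treat $x_{r_1},x_{r_2},x_{r_3}$ as independent draws from this three-point/uniform mixture, the error from sampling without replacement being $O(1/N)$. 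The target is then $P(y\notin[0,1])=p_v(g+1)$.

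First I would condition on the location of the base vector $x_{r_1}$. If $x_{r_1}$ lies on a bound (probability $p_v(g)$), say $x_{r_1}=1$, then $y=1+F(x_{r_2}-x_{r_3})$, which, since $F\le 1$, can violate only the upper bound, and does so precisely when $x_{r_2}>x_{r_3}$, an event of probability $\tfrac12$ up to the lower-order chance that $x_{r_2}=x_{r_3}$ with both on the same bound; the case $x_{r_1}=0$ is symmetric. This contributes the leading term $p_v(g)/2$. Conditioned instead on $x_{r_1}\in(0,1)$ (probability $1-p_v(g)$, so $x_{r_1}\sim U(0,1)$), I would split according to whether \emph{both} difference vectors sit on bounds. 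If they do (probability $p_v(g)^2$), then $F(x_{r_2}-x_{r_3})\in\{-F,0,F\}$ and a short computation over the four equally likely bound patterns of $(x_{r_2},x_{r_3})$, together with $x_{r_1}\sim U(0,1)$, gives the conditional violation probability $F/4$ entering the formula; if not both are on bounds (probability $1-p_v(g)^2$), I would approximate $x_{r_2},x_{r_3}$ by the uniform law as well, reducing to the all-uniform baseline $F/3$ of~\citep{Zaharie2017}. Adding the three contributions yields exactly the stated recurrence for $p_v(g+1)$.

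For the convergence claim I would analyse the one-dimensional map $\phi(p)=p/2+(1-p)\bigl(p^{2}F/4+(1-p^{2})F/3\bigr)$ on $[0,1]$: it is smooth and increasing (its derivative is $\tfrac12+O(F)$, hence positive and of modulus $<1$ in the admissible range of $F$), it satisfies $\phi(F/3)>F/3$ and $\phi(2F/3)\le 2F/3$, so it maps $[F/3,2F/3]$ into itself and has a unique fixed point $p^\star\in[F/3,2F/3]$. Since $p_v(1)=F/3$ lies in this interval and $\phi$ is monotone, the sequence $(p_v(g))_g$ is monotone and bounded, hence convergent, necessarily to $p^\star\in[F/3,2F/3]$, in agreement with Figure~\ref{fig:ViolProbSaturate}.

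The main obstacle is not the algebra but justifying the chain of modelling simplifications that make such a clean recurrence possible: that $P_w$ really stays close to uniform from one generation to the next (only an approximation, since the corrected components injected at the bounds perturb the distribution of the interior ones through subsequent mutations), that the dependence induced by drawing $r_1,r_2,r_3$ without replacement is negligible, and that crossover can be set aside (equivalently $p_m=1$) when passing from mutants to trial elements. Within the derivation itself, the delicate bookkeeping is separating "both difference vectors on the same bound" from "on opposite bounds" and checking that no boundary-hitting event is double counted between the base-on-bound case and the interior case; everything else is elementary.
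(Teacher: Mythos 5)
Your argument is essentially the paper's own proof: the same decomposition into $P_w$, $P_{lb}$, $P_{ub}$ with masses $1-p_v(g)$, $p_v(g)/2$, $p_v(g)/2$, the same conditioning on whether the base vector sits on a bound (giving the $p_v(g)/2$ term) and, for an interior base, on whether both difference vectors lie on bounds (giving $p_v^2(g)F/4$ versus the uniform baseline $F/3$), the only difference being that you average directly over the four bound patterns where the paper splits into same-bound and opposite-bound subcases. Your fixed-point analysis of $\phi(p)=p/2+(1-p)\bigl(p^{2}F/4+(1-p^{2})F/3\bigr)$ is actually more detailed than the paper's treatment of the convergence claim, which is relegated to a remark supported by a figure.
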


\begin{equation}p_v(g+1)=p_v(g)/2+(1-p_v(g))(p_v^2(g)F/4+(1-p_v^2(g))F/3)\end{equation}

\begin{proof}The analysis is conducted on one component, thus the component index is skipped. Let us consider that the population corresponding to generation $g$, $P(g)$, consists of three subpopulations: $P(g)=P_w(g)\cup P_{lb}(g) \cup P_{ub}(g)$, corresponding to mutants generated inside the bounding box ($P_w(g)$), placed on the lower bound ($P_{lb}(g)$) and placed on the upper bound ($P_{ub}(g)$). If $m$ denotes the number of elements in $P(g)$, the expected size of $P_w(g)$ is $m(1-p_v(g))$ and the expected size of $P_{lb}(g)$ and $P_{ub}(g)$ is $mp_v(g)/2$. To estimate the probability of generating infeasible mutants in generation $g+1$ we consider  two cases:

\begin{description}
\item{(a)} The base element, $x_{r_1}$, is on one of the bounds ($x_{r_1}\in P_{lb}(g) \cup P_{ub}(g)$). In this case the probability of generating an infeasible mutant, $x_{r_1}+F\cdot (x_{r_2}-x_{r_3})$, is $1/2$ as there are no incentives to sample more frequently positive or negative differences. On the other hand, the probability of selecting a base element from the bounds is $p_v(g)$ (under the assumption that the probability of generating through DE mutation elements on the bounds is negligible, i.e. they are generated only by applying the \texttt{sat} strategy). Thus in this case $p_v(g+1)=p_v(g)/2$.
\item{(b)} The base element $x_{r_1}$ is strictly between the bounds (event of probability $(1-p_v(g))$). In this case we can analyse three subcases: 
\begin{description}
\item{(i)} both $x_{r_2}$ and $x_{r_3}$ belong to the same bound (event of probability $p_v^2(g)/2$): in this case the mutant will be just $x_{r_1}$, thus feasible;
\item{(ii)} $x_{r_2}$ and $x_{r_3}$ belong to different bounds (event of probability $p_v^2(g)/2$): in this case the probability to generate an infeasible mutant is $F/2$;
\item{(iii)} $x_{r_2}$ and $x_{r_3}$ belong both to $P_w$ or at most one is on the bound (event of probability $(1-p_v^2(g))$): in this case, under the assumption that the population of elements belonging to $P_w$ is almost uniformly distributed, the probability of generating an infeasible mutant is $F/3$.
\end{description}
By combining the probabilities corresponding to these complementary events one obtains: 
$$p_v(g+1)=\frac{p_v(g)}{2}+(1-p_v(g))\left(p^2_v(g)\frac{F}{4}+(1-p_v^2(g))\frac{F}{3}\right)$$
\end{description}

\end{proof}

\begin{figure}[!tb]
    \centering
    \includegraphics[width=0.49\textwidth]{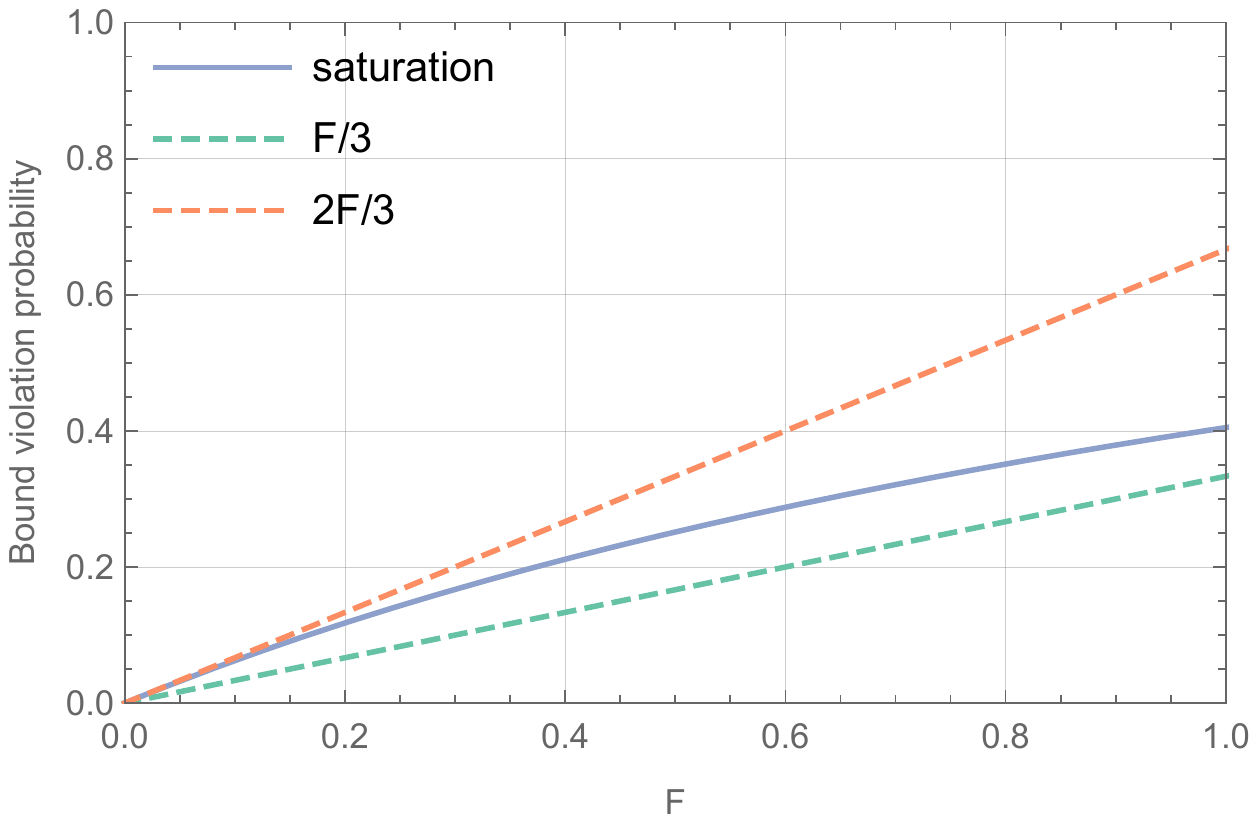}
    \includegraphics[width=0.49\textwidth]{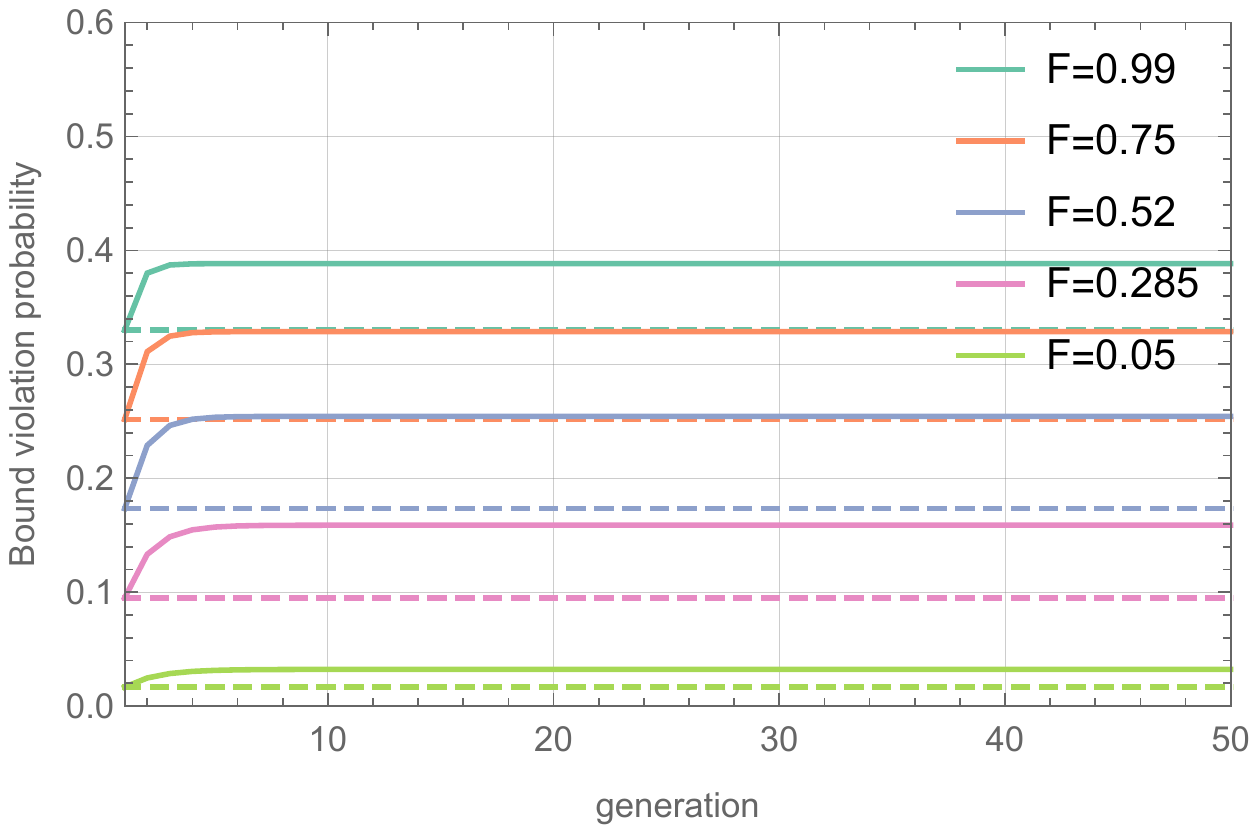}
    \caption{Probability of bounds violation in the case of \texttt{sat} (a) limit value (left); (b) evolution during generations (right).}
    \label{fig:ViolProbSaturate}
\end{figure}

\noindent{\it Remark.} It is easy to see that for $F\in [0,1]$ the sequence $p_v(g)$ converges to a value which is between $F/3$ and $2F/3$ (Fig.~\ref{fig:ViolProbSaturate}).

\subsection{(Dis)similarity between search directions: cosine similarity for search directions corresponding to \texttt{mir} and \texttt{tor} -- Section~\ref{sect:cosine}}

\begin{prop}  \label{prop:cosine}
If $0<F\leq 0.5$ and $x$ and $c_M(z)$ belong to the same quadrant, i.e. $(c_M(z_i)-(a_i+b_i)/2)( x_i-(a_i+b_i)/2)\geq 0$ for $i=\overline{1,n}$ then $\cos(d,d_M)\geq \cos(d,d_T)$.
\end{prop}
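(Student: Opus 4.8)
Since $\|d\|>0$, the claim $\cos(d,d_M)\ge\cos(d,d_T)$ is equivalent to
\[
d^Td_M\,\|d_T\|\ \ge\ d^Td_T\,\|d_M\|,
\]
so the plan is to bound the two dot products and the two norms separately and then combine. Note that $d_M-d_T$ and $d_M+d_T$ are supported only on the coordinates that are actually corrected: on an uncorrected coordinate $c_M(z_i)=c_T(z_i)=z_i$, hence $d_i=d_{M,i}=d_{T,i}$ there. Fix a corrected coordinate, write $r_i=b_i-a_i$ and $\mu_i=(a_i+b_i)/2$, and assume w.l.o.g. $z_i>b_i$ (the case $z_i<a_i$ is obtained by reflecting about $\mu_i$); then $c_M(z_i)=2b_i-z_i$, $c_T(z_i)=z_i-r_i$, and $c_M(z_i)+c_T(z_i)=a_i+b_i$.

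\emph{Step 1 ($d^Td_M\ge d^Td_T$, using $F\le 1/2$).} The difference equals $\sum_i d_i\bigl(c_M(z_i)-c_T(z_i)\bigr)$ over corrected coordinates, and $c_M(z_i)-c_T(z_i)=(3b_i-a_i)-2z_i$. An out-of-bounds \texttt{DE/rand/1} mutant component overshoots the violated bound by at most $Fr_i$, so $F\le 1/2$ gives $z_i-b_i\le r_i/2$, i.e. $z_i\le(3b_i-a_i)/2$ and $c_M(z_i)-c_T(z_i)\ge0$; since $z_i>b_i\ge x_i$ forces $d_i=z_i-x_i>0$, every summand is nonnegative (lower-bound violations are symmetric). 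This is exactly the parenthetical assertion in Section~\ref{sect:cosine}.

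\emph{Step 2 ($\|d_M\|\le\|d_T\|$, using the same-quadrant hypothesis).} From $c_M(z_i)+c_T(z_i)=a_i+b_i$,
\[
d_{M,i}^2-d_{T,i}^2=\bigl(c_M(z_i)-c_T(z_i)\bigr)\bigl(c_M(z_i)+c_T(z_i)-2x_i\bigr)=-4\,(c_M(z_i)-\mu_i)(x_i-\mu_i)\le0,
\]
the last inequality being precisely the hypothesis that $x$ and $c_M(z)$ lie in the same quadrant. Summing over the corrected coordinates gives $\|d_M\|^2\le\|d_T\|^2$.

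\emph{Step 3 (combination; the main obstacle).} Set $a=d^Td_M$, $b=d^Td_T$, $p=\|d_M\|$, $q=\|d_T\|$; Steps~1--2 give $a\ge b$ and $0<p\le q$, and the target $aq\ge bp$ rewrites as $b(q-p)+(a-b)q\ge0$. Here $(a-b)q\ge0$ by Step~1, so the case $d^Td_T\ge0$ is immediate. The remaining case $d^Td_T<0$ — the corrected direction already opposing the unconstrained one — is the crux, because $a\ge b$ together with $p\le q$ does \emph{not} in general imply $aq\ge bp$ (the bare pair $d_T=(-1,3)$, $d_M=(-1,1)$ tested against $d=(1,0)$ satisfies both yet violates the conclusion; such a configuration, however, cannot arise here, since a corrected coordinate always has $d_i\neq0$). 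To dispose of this case I would use the finer per-coordinate structure: on each corrected coordinate $\mathrm{sign}\bigl(c_M(z_i)-c_T(z_i)\bigr)=\mathrm{sign}(d_i)\neq0$ and $d_{T,i}=d_i-r_i$, $d_{M,i}=d_i-2(z_i-b_i)$ (for $z_i>b_i$; symmetrically otherwise), which tie $|d_{M,i}|$ and $|d_{T,i}|$ to $|d_i|$; feeding these into the segment $t\mapsto\cos\bigl(d,\,d_T+t(d_M-d_T)\bigr)$, $t\in[0,1]$ — whose derivative changes sign at most once — one shows the value at $t=1$ is at least that at $t=0$. I expect this last case to carry essentially all the technical weight, Steps~1 and~2 being routine algebra.
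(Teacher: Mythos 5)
Your Steps 1 and 2 are precisely the paper's own argument: the identity $c_M(z_i)+c_T(z_i)=a_i+b_i$ is used to write $d^Td_M-d^Td_T=\sum_i(z_i-x_i)(2c_M(z_i)-(a_i+b_i))$ and $\|d_T\|^2-\|d_M\|^2=\sum_i(a_i+b_i-2c_M(z_i))(a_i+b_i-2x_i)$, with $F\leq 0.5$ forcing the first sum to be nonnegative term by term and the same-quadrant hypothesis doing the same for the second. Where you depart from the paper is in Step 3, and here your observation is correct and valuable: the paper simply says ``by combining this result with the property related to the scalar products it follows that $\cos(d,d_M)\geq\cos(d,d_T)$,'' but the abstract implication ($a\geq b$ and $0<p\leq q$ imply $a/p\geq b/q$) only holds when $b=d^Td_T\geq 0$, and your numerical triple shows the bare implication fails for $b<0$. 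So you have located a genuine soft spot in the paper's own proof, not merely in yours. (Note also that the sub-case $d^Td_T<0\leq d^Td_M$ is immediate, since then $\cos(d,d_M)\geq 0>\cos(d,d_T)$; the only case left open is $d^Td_M<0$.)

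The gap is that you do not actually close that remaining case. Your plan -- parametrise $g(t)=\cos\bigl(d,\,d_T+t(d_M-d_T)\bigr)$ and use that $g'$ changes sign at most once -- rests on a true fact (writing $u=d_T+tv$, the numerator of $g'$ is $(d^Tv)\|u\|^2-(d^Tu)(u^Tv)$, whose $t^2$ terms cancel, so it is affine in $t$), but unimodality of $g$ on $[0,1]$ does not by itself yield $g(1)\geq g(0)$: if $g'$ passes from negative to positive the curve dips and the endpoint comparison is exactly what remains to be proved. The sentence ``one shows the value at $t=1$ is at least that at $t=0$'' is therefore a restatement of the goal rather than a deduction, and the per-coordinate relations $d_{T,i}=d_i-r_i$, $d_{M,i}=d_i-2(z_i-b_i)$ that you say would be ``fed in'' are never actually used to finish. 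As written, the case $d^Td_M<0$ -- which you yourself identify as carrying all the technical weight -- is not proved, so the proposal is incomplete; to repair it (and, arguably, the paper's proof as well) you would need to carry out that computation explicitly or find a different argument for the negative-cosine regime.
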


\begin{proof} Since $c_M(z_i)+c_T(z_i)=a_i+b_i$ it follows that $d^Td_M-d^Td_T=\sum_{i=1}^n(z_i-x_i)(2c_M(z_i)-(a_i+b_i))$. If $F\leq 0.5$ then an infeasible component satisfies either $a_i-(b_i-a_i)/2\leq z_i<a_i$ or $b_i<z_i\leq b_i+(b_i-a_i)/2$. 

Thus, when $a_i-(b_i-a_i)/2\leq z_i< a_i\leq x_i$ then $c_M(z_i)=2a_i-z_i\leq (a_i+b_i)/2$ meaning that $(z_i-x_i)(2c_M(z_i)-(a_i+b_i))\geq 0$.

In the other case, e.g. $x_i\leq b_i<z_i\leq b_i+(b_i-a_i)/2$ one have $c_M(z_i)=2b_i-z_i\geq (a_i+b_i)/2$ leading to $(z_i-x_i)(2c_M(z_i)-(a_i+b_i))\geq 0$. Thus, if $F\leq 0.5$ the scalar product between the DE search direction and the search direction corresponding to \texttt{mir} is larger than that corresponding to \texttt{tor} strategy, i.e. $d^Td_M\geq d^Td_T$. 

On the other hand, when comparing the Euclidean norms of $d_M$ and $d_T$ one obtains: 
$$\|d_T\|^2-\|d_M\|^2=\sum_{j=1}^n[(c_M(z_i)-x_i)^2-(a_i+b_i-c_M(z_i)-x_i)^2]=
$$
\begin{equation}
=\sum_{i=1}^n[(a_i+b_i-2c_M(z_i))(a_i+b_i-2x_i)]
\end{equation}
Thus, if $c_M(z_i)-(a_i+b_i)/2$ and $x_i-(a_i+b_i)/2$ have the same sign, meaning that $c_M(z)$ and $x$ belong to the same quadrant with respect to $(a_i+b_i)/2$, it follows that $\|d_T\|^2\geq \|d_M\|^2$. By combining this result with the property related to the scalar products it follows that $\cos(d,d_M)\geq \cos(d,d_T)$ always when $F\leq 0.5$ and the corrected and the target elements are in the same quadrant.
\end{proof}

\subsection{(Dis)similarity between search directions: cosine similarity analysis for one infeasible component -- Section~\ref{sect:cosine}}
 
\begin{prop} \label{prop:cosineOneComponent}
If there is only one infeasible component, e.g. $z_k>b_k$, and the norm of the search direction satisfies $\|d\|^2=\sum_{i=1}^n(z_i-x_i)^2\geq 2(z_k-x_k)(z_k-b_k)$ then the cosine similarity between the DE search direction and the direction induced by \texttt{sat} is larger than the cosine similarity between the DE search direction and the direction induced by any other SDIS which generate components inside $(a_k,b_k)$.
\end{prop}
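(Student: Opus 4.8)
The plan is to collapse the $n$\nobreakdash-dimensional comparison to a one\nobreakdash-dimensional one by isolating the single infeasible coordinate. Set $p := z_k - x_k$, which is strictly positive because $z_k > b_k \geq x_k$ (the target is feasible), and $S := \sum_{i \neq k}(z_i - x_i)^2 \geq 0$, so $\|d\|^2 = S + p^2$. Since the SDISs in question are component\nobreakdash-wise and only coordinate $k$ of $z$ is infeasible, every corrected direction differs from $d=z-x$ in the $k$-th entry only: for \texttt{sat} that entry is $q_S := b_k - x_k$, and for any competitor mapping $z_k$ into the open interval $(a_k,b_k)$ it is some $q := \mathrm{SDIS}(z_k) - x_k$. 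The decisive order relations are $q < q_S$ (because $\mathrm{SDIS}(z_k) < b_k$), $q_S \geq 0$ (because $x_k \geq a_k$, in fact $x_k\le b_k$), and $q_S < p$ (because $b_k < z_k$); hence $q < q_S < p$. Introducing the scalar function $h$ below, the two cosines are $h(q_S)/\sqrt{S+p^2}$ and $h(q)/\sqrt{S+p^2}$, so the claim reduces to $h(q_S) \geq h(q)$, where
\begin{equation}
h(t) = \frac{S + p\,t}{\sqrt{S + t^{2}}}, \qquad \cos(d,\,d_{\mathrm{SDIS}}) = \frac{h(t)}{\sqrt{S + p^{2}}}, \qquad h'(t) = \frac{S\,(p - t)}{(S + t^{2})^{3/2}}.
\end{equation}

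From the formula for $h'$ one reads off that $h$ is nondecreasing on $(-\infty,p]$ (strictly increasing when $S>0$), and since $q < q_S < p$ this immediately gives $h(q) \leq h(q_S)$, i.e. the desired inequality, the only exception being the degenerate configuration $S = q_S = 0$ in which $d_{sat}=0$ and the cosine is undefined. To see where the stated norm hypothesis $\|d\|^2 \geq 2(z_k-x_k)(z_k-b_k)$ comes from, I would instead argue by cross\nobreakdash-multiplication: $h(q_S)-h(q)$ has the sign of $(S+p q_S)\sqrt{S+q^2}-(S+pq)\sqrt{S+q_S^2}$; as $S+pq_S\geq 0$ always, the only nontrivial case is $S+pq\geq 0$, where squaring is legitimate and a short factorisation yields $S\,(q_S-q)\big[(q_S+q)(p^2-S)+2p(S-q q_S)\big]$. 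Writing $\delta := z_k-b_k = p-q_S>0$, the bracket is linear in $q$ with slope $2p\delta-\|d\|^2$, which is $\leq 0$ exactly under the hypothesis; then the bracket is minimised over $q\leq q_S$ at $q=q_S$, where it equals $2\delta(S+pq_S)\geq 0$, so the whole expression is $\geq 0$.

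The main obstacle is bookkeeping of signs rather than any deep idea: in the computational route one must first dispose of the obtuse\nobreakdash-angle case $S+pq<0$ (handled directly, since then $\cos(d,d_{other})<0\leq\cos(d,d_{sat})$) before squaring, and one must track the degenerate cases ($d_{sat}=0$, or $S=0$) in which a cosine is not defined; the monotonicity route above avoids most of this, while the squaring route is what makes the norm condition on $\|d\|^2$ surface naturally and pins down exactly where it is used.
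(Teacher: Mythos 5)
Your proof is correct, and it takes a genuinely different (and in one respect stronger) route than the paper's. Both arguments start from the same reduction: since only the $k$-th coordinate is corrected, the comparison collapses to the one-dimensional quantities $p=z_k-x_k$, $q_S=b_k-x_k$, $q=\mathrm{SDIS}(z_k)-x_k$ with $q<q_S<p$ and $q_S\geq 0$, and to the common factor $S=\sum_{i\neq k}(z_i-x_i)^2$. From there the paper works with \emph{squared} cosines: it forms the rational function $C(D,\delta_c,\delta)=(D+\delta(\delta_c-\delta))^2/(D+\delta_c^2-\delta^2)$, hands the inequality $C(D,\delta_S,\delta)\geq C(D,\delta_C,\delta)$ together with the side constraints to Mathematica's \texttt{Reduce}, reads off the norm condition as a sufficient hypothesis in the case $x_k>b_k-(z_k-x_k)$, and only at the end converts $\cos^2$ back to $\cos$ via $\cos(d,d_S)\geq 0$. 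Your primary route never squares: the scalar function $h(t)=(S+pt)/\sqrt{S+t^2}$ with $h'(t)=S(p-t)/(S+t^2)^{3/2}$ is nondecreasing on $(-\infty,p]$, and $q<q_S<p$ immediately gives $\cos(d,d_C)\leq\cos(d,d_S)$. This is shorter, fully checkable by hand, and it actually establishes the conclusion \emph{without} the hypothesis $\|d\|^2\geq 2(z_k-x_k)(z_k-b_k)$ (away from the degenerate configurations where a corrected direction vanishes and the cosine is undefined, which you rightly flag); the hypothesis is only forced on you if you compare squares, because a strongly negative competitor cosine can have the larger square even though the unsquared inequality holds trivially. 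Your secondary cross-multiplication route -- disposing of $S+pq<0$ first, then factoring the squared difference as $S(q_S-q)\bigl[(q_S+q)(p^2-S)+2p(S-qq_S)\bigr]$ and bounding the bracket via its linearity in $q$ -- reproduces the paper's case analysis by hand and pinpoints exactly where the norm condition enters, which is where the paper instead invokes a CAS. In short: the paper's proof is mechanical and CAS-dependent; yours is self-contained and, via the monotonicity argument, slightly strengthens the proposition.
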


\begin{proof} If $d_C$ denotes the search direction induced by a SDIS and $z_k$ is the infeasible component, then the cosine between the DE direction, $d$, and the modified one is:

$$
\cos(d,d_C)=\frac{\|d\|^2+(z_k-x_k)(c(z_k)-x_k)-(z_k-x_k)^2}{\|d\|\sqrt{\|d^2\|+(c(z_k)-x_k)^2-(z_k-x_k)^2}}
$$
\begin{equation} \label{eq:proofCos}
\qquad =\frac{\|d\|^2+(c(z_k)-z_k)(z_k-x_k)}{\|d\|\sqrt{\|d^2\|+(c(z_k)-x_k)^2-(z_k-x_k)^2}}
\end{equation}

To compare $\cos(d,d_C)$ for different SDISs, let us define the function:
\begin{equation} \label{eq:proofC}
C(D,\delta_c,\delta)=\frac{(D+\delta(\delta_c-\delta))^2}{D+\delta_c^2-\delta^2}
\end{equation}

From Eqs.~\ref{eq:proofCos} and~\ref{eq:proofC} it follows that $C(\|d\|^2,c(z_k)-x_k,z_k-x_k)/\|d\|^2=\cos^2(d,d_C)$. Thus to compare $\cos(d,d_S)$ and $\cos(d,d_C)$ it is enough to compare $C(\|d\|^2,c_S(z_k)-x_k,z_k-x_k)$ with $C(\|d\|^2,c(z_k)-x_k,z_k-x_k)$. Let us consider the case when $z_k>b_k$, thus $c_S(z_k)=b_k$. To find sufficient conditions ensuring that  $\cos(d,d_S)\geq \cos(d,d_C)$ one can solve the inequality 
\begin{equation} \label{eq:Ineq}
C(D,\delta_S,\delta)-C(D,\delta_C,\delta)\geq 0
\end{equation}
with respect to $\delta_C$ taking into account the fact that the following conditions are always satisfied: 
\begin{description}
\item{(i)} $D\geq \delta^2$, i.e. $\|d\|^2$ is larger or at least equal with the term corresponding to the infeasible component $\delta^2=(z_k-x_k)^2$;
\item{(ii)} $\delta_S>\delta_C$ (since $c(z_k)<b_k$ it follows that $\delta_S=b_k-x_k>c(z_k)-x_k=\delta_C$);
\item{(iii)} $\delta_S<\delta$ (if $z_k>b_k$ then $\delta_S=b_k-x_k<z_k-x_k=\delta$);
\item{(iv)} $\delta_S\geq 0$ (since $x_k\in [a_k,b_k]$ it follows that $\delta_S=b_k-x_k\geq 0$).
\end{description}
By using \texttt{Reduce} function from \texttt{Wolfram Mathematica 12} to solve the inequality~\ref{eq:Ineq} it follows that it is satisfied at least under the following conditions (depending on the position of the component $x_k$ of the target element):

\begin{description}
\item{(i)} if $a_k<x_k\leq b_k-(z_k-x_k)$ then $\cos^2(d,d_S)\geq \cos^2(d,d_C)$;
\item{(ii)} if $b_k-(z_k-x_k)< x_k\leq b_k$ and if $\|d\|^2\geq 2\delta (\delta-\delta_S)=2(z_k-x_k)(z_k-b_k)$ then $\cos^2(d,d_S)\geq \cos^2(d,d_C)$;
\end{description}

Since $\cos(d,d_S)\geq 0$ it follows that $\cos^2(d,d_S)\geq \cos^2(d,d_C)$ implies $\cos(d,d_S)\geq \cos(d,d_C)$. 
A similar result can be obtained when the lower bound is violated, i.e. $z_k<a_k$.
\end{proof}

\subsection{Influence of the SDIS on the population diversity -- Section~\ref{sect:diversityTheor}}

\begin{prop} \label{prop:diversity}
If the current population is almost uniformly distributed on $[0,1]$ the variance of the elements corrected by applying \texttt{mir} is $$var[c_M(Z)]=\frac{F^2}{10}-\frac{F}{4}+\frac{1}{4}.$$
\end{prop}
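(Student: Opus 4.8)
The plan is to work under the stated assumption by treating the three parents of a \texttt{DE/rand/1} mutation as independent $\mathcal{U}[0,1]$ variates, so that a single mutant component is $z = x_{r_1} + F(x_{r_2}-x_{r_3})$, and then to compute the variance of $c_M(z)$ \emph{conditioned on the event} $\{z\notin[0,1]\}$ --- these are exactly the elements corrected by \texttt{mir}. First I would record that $x_{r_2}-x_{r_3}$ has the triangular density $1-|t|$ on $[-1,1]$, hence $F(x_{r_2}-x_{r_3})$ has density $(F-|d|)/F^2$ on $[-F,F]$; convolving with the uniform density of $x_{r_1}$, and using $F<1$ (so that on the upper tail the convolution range reduces to $[z-1,F]$), gives the density of $z$ on $[1,1+F]$ as $g(z)=(1+F-z)^2/(2F^2)$. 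Integrating $g$ over $[1,1+F]$ yields $\Pr(z>1)=F/6$, consistent with $p_v=F/3$ from \citep{Zaharie2017}; this is the normalising constant.

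The second ingredient is a symmetry argument. The substitution that replaces each parent $x_{r_j}$ by $1-x_{r_j}$ maps $\{z>1\}$ bijectively onto $\{z<0\}$ and sends $c_M(z)$ to $1-c_M(z')$ for the transformed point $z'$; therefore the conditional law of $c_M(z)$ given $\{z\notin[0,1]\}$ is symmetric about $1/2$, so its conditional mean is $1/2$ and the sought variance is the conditional second moment of $c_M(z)-1/2$. Since the two tails contribute equally, this equals $\frac{3}{F^3}\int_0^F (1/2-s)^2(F-s)^2\,ds$, obtained by substituting $s=z-1$ on the upper tail, where $c_M(z)=2-z$ and $g(z)=(F-s)^2/(2F^2)$.

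The final step is the polynomial integral: setting $u=F-s$ and $a=F-\tfrac12$ gives $\int_0^F (1/2-s)^2(F-s)^2\,ds=\int_0^F (u-a)^2u^2\,du=\frac{F^5}{5}-\frac{aF^4}{2}+\frac{a^2F^3}{3}$, so the variance equals $\frac{3F^2}{5}-\frac{3aF}{2}+a^2$; expanding $a=F-\tfrac12$ and collecting like terms collapses this to $\frac{F^2}{10}-\frac{F}{4}+\frac14$. I would double-check by repeating the computation directly on the lower tail $\{z<0\}$, where $c_M(z)=-z$, which must return the same value.

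I do not foresee a deep difficulty; once the set-up is correct the computation is elementary. The points that need care are: (i) the conditioning --- dividing by $p_v$ rather than treating $c_M(z)$ as unconditional; (ii) justifying that ``almost uniformly distributed'' licenses the independent-uniform model for the three parents, the error being the same approximation already invoked for $p_v\approx F/3$; and (iii) the reliance on $F<1$ for the clean closed form of $g$ on $[1,1+F]$ --- for $F\ge 1$ the tail density is piecewise of a different shape and the statement would need to be re-derived.
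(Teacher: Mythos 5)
Your proof is correct and follows essentially the same route as the paper's: both rest on the tail density $(1+F-z)^2/(2F^2)$ of the infeasible mutant components (which you derive by convolution and the paper imports from Ali and Fatti, 2006), both treat the corrected population as an equal-weight mixture of the two mirrored tails, and both finish with the same elementary polynomial integration. The only packaging difference is that you exploit the parent-flip involution to pin the conditional mean at $1/2$ and compute a single second moment, whereas the paper computes the mean and variance of each tail separately and then applies the mixture-variance formula; the arithmetic content is identical.
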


\begin{proof}
According to \citep{Ali2006}, the infeasible elements obtained by applying a DE/rand/1 (with $F\in [0.5,1]$) mutation on a uniformly distributed scalar population follows distributions given by:

\begin{equation}\label{eq:distrLB}
    f_{Z_{lb}}(z)=\frac{1}{F}\int_{0}^{z+F}\left(1-\frac{x-z}{F}\right)dx=\frac{(F+z)^2}{2F^2} \quad (-F\leq z<0)
\end{equation}
and
\begin{equation}\label{eq:distrUB}
    f_{Z_{ub}}(z)=\frac{1}{F}\int_{z-F}^{1}\left(1-\frac{z-x}{F}\right)dx=\frac{(1+F-z)^2}{2F^2} \quad (1< z\leq 1+F)
\end{equation}
The variance and the mean of these distributions are  $\hbox{var}[Z_{lb}]=\hbox{var}[Z_{ub}]=3F^2/80$ and $\mathbb{E}[Z_{lb}]=-F/4$, $\mathbb{E}[Z_{ub}]=1+F/4$, respectively.

Using $Z^{M}_{lb}=-Z_{lb}$ and $Z^{M}_{ub}=1-Z_{ub}$ to denote the random variables corresponding to the elements corrected by \texttt{mirror} it follows that $\mathbb{E}[Z^{M}_{lb}]=F/4$ and $\mathbb{E}[Z^{M}_{ub}]=1-F/4$. Since mirroring does not change the variance of the random variable it follows that $\hbox{var}[Z^{M}_{lb}]=\hbox{var}[Z^{M}_{ub}]=3F^2/80$.

The random variable, $c_M(Z)$, corresponding to the population of corrected elements can be interpreted as a mixture of the variables $Z^{M}_{lb}$ and $Z^{M}_{ub}$ with mixing weights $w_{lb}=w_{ub}=1/2$, as in the absence of a selection pressure there it should be no difference between the probabilities of violating the lower and the upper bound. The variance of the mixture satisfies Eq.~\ref{eq:varMixture}.

\begin{eqnarray}\label{eq:varMixture}
\hbox{var}[c_M(Z)] &= &w_{lb}\hbox{var}[Z^M_{lb}] +  w_{ub}\hbox{var}[Z^M_{ub}]+w_{lb}(\mathbb{E}[Z^M_{lb}])^2+w_{ub}(\mathbb{E}[Z^M_{ub}])^2- \nonumber \\
 & & \nonumber \\
 & & (w_{lb}\mathbb{E}[Z^M_{lb}]+w_{ub}\mathbb{E}[Z^M_{ub}])^2
\end{eqnarray}

By replacing in Eq.~\ref{eq:varMixture} the mixing weights $w_{lb}$ and $w_{ub}$ with $1/2$, the variance and the mean of $Z^M_{lb}$ and $Z^M_{lb}$ with the above mentioned values, one obtains that 
\begin{eqnarray}
\hbox{var}[c_M(Z)]&=&\frac{1}{2}\left(\frac{3F^2}{40}+\left(\frac{F}{4}\right)^2+\left(1-\frac{F}{4}\right)^2 \right)-\frac{1}{4}\left(\frac{F}{4}+1-\frac{F}{4}\right)^2 \nonumber\\
 & &\nonumber\\
 &=& \frac{3F^2}{40}+\frac{F^2}{16}-\frac{F}{4}+\frac{1}{4}\nonumber\\
 & &\nonumber\\
 &=& \frac{F^2}{10}-\frac{F}{4}+\frac{1}{4}
\end{eqnarray}

\end{proof}

\begin{figure}[ht!]
    \centering
    \includegraphics[width=0.75\textwidth]{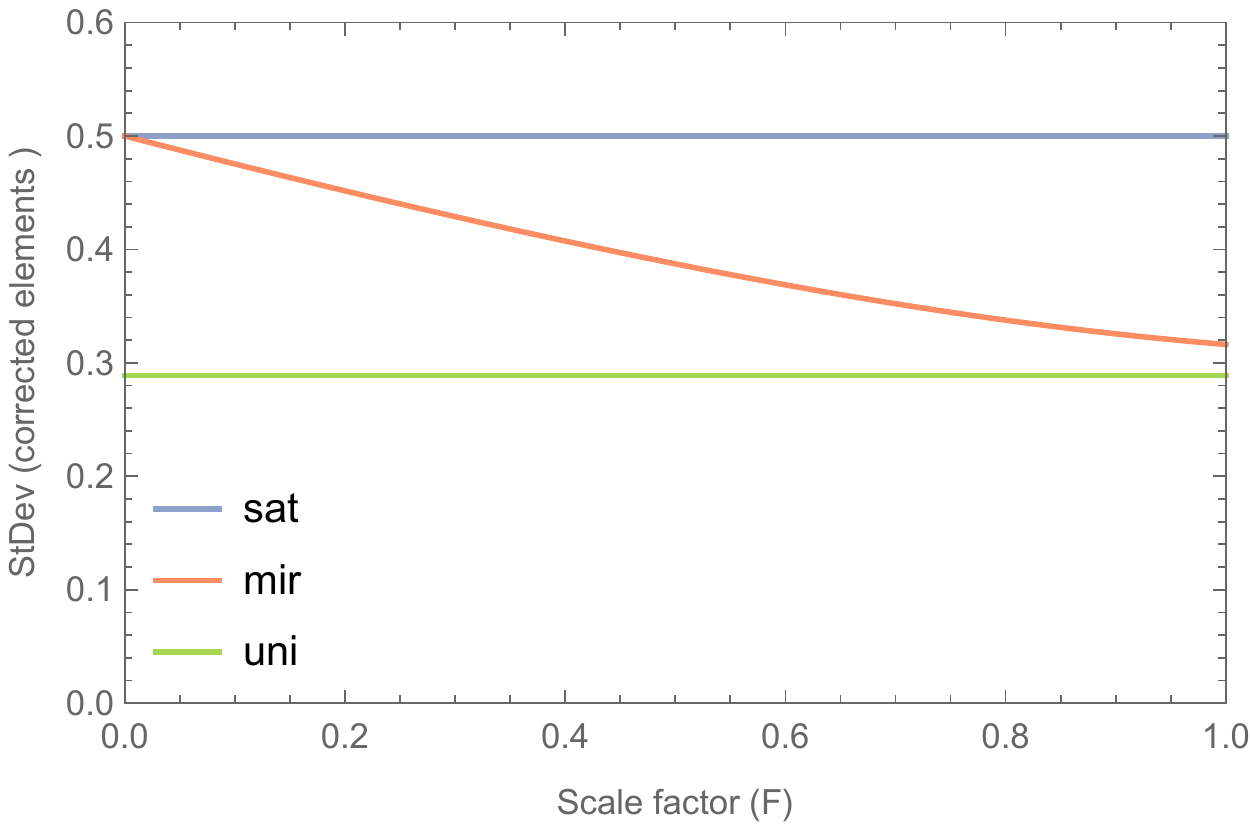}
    \caption{Standard deviation of the population of corrected elements (\texttt{sat}, \texttt{mir},\texttt{uni}) under the assumption that the current population is almost uniformly distributed on $[0,1]$.}
    \label{fig:VarianceCorrection}
\end{figure}

\begin{figure}[ht!]
    \centering
    \includegraphics[width=\textwidth]{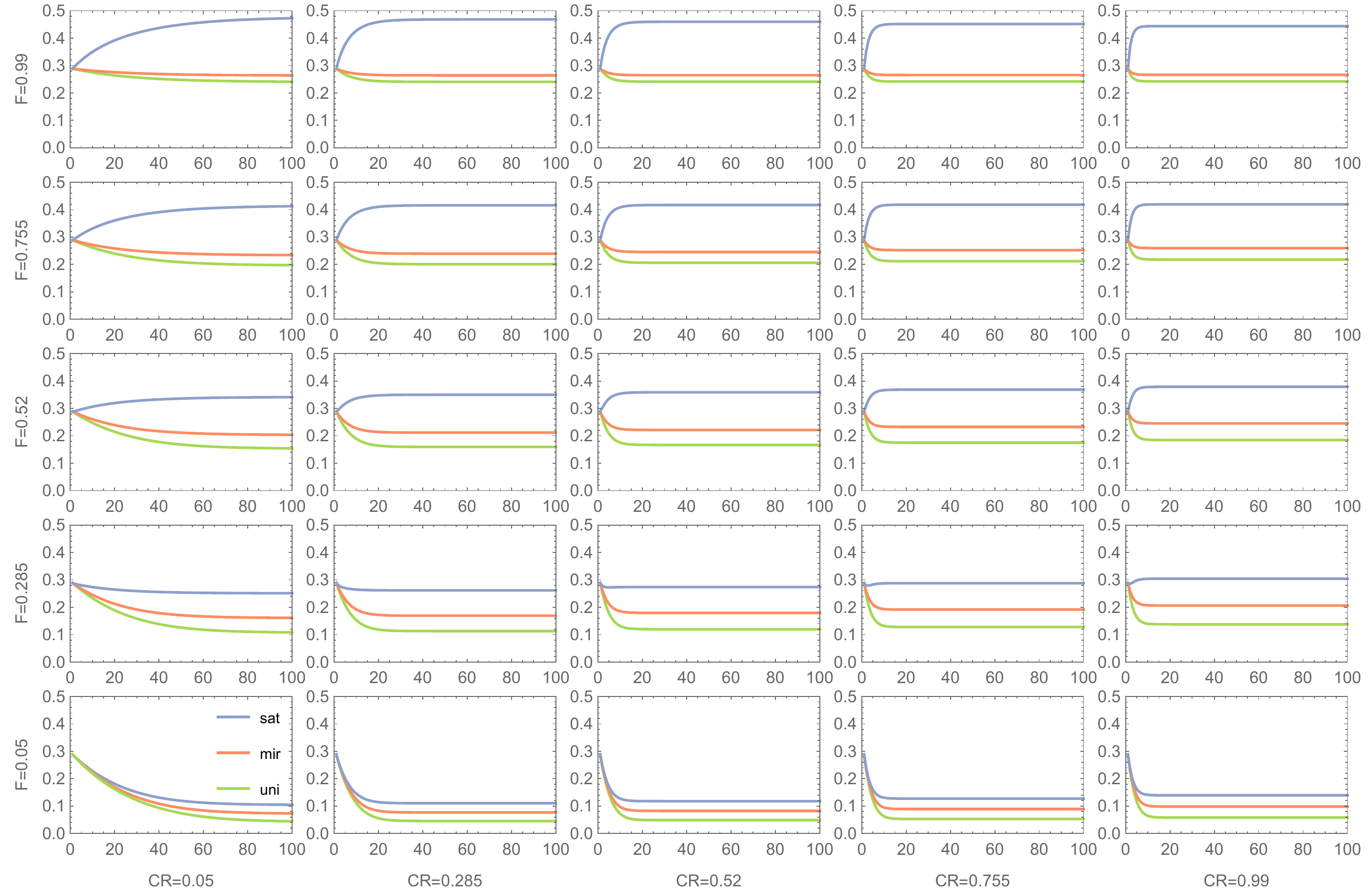}
    \caption{Influence of SDIS (\texttt{sat}, \texttt{mir}, \texttt{uni}) and of $C_r$ and $F$ on the evolution of the expected standard deviation of the population (\texttt{DE/rand/1/bin}, population size of $100$, $n=30$). 
    }
    \label{fig:TheoreticalDiversity}
\end{figure}

\subsection{Analysis of cosine similarity distribution -- Section~\ref{sect:CSAnalysis}}
\begin{prop}\label{prop:ECDF}
If $X$ and $Y$ are two independent random variables such that their cumulative distribution functions, $F_X$ and $F_Y$, satisfy $F_X(x)\geq F_Y(x)$ then the probability that $X$ is smaller than $Y$ is larger than $0.5$, i.e. it is more likely that $X$ is larger than $Y$ than the other case.
\end{prop}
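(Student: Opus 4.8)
The plan is to reduce the statement to the inequality $P(X<Y)\ge P(X>Y)$ and to prove the latter by combining independence with a classical Stieltjes identity. First I would condition on $Y$ and use independence together with Fubini's theorem to write both quantities as integrals of the marginal $F_X$ against the law $dF_Y$:
\begin{equation*}
P(X<Y)=\int_{\mathbb{R}}F_X(y^-)\,dF_Y(y)=\mathbb{E}[F_X(Y^-)],\qquad
P(X>Y)=\int_{\mathbb{R}}\bigl(1-F_X(y)\bigr)\,dF_Y(y)=\mathbb{E}[1-F_X(Y)],
\end{equation*}
where $F_X(y^-)=\lim_{t\uparrow y}F_X(t)=P(X<y)$ denotes the left limit. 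Subtracting the two expressions gives $P(X<Y)-P(X>Y)=\mathbb{E}[F_X(Y^-)+F_X(Y)]-1$.

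The next step is to bring in the hypothesis $F_X\ge F_Y$, which, since taking left limits preserves inequalities, also yields $F_X(\cdot^-)\ge F_Y(\cdot^-)$; hence $\mathbb{E}[F_X(Y^-)+F_X(Y)]\ge\mathbb{E}[F_Y(Y^-)+F_Y(Y)]$. The crux is then the identity $\mathbb{E}[F_Y(Y^-)+F_Y(Y)]=1$. I would establish it by Lebesgue--Stieltjes integration by parts for the bounded-variation function $G:=F_Y$: from $d(G^2)=(G+G(\cdot^-))\,dG$ one gets $\int_{\mathbb{R}}\bigl(G(y)+G(y^-)\bigr)\,dG(y)=\bigl[G^2\bigr]_{-\infty}^{+\infty}=1-0=1$, and the left-hand side is exactly $\mathbb{E}[F_Y(Y)]+\mathbb{E}[F_Y(Y^-)]$. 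Putting the pieces together, $P(X<Y)-P(X>Y)\ge 1-1=0$, so $P(X<Y)\ge P(X>Y)$. Because $P(X<Y)+P(X=Y)+P(X>Y)=1$, this is equivalent to $P(X<Y)\ge\tfrac12\bigl(1-P(X=Y)\bigr)$, which equals $\tfrac12$ whenever $X$ and $Y$ share no atom --- the relevant case here, since the proposition is applied to the continuously distributed cosine--similarity values, for which $P(X=Y)=0$ --- and is strict as soon as $F_X>F_Y$ on a set of positive $F_Y$-measure.

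I expect the only delicate point to be the systematic distinction between $P(X<y)=F_X(y^-)$ and $P(X\le y)=F_X(y)$, and the fact that the identity $\int(G+G(\cdot^-))\,dG=1$ must be read in the Stieltjes sense so that it remains valid when $G$ has jumps (for continuous marginals it collapses to $\int 2G\,dG=1$). As an alternative that sidesteps the identity, one can instead condition on $X$ to obtain $P(X>Y)=\mathbb{E}[F_Y(X^-)]$ and apply the standard characterisation of stochastic dominance (if $F_X\ge F_Y$ then $\mathbb{E}[g(X)]\le\mathbb{E}[g(Y)]$ for every bounded nondecreasing $g$) with the nondecreasing choice $g(t)=F_X(t^-)$, which chains into $P(X<Y)=\mathbb{E}[F_X(Y^-)]\ge\mathbb{E}[F_X(X^-)]\ge\mathbb{E}[F_Y(X^-)]=P(X>Y)$; in the write-up I would keep whichever of the two arguments is shorter to typeset.
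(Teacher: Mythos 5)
Your proof is correct, and at its core it is the same argument as the paper's: condition on $Y$, use $F_X\ge F_Y$ to replace $F_X$ by $F_Y$ under the integral, and evaluate the self-integral of the CDF ($\int F_Y\,dF_Y=\tfrac12$ in the continuous case, $\int(F_Y+F_Y(\cdot^-))\,dF_Y=1$ in your Stieltjes formulation). The differences are nevertheless worth recording. First, the paper works with $Z=X-Y$ and densities $f_X,f_Y$ throughout, so its proof implicitly assumes absolutely continuous laws; your version, via $F_X(y^-)$ versus $F_X(y)$ and the bounded-variation chain rule $d(G^2)=(G+G(\cdot^-))\,dG$, covers arbitrary distributions and makes explicit that the clean bound $P(X<Y)\ge\tfrac12$ requires $P(X=Y)=0$, while $P(X\le Y)\ge\tfrac12$ (which is what the paper's $F_Z(0)$ actually is) holds unconditionally. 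Second, the paper justifies the key inequality with the intermediate claim that $F_X\ge F_Y$ implies $f_X\ge f_Y$ pointwise; that claim is false as stated (integrating $f_X-f_Y\ge 0$ over $\mathbb{R}$ would force $f_X=f_Y$ a.e.), although the step where it is invoked only needs $\int_{-\infty}^{y}f_X\ge\int_{-\infty}^{y}f_Y$, i.e.\ exactly the hypothesis $F_X(y)\ge F_Y(y)$. Your argument uses the hypothesis directly and so sidesteps this flaw; your alternative route via stochastic dominance with $g(t)=F_X(t^-)$ is likewise valid. In short, what your write-up buys is generality and a repaired justification; what the paper's buys is brevity under a density assumption.
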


\begin{proof}
The cumulative distribution function of $Z=X-Y$ satisfies:
\begin{equation}
F_Z(z)=\int_{-\infty}^{\infty}\int_{-\infty}^{y+z}f_{XY}(x,y)dxdy=\int_{-\infty}^{\infty}\int_{-\infty}^{y+z}f_X(x)f_Y(y)dxdy 
\end{equation}
Since $F_X(x)\geq F_Y(x)$ it follows that the probability density functions, $f_X$ and $f_Y$ satisfies the same property, i.e. $f_X(x)\geq f_Y(x)$. On the other hand, the probability that $X$ is smaller than $Y$ is $P(Z\leq 0)=F_Z(0)$ which satisfies:

\begin{eqnarray}
F_Z(0) &=&\int_{-\infty}^{\infty}\int_{-\infty}^{y}f_X(x)f_Y(y)dxdy \nonumber \\
       &\geq&\int_{-\infty}^{\infty}\int_{-\infty}^{y}f_Y(x)f_Y(y)dxdy \nonumber \\
       &=&\int_{-\infty}^{\infty}f_Y(y)\left(\int_{-\infty}^{y}f_Y(x)dx\right)dy \nonumber \\
       &=& \int_{-\infty}^{\infty}f_Y(y)F_Y(y)dy=\int_{-\infty}^{\infty}F^{\prime}_Y(y)F_Y(y)dy \nonumber\\
       &=& \frac{1}{2} \int_{-\infty}^{\infty} (F^2_Y(y))^{\prime}dy=\frac{1}{2}.
\end{eqnarray}

Thus $P(X\leq Y)\geq 0.5$. 

\end{proof}

\end{document}